\documentclass[11pt]{article}
\usepackage[a4paper,left=0.95in,right=0.95in,top=1in,bottom=1in]{geometry}
\usepackage{microtype}
\usepackage{subcaption} 
\usepackage{graphicx}
\usepackage{booktabs} % for professional tables
\usepackage{hyperref}
\usepackage{cuted}
\usepackage{amsmath}
\usepackage{amssymb}
\usepackage{mathtools}
\usepackage{amsthm}
\usepackage{braket}
\usepackage{bm}
\usepackage{xcolor}
\usepackage[numbers,sort&compress]{natbib}
\usepackage[capitalize,noabbrev]{cleveref}
\usepackage{enumerate}
\usepackage{enumitem}
\usepackage{thmtools} % for re-stating propositions / theorems / etc.
\usepackage{thm-restate}
\theoremstyle{plain}
\newtheorem{theorem}{Theorem}[section]
\newtheorem{proposition}[theorem]{Proposition}
\newtheorem{lemma}[theorem]{Lemma}
\newtheorem{corollary}[theorem]{Corollary}
\theoremstyle{definition}
\newtheorem{definition}[theorem]{Definition}
\newtheorem{assumption}[theorem]{Assumption}
\theoremstyle{remark}
\newtheorem{remark}[theorem]{Remark}
\newcommand{\bR}{\mathbb{R}}
\newcommand{\bE}{\mathbb{E}}
\newcommand{\bN}{\mathbb{N}}
\newcommand{\cD}{\mathcal{D}}
\newcommand{\cG}{\mathcal{G}}
\newcommand{\cH}{\mathcal{H}}
\newcommand{\cN}{\mathcal{N}}
\newcommand{\cX}{\mathcal{X}}
\newcommand{\cY}{\mathcal{Y}}
\newcommand{\Cov}{\Sigma} 
    
\DeclareMathOperator{\Softmax}{Softmax}
\DeclareMathOperator{\Spec}{Spec}

\DeclareMathOperator{\LayerNorm}{LN}
\DeclareMathOperator{\mtc}{MC}
\DeclareMathOperator{\mlp}{MLP}

\DeclareMathOperator{\emb}{emb}

\DeclareMathOperator{\dist}{dist}
\DeclareMathOperator{\diag}{diag}

\DeclareMathOperator{\flops}{FLOPs}

\DeclareMathOperator{\Conf}{Conf}
\DeclareMathOperator*{\argmin}{argmin}

\newcommand{\abs}[1]{\left\lvert#1\right\rvert}
\newcommand{\norm}[1]{\left\lVert#1\right\rVert}
\newcommand{\dmod}{d_{\text{model}}}

\newcommand{\bfx}{\bm{x}}
\newcommand{\bfz}{\bm{z}}
\DeclareRobustCommand{\HronAttentionCitationI}{\citep[Theorem~3(I)]{Hron2020}}
\DeclareRobustCommand{\HronAttentionCitationII}{\citep[Theorem~3(II)]{Hron2020}}
\definecolor{orange}{rgb}{1,0.5,0}
\definecolor{deepgreen}{RGB}{0,168,107}
\title{Algorithmic Task Capture, Computational Complexity, and Inductive Bias of Infinite Transformers}
\date{}
\begin{document}
\maketitle
\author{
\makebox[\textwidth][c]
{\begin{tabular}{cc}
    Orit Davidovich & Zohar Ringel \\
    IBM Research, & Racah Institute of Physics, \\
    Haifa, Israel & Hebrew University, Jerusalem, Israel \\
    \texttt{orit.davidovich@ibm.com} & \texttt{zohar.ringel@mail.huji.ac.il}
\end{tabular}
}}
%%%%%%%%%%%%%%%%%%%%%%%%%%%%%%%%
% ABSTRACT
%%%%%%%%%%%%%%%%%%%%%%%%%%%%%%%%
\begin{abstract}
We formally define algorithmic capture of combinatorial tasks as the ability of a transformer to extrapolate to arbitrary task sizes with controllable error and logarithmic sample adaptation, providing a sharp scaling criterion for distinguishing logic internalization from statistical interpolation. Empirically, across scaling ranges spanning up to 2.5 orders of magnitude, we observe evidence of capture and non-capture. By analyzing infinite-width transformers in both the lazy and rich regimes, we derive upper bounds on the inference-time computational complexity of the combinatorial tasks these networks can capture. We show that, despite their universal expressivity, transformers possess an inductive bias that disfavors higher-complexity algorithmic procedures within the efficient polynomial-time heuristic scheme class, consistent with successful capture on simpler combinatorial tasks such as induction heads, sort, and string matching.
\end{abstract}
%%%%%%%%%%%%%%%%%%%%%%%%%%%%%%%%
% KEYWORDS
%%%%%%%%%%%%%%%%%%%%%%%%%%%%%%%%
\noindent \textbf{Keywords:} Transformers, Infinite Width, NNGP and NTK, Lazy and Rich Learning, Grokking, Out-of-distribution Generalization, Inductive Bias, Length Generalization, Reasoning, Algorithms, Heuristic Complexity, Combinatorial Optimization.
%%%%%%%%%%%%%%%%%%%%%%%%%%%%%%%%
%%%%%%%%%%%%%%%%%%%%%%%%%%%%%%%%
%%%%%%%%%%%%%%%%%%%%%%%%%%%%%%%%

\section{Introduction}

A central yet unresolved question in the study of large language models (LLMs) is the extent to which the procedures they execute involve genuine ``understanding'' or merely exploit statistical correlations to interpolate a domain.   Empirical work has often highlighted the fragility of this ``understanding"; for instance, the GSM-symbolic benchmark \citep{Mirzadeh2024} shows that performance on mathematical reasoning tasks can degrade significantly when symbolic templates are altered, suggesting a reliance on pattern matching rather than robust algorithmic execution. Still, the loose and somewhat philosophical nature of these terms hinders rigorous progress. One area where statistical learning can be clearly discerned from logic internalization is that of combinatorial tasks (Sec.~\ref{sec:combinatorial_setting}). Here, extrapolation to arbitrarily-large task instance sizes is an inherent requirement. Such strong out-of-distribution (OOD) settings make statistical correlation and simple interpolation far less effective \citep{Xu2020NAR}. Furthermore, existing algorithms provide a reliable ground truth regarding worst-case, average-case, and heuristic complexity. These theoretical bounds can be contrasted with the computation required for network inference, allowing us to refute the possibility that a network ``grokked'' \citep{Power2022Grokking} a task without needing to interpret its inner workings. Finally, transformers -- the core technology behind LLMs -- can operate on variable sequence lengths, naturally lending themselves to this scalable setting once we align instance size with context length $T$. 

\begin{table*}[t]
\centering
\small
\begin{tabular}{llccc}
\hline \noalign{\vskip 2pt}
\textbf{Regime} & \textbf{Evaluation}
& \textbf{\parbox[t]{2cm}{\raggedright Inference Complexity}} 
& \textbf{\parbox[t]{2cm}{\raggedright Evaluation Error}} 
& \textbf{EPTHS} \\[1ex]
\noalign{\vskip 2pt} \hline \noalign{\vskip 1ex}
{NNGP/NTK} &  Kernel estimate &  $O(PN_{\mtc}T^3)$ & $\sqrt{P/N_{\mtc}}$  &  $O(T^{3+\epsilon})$ \\[1ex]
 &  Forward-pass & $O(NT^2)$ &  $P^{\gamma} / N$ & $O(T^{2+\epsilon})$
\\ [1ex]
\parbox[t]{4cm}{\raggedright Rich Mean-Field Scaling} &  Forward-pass & $O(NT^2)$ &  $P^{\gamma} / N$ & $O(T^{2+\epsilon})$ \\[1ex]
\hline
\end{tabular}
\caption{Summary of  theoretical results. For infinite lazy transformers, the kernel-predictor (NNGP/NTK) can be estimated using Monte Carlo with the specified inference complexity and error. Consequently, tasks with EPTHS complexity larger than $T^3$ times log factors cannot be captured ($T^{3+\epsilon}$). Approximating this predictor by a forward-pass on an extensive width network ($N \propto P^{\gamma \ge  1}$), we obtain an improved bound on the complexity of capturable tasks under lazy learning. Making the reasonable assumption that extensive networks with rich scaling converge to their rich/mean-field limits, we obtain the same complexity bound even under strong feature learning.} 

%Summary of infinite-width transformer inference-time complexity dependent on the context length $T$; the number of training datapoints $P$; the number of MC samples $N_{\mtc}$; the width of the estimating finite-width transformer $N$; a $T$-independent constant $\gamma>0$; and $\epsilon>0$. The forward-pass is relevant for large but finite-width transformers. The kernel estimate utilizes MC integration.}
\label{tab:complexity}
\end{table*}

In this work, we analyze whether the inductive biases of transformers inhibit or promote combinatorial task understanding. We purposely set aside the potential yet not at all obvious \citep{Dong2021Attention, Gromov2024Ineffectiveness} benefits of scaling depth and scratchpads to focus on the complexity of next-token predictions. To enable arbitrarily complex functions and to remove trivial computational bottlenecks associated with finite width, we consider infinitely over-parameterized networks in both the lazy and rich training regimes with an extensive scaling of width to problem size. Our key contributions are as follows.
\begin{itemize}[leftmargin=3mm]
    \item \textbf{Formal Definition of Combinatorial Task Capture:} We provide a verifiable definition of what it means for a neural network to capture a combinatorial task (Definition~\ref{def:capture}), involving an input measure that is scalable across problem sizes ($\mu_{X,T}$), an arbitrary but finite sample budget $P_0(\delta)$ to obtain correct task instance outcomes with probability ($1-\delta$) on instances up to size $T_0$, and a small logarithmic budget ($O(\log(T/T_0))$) for fine-tuning on larger instance sizes ($T>T_0$). 
    % This logarithmic allowance is provided solely to correct for architectural non-idealities rather than to learn the task logic itself. 
    \item \textbf{Empirical Evidence of Captured and Non-Captured Tasks:} We identify transformers that capture the induction head, sorting, and string matching tasks. Conversely, we show several transformers, including very deep ones, that fail to capture the source-target shortest path problem, the Max Flow / Minimal Cut problem, and the context-free grammar parsing problem (Section~\ref{sec:experiments}). 
    \item \textbf{Heuristic Complexity:} We introduce the efficient polynomial-time heuristic scheme (EPTHS), a distributional complexity class for combinatorial tasks that formalizes the existence of $\delta$-correct algorithms with runtime polynomial in the instance size $T$ (Definition~\ref{def:epths}). 
    % EPTHS bridges approximation-scheme notions (e.g., EPTAS) and average-case complexity by replacing worst-case guarantees with high-probability exact recovery.
    This gives rise to a task-level notion of complexity aligned with learning-based solvers and provides a principled scale against which inference-time procedures can be compared.
    \item \textbf{Upper Complexity Bounds:} We show that the kernel predictor of an infinite-width transformer can be estimated in $O(PN_{\mtc}T^3)$ FLOPs with $P$ datapoints and $N_{\mtc}$ Monte Carlo (MC) samples (Theorem~\ref{thm:TransformerPropagationComplexity}). Consequently, we prove that an infinite-width transformer cannot capture a combinatorial task of EPTHS complexity above $O(T^{3+\epsilon})$ (Corollary~\ref{coro:epths_for_kernel_evaluation}); under a finite-width convergence assumption for lazy and rich infinite-width limits (Assumption~\ref{assm:discrepancy_scale}), this improves to $O(T^{2+\epsilon})$ (Corollary~\ref{coro:epths_for_feed_forward}). Table~\ref{tab:complexity} summarizes these bounds. This suggests that, while feature learning can have drastic sample complexity effects \citep{Bietti2022, Wei2019, Ghorbani2021}, it does not have similar inference complexity effects. Our theoretical results are summarized in Table~\ref{tab:complexity}.
\end{itemize}

Our main purpose in this work is to present a framework in which computational complexity theory, OOD inductive bias of transformers, and shortcut learning can be sharply defined and contrasted. Further refinement of the upper complexity bounds is needed to explain some of our empirical results.  

%%%%%%%%%%%%%%%%%%%%%%%%%%%%%%%%
%%%%%%%%%%%%%%%%%%%%%%%%%%%%%%%%
%%%%%%%%%%%%%%%%%%%%%%%%%%%%%%%%

\section{Related Work}

Our work intersects with several active areas of deep learning and complexity theory.

\textbf{Grokking an Algorithm:}
The phenomenon of ``grokking'' -- where generalization emerges long after overfitting -- has been widely observed in math tasks \citep{Power2022Grokking,Barak2022,Nanda2023}. However, these tasks have $O(\log(n)),O(1)$ computational complexity and, more crucially, experiments were carried out on specific instance sizes, rendering computational complexity ill-defined.

%\textbf{Infinite-Width Limits (NTK/NNGP):}
%The behavior of neural networks in the infinite-width limit is well-characterized by the Neural Tangent Kernel (NTK) \citep{Jacot2018}. Extensions of this theory to transformers \citep{Hron2020} show that self-attention layers converge to specific kernel processes. We build upon these foundations, moving beyond convergence analysis to bound the \textit{computational complexity} of the resulting predictors.

%\textbf{Feature Learning vs. Kernel Methods:}
%A major distinction in modern theory is between the ``lazy'' regime (kernel learning) and the ``rich'' regime (feature learning) \citep{Rubin2025, Bordelon2022}. While feature learning is crucial for sample efficiency in structured tasks \citep{Aitchison2020, Sompolinsky1988}, our results indicate that the inference-time computational constraints of the transformer architecture remain a bottleneck even in the rich regime \citep{Menard2023}.

\textbf{Kolmogorov and Sample Complexity.} Prior work in algorithmic information theory suggests that neural networks are strongly biased towards functions with low Kolmogorov complexity \citep{VallePerez2018, Dingle2018}. This is a different notion of complexity, which, for functions representing an algorithm, does not scale with input size. In addition, various works \citep{Bietti2022,Ghorbani2021,Wei2019} studied the effect of feature learning on sample complexity. \citet{Lavie2024} study the sample complexity of infinite transformers. However, functions with high sample complexity are not necessarily hard to compute (e.g. $k$-sparse parity). 

\textbf{Worst-case Expressivity / Circuit Complexity.} A complementary line of work studies transformers as constant-depth threshold circuits, placing them in $\mathsf{TC}^0$ \citep{Merrill2022Saturated,Chiang2023Tighter}. More recently, \citet{Chen2024Theoretical} established an unconditional lower bound for multi-layer decoder-only transformers. In contrast, our perspective is ``softer''. We analyze average-case behavior and learnability via infinite-width limits (NNGP/NTK), where success is approximate and often stated in high probability, and where complexity is tied to inference cost / kernel evaluation rather than to exact worst-case computation. In the limit, we have universal expressivity, and thus focus on inductive bias, whereas the aforementioned literature addresses limits to expressivity.

\textbf{Length Generalization.} 
A growing body of work studies whether transformers extrapolate to longer inputs than seen during training, particularly on algorithmic and reasoning tasks. 
One line of work characterizes length generalization through transformer-aligned computational models: RASP provides a formal language for transformer-computable programs~\citep{weiss2021thinkingliketransformers}, and \citet{zhou2024what} conjecture length generalization when the target task admits a sufficiently simple RASP-L program. 
Our notion of algorithmic capture is instead task-level and black-box with respect to any particular transformer program. 
A second line of work studies training procedures and implicit biases, showing that standard training often fails to length-generalize on tasks such as parity or addition, while scratchpads, chain-of-thought (CoT), or curriculum learning can improve extrapolation~\citep{abbe2024generalization,Anil2022LengthGeneralization}. 
Beyond fixed-horizon extrapolation, our formulation tracks the scaling of the adaptation budget with the context horizon and extends to combinatorial optimization tasks -- including NP-hard problems with approximation schemes, such as the Euclidean traveling saleman problem -- where failure can be related to complexity-theoretic obstructions.

\textbf{Neural Algorithmic Reasoning (NAR).} 
Neural algorithmic reasoning (NAR) has provided an empirical and conceptual foundation for studying whether neural networks can execute classical algorithms, through work on algorithmic alignment and extrapolation \citep{Xu2020NAR,Xu2021NAR}, methods based on causal regularization \citep{Bevilacqua2023NAR}, no-hint and self-supervised training \citep{Rodionov2023NAR}, deep equilibrium formulations \citep{Georgiev2024NAR}, and newer dynamic-programming-inspired architectures such as FloydNet \citep{Yu2026NAR}. 
Complementing this line of work, we ask a task-level question: whether a transformer trained under standard supervision can maintain performance on the combinatorial problem itself as instance size grows. 
Our logarithmic re-adaptation criterion turns this into a black-box asymptotic test, closer to the evaluation perspective of combinatorial optimization than to imitation of a fixed canonical execution trace.

% Our work is complementary, but asks a different question. Rather than primarily evaluating whether a model can imitate the execution of a prescribed algorithm on held-out larger instances, often with intermediate supervision or other algorithm-specific priors, we study algorithmic combinatorial task capture under standard supervision. While NAR is often studied under evaluation protocols that permit some loss in OOD performance, we instead ask whether a transformer learns a scalable solution procedure for the combinatorial problem itself, with performance recoverable under growth in instance size using only a logarithmic re-adaptation budget. This shift reflects an application-oriented perspective closer to combinatorial optimization, where the central object is the problem to be solved at scale rather than a particular canonical execution trace. In that sense, our notion of capture is not intended to negate the NAR program, but to sharpen the distinction between partial size extrapolation and learning a scalable task-level heuristic.

%%%%%%%%%%%%%%%%%%%%%%%%%%%%%%%%
%%%%%%%%%%%%%%%%%%%%%%%%%%%%%%%%
%%%%%%%%%%%%%%%%%%%%%%%%%%%%%%%%

\section{Problem Formulation and Main Definitions}
\label{sec:setting}

Here we introduce our two key notions: combinatorial task capture (Definition~\ref{def:capture}) and efficient polynomial-time heuristic schemes (Definition~\ref{def:epths}). As noted in the introduction, we will show that, if an infinite-width transformer algorithmically captures a combinatorial task, then its kernel predictor places that task in an EPTHS class of complexity $O(T^{k+\epsilon})$, with $k=3$ for any $\epsilon>0$ (Sec.~\ref{sec:kernel_evaluation_complexity}), and down to $k=2$ under additional assumptions (Sec.~\ref{sec:tighter_bounds}). Consequently, heuristic complexity yields an obstruction to capture: if the transformer's inference-time heuristic complexity grows strictly slower than that of the combinatorial task, then the latter cannot be captured.

%%%%%%%%%%%%%%%%%%%%%%%%%%%%%%%%
%%%%%%%%%%%%%%%%%%%%%%%%%%%%%%%%
%%%%%%%%%%%%%%%%%%%%%%%%%%%%%%%%

\subsection{Combinatorial Task}
\label{sec:combinatorial_setting}

In this work, we consider supervised learning of combinatorial tasks with discrete outputs. For the purposes of this paper, we use the following broad problem-oriented notion. A {\it combinatorial task} $g: \cX \to \cY$ is a map from a space of task instances $\cX$ to an ambient space of possible admissible outcomes $\cY$. To each instance $X \in \cX$, there is an associate discrete configuration set $\Conf(X)$ of admissible solutions and a score function $\psi_X: \Conf(X) \to \bR$. 
The combinatorial task is in {\it solution form} if $g(X) \in \Conf(X) \subseteq  \cY$, where $\psi_X(g(X)) = \min_{\rho \in \Conf(X)} \psi_X(\rho)$, meaning $g(X)$ is a distinguished minimizer, and in {\it value form} if $g(X) =\min_{\rho \in \Conf(X)} \psi_X(\rho) \in \cY \subseteq \bR$. We assume that $\cY$ is equipped with a task-specific metric $\dist_{\cY}$ so that $\Omega = \inf\left\{\dist_{\cY}(\rho,\rho') \mid \rho \neq \rho'\in\Conf(X)\subseteq  \cY, X\in\cX\right\}>0$,
and there is an {\it accuracy margin} $\Delta \ge \Omega > 0$
large enough that any prediction within $\Delta/2$ of the target implies recovery of a desired outcome. Finally, we assume a notion of task size $T = T(X) \in \bN$ which stratifies $\cX = \bigsqcup_{T \ge 0} \cX_T$ into instance sets $\cX_T$ per size $T$. 
As an example, consider the shortest path problem (see App.~\ref{app:combinatorial_tasks} for details). Here, $\cX$ is a set of finite undirected graphs with distinguished source and target nodes and $X$ is one such. The configuration set $\Conf(X)$ consists of all paths from the source to the target. The task size $T(X)=V(X)$ is the number of vertices in the graph $X$. The task $g$ will be in solution form if $g(X)$ is a path of minimal length, while it will be in value form when $g(X)$ is that length. 

% One suitable metric in solution form is $\dist_{\cY}(\rho,\rho') = \abs{\psi_X(\rho)-\psi_X(\rho')}+c\bm{1}_{\rho\neq\rho'}$ for some $c>0$ and $\rho,\rho'\in\Conf(X),X\in\cX$. Here, $\Omega=c$, and we can choose $\Delta = 4c$ so that our accuracy level does not distinguish between two minimal paths.

%%%%%%%%%%%%%%%%%%%%%%%%%%%%%%%%
%%%%%%%%%%%%%%%%%%%%%%%%%%%%%%%%
%%%%%%%%%%%%%%%%%%%%%%%%%%%%%%%%

\subsection{Transformer Setting}
\label{sec:transformer_setting}

In the transformer setting studied here, we represent an instance $X \in \cX$ of a combinatorial task of size $T=T(X)$ by an embedded array
\begin{align}
    X = (\bfx_1,\ldots,\bfx_T) \in \cX_T \subseteq \bR^{T\times d}, \quad \bfx_a \in \bR^d, \quad a \in [T]
\end{align}
where $d$ is the token embedding dimension and $T$ plays the role of the context length and task size. Thus, throughout this paper, instances are already viewed in their embedded representation rather than as raw symbolic objects, and $\cY$ is chosen accordingly.
We further assume that, for each size $T$, the instance space $\cX_T \subseteq \bR^{T\times d}$ is equipped with a probability measure $\mu_{X,T}$ from which training and test instances can be efficiently sampled. Where appropriate, for example, for graph combinatorial tasks, $\mu_{X,T}$ will be chosen in the critical regime, e.g., where large graph clusters emerge.

In general, $X$ is not assumed permutable, i.e., $X_\sigma = (\bfx_{\sigma(1)},\ldots,\bfx_{\sigma(T)})$, $\sigma\in S_T$, may represent a different task instance. In such a case we rely on positional encoding (PE). For combinatorial tasks in which all sequence indices in $X$ are permutable, but for $O(1)$ special tokens -- for example, those representing the source and target nodes in the finite undirected graph of a shortest path problem -- we encode such large symmetry using PE that only distinguishes between special tokens and the rest. 

Architecture-wise, we focus on decoder-only transformers with softmax self-attention \citep{Vaswani2017Transformer}. To establish notation, consider a single attention layer of a transformer block with $H$ heads. It takes as inputs $\bm{z'}_a(X) \in \bR^{\dmod}$, $a \in [T]$, which we package into $Z'(X) \in \bR^{T \times \dmod}$. The attention layer's output at token $a \in [T]$ is then
\begin{equation}
    \bm{z}_a(X) = \frac{1}{\sqrt{H}} \sum_{h=1}^H \sum_{c=1}^T A^h_{ac}(X) W^{h\top}_{V} \bm{z'}_c(X),
\end{equation}
where
\begin{align}
    A^h_{ac}(X) = \Softmax\left( S_{a\bullet}^h(X) \right)_c
\end{align}
are the attention weights and
\begin{align}
\label{eq:attention_scores}
    S^{h}_{ac}(X)=\frac{1}{\sqrt{d_k}} {\bm{z'}_a}(X)^{\top} W^h_Q (W^h_K)^{\top} \bm{z'}_{c}(X)
\end{align}
are the attention scores for query and key weight matrices $W_K^h, W_Q^h \in \bR^{\dmod \times d_k}$ and value weight matrices $W_V^h \in \bR^{\dmod \times \dmod}$ (see App.~\ref{app:transformer_setup} for further details). 

%%%%%%%%%%%%%%%%%%%%%%%%%%%%%%%%
%%%%%%%%%%%%%%%%%%%%%%%%%%%%%%%%
%%%%%%%%%%%%%%%%%%%%%%%%%%%%%%%%

\subsection{Task Capture and EPTHS}
\label{sec:definitions}

Now we are ready to rigorously define our two key notions: the algorithmic capture of a combinatorial task (Definition~\ref{def:capture}) and efficient polynomial-time heuristic schemes (Definition~\ref{def:epths}). 

\begin{definition}[Algorithmic capture of a combinatorial task]
\label{def:capture}
A transformer $f$ is said to \textit{algorithmically capture} a combinatorial task $g: \cX \to \cY$ if for every sufficiently small $\delta > 0$ there exist $T_0=T_0(\delta)$, $C_0=C_0(\delta)$, and $P_0=P_0(\delta)$ such that, for every $T\ge T_0$,
\begin{align}
    \Pr\nolimits_{X \sim \mu_{X,T}}\left[\dist_{\cY}(f_T(X),g(X))<\Delta/3\right] > 1-\delta,
\end{align}
where $f_T$ denotes the predictor obtained by the following two-stage training protocol: (1) initial training on $P_0$ samples from $\mu_{X,1},\dots,\mu_{X,T_0}$ shared across all $T\ge T_0$; and (2) a secondary fine-tuning on $C_0\log(T/T_0)$ additional samples from $\mu_{X,1}, \dots, \mu_{X,T}$.\footnote{The underlying transformer architecture, training procedure, hyperparameter choices, and initialization/training randomness convention are fixed; the probability displayed is over the test instance $X\sim\mu_{X,T}$. For a PAC-style extension of Definition~\ref{def:capture}, see App.~\ref{app:pac_style_capture}.}  
\end{definition}

% Alternatively, we allow $M$ separate fine-tuning stages with $T_0 < T_1 < T_2 \dots < T_M=T$ with $C\log(T_{m}/T_{m-1})$ additional data-points sampled from $\mu_{X,1}, \dots, \mu_{X,T_{m}}$. 

The choice of logarithmic data adaptation in Definition~\ref{def:capture} reflects a compromise between the ideal of perfect extrapolation and the practical non-idealities of transformers. In principle, one might regard $O(1)$ additional samples as the cleanest signature of genuine capture. 
% if the task logic has already been learned up to size $T_0$, then no substantial new supervision should be needed for larger $T$. 
Empirically, however, this standard appears too strict. In combinatorial optimization problems, for instance, performance typically deteriorates as one moves away from the training size distribution; see, for instance, \citet[Figure~5]{Kool2019Attention}, where routing performance degrades markedly off distribution. Related work has also studied the OOD behavior of transformer variants for algorithmic computation; notably, \citet{DeLuca2025NAR} report poor OOD performance on induction-head-style tasks. We observe the same qualitative effect in our own experiments (Sec.~\ref{sec:experiments}). A natural architectural reason is attention dilution: for fixed learned key-query geometry, attention becomes increasingly diffuse as the context length grows, so even a correctly learned heuristic may require mild recalibration when transported to larger $T$. At the same time, allowing polynomially many new samples would blur the distinction between correcting such architectural drift and relearning the task itself. Logarithmic adaptation provides a middle ground. In particular, \citet[Theorem 4.7]{Edelman2022} make a compelling theoretical case for $O(\log(T/T_0))$. 
% There, they derive log-covering number bounds for transformer blocks with logarithmic dependence on the context length, which implies uniform generalization bounds, supporting $O(\log(T/T_0))$ as a natural adaptation scale at which one may hope to restore uniform control of the generalization error beyond the original training horizon. 
Their results are consistent with our decision to formulate capture using a logarithmic re-adaptation budget rather than requiring strict zero-adaptation extrapolation.

\begin{definition}[EPTHS]
\label{def:epths}
An algorithm $A$ is an \emph{efficient polynomial-time heuristic scheme} (EPTHS) of complexity $O(T^k)$ for a combinatorial task $g$ if for every $\delta\in(0,1]$ there exists a $\delta$-implementation $A_\delta$ of $A$ that returns an output $A_\delta(X) \in \cY$ in time $O(\eta(1/\delta)T^k)$ such that 
\begin{align}
    \Pr\nolimits_{X \sim \mu_{X,T}}\left[\dist_{\cY}(A_\delta(X),g(X))<\Delta/2\right] > 1-\delta
\end{align}
for sufficiently large $T$.
\end{definition}
 
% Since distinct outputs in $\cY$ are separated by at least $\Delta$, each $\delta$-implementation induces a well-defined exact decision rule that is correct with probability at least $1-\delta$. 

In principle, the function $\eta$ in Definition~\ref{def:epths} is allowed to be exponential, or worse, in $1/\delta$. This is by analogy with the efficient polynomial-time approximation scheme (EPTAS), but with the guarantees reversed: whereas an EPTAS returns a $(1+\epsilon)$-approximate solution for every input, an EPTHS provides an exact recovery of the target output for most inputs, namely with probability above $1-\delta$ over $X\sim\mu_{X,T}$. Our use of the input measures $\mu_{X,T}$ and of heuristic complexity is inspired by the framework of distributional decision problems in average-case complexity theory, as developed by \citet{Bogdanov2006}. Since the tasks studied here have structured or scalar outputs rather than binary outputs, we adopt a broader task-based formulation while retaining the same distributional viewpoint.

\begin{remark}[From average-case to EPTHS]
\label{rem:average_case_to_epths}
A polynomial average-case complexity of a combinatorial task establishes an upper bound for EPTHS. Let $A$ be an algorithm that returns a correct outcome $g(X)$ for every $X \sim \mu_{X,T}$ in time $t(X)$ with $\bE_{X\sim \mu_{X,T}}[t(X)]=O(T^k)$. Then $A$ establishes an EPTHS of complexity $O(T^k)$ for $g$. By Markov's inequality
\begin{align}
  \Pr\nolimits_{X\sim \mu_{X,T}}\left[t(X) \geq \bE_{X\sim \mu_{X,T}}[t(X)]/\delta\right] \leq \delta.
\end{align}
Then, as a $\delta$-implementation $A_\delta$ of $A$, run the latter on each instance $X$ for up to $O(\bE_{X\sim \mu_{X,T}}[t(X)]/\delta)$ steps. If $A$ finished within the allotted step budget, provide its outcome, otherwise, output a random outcome for the task. By construction,
\begin{align}
    \Pr\nolimits_{X\sim \mu_{X,T}}\left[\dist_{\cY}(A_\delta(X), g(X))=0\right]>1-\delta
\end{align}
in time $O(T^k/\delta)$.
\end{remark}

%%%%%%%%%%%%%%%%%%%%%%%%%%%%%%%%
%%%%%%%%%%%%%%%%%%%%%%%%%%%%%%%%
%%%%%%%%%%%%%%%%%%%%%%%%%%%%%%%%

\subsection{Empirical Scaling Budgets on Various Tasks}
\label{sec:experiments}

Our empirical findings in Fig.~\ref{Fig:Grokking} illustrate a spectrum of capabilities that our framework helps to decompose. We observe instances of task capture for the Induction (Panel (a)) and Sorting Vocabulary (Panel (b))
% a list of integers between $1..V$ 
tasks.
% \footnote{For the Induction task, we do not allow repetition of the source-token, while for the Sorting Vocabulary and String Match tasks we do.} 
For the String Match task of a 3-token string (Panel (c)), we see capture on average, while individual seeds often, but not always, show several different slopes on our log-linear plots. In all these tasks EPTHS complexity is low and aligns well with finite circuit solutions and RASP programs \citep{weiss2021thinkingliketransformers}. 

We also observe instances suggesting lack of capture in the Context Free Grammar (CFG) task (Panel (g)), as predicted by our upper EPTHS complexity bound (Corollary~\ref{coro:epths_for_feed_forward}). On the other hand, we find combinatorial tasks, notably the source-target shortest path problem (SPP; Panel (e)) and the Max Flow / Minimal Cut (MinCut; Panel (f)) problem, whose native EPTHS complexity is allowed by our bounds. This does not refute our theory, as an upper EPTHS complexity bound is no guarantee for capture below it. In App.~\ref{app:combinatorial_tasks} we spell out each combinatorial task setting used in our experiments. Additional engineering choices are spelled out in App.~\ref{app:engineering}.

%The upper bounds we establish (Corollaries~\ref{coro:epths_for_kernel_evaluation} and \ref{coro:epths_for_feed_forward}), by and of themselves, are no guarantee for capturing combinatorial tasks of complexity below these bounds. Indeed, SPP and MinCut suggest failure of capture is possible below them. This failure should not be sought in the challenge of optimization per ce, since we were able to reduce test error in distribution as close to $0$ as desired, with the addition of data samples. We believe it should not be attributed to representational limitations either, since taking width to infinity eliminates expressivity concerns. More fundamentally, a possible root cause would be in the failure to ``internalize" an effective heuristic/algorithm, opting instead for easier optimization solutions that work well in distribution (i.e. a form of shortcut learning and simplicity bias).

\begin{figure*}[!htbp]
    \centering
    \begin{subfigure}[b]{\textwidth}
        \centering
        \includegraphics[trim={0cm 17.5cm 0 0},clip,width=\textwidth]{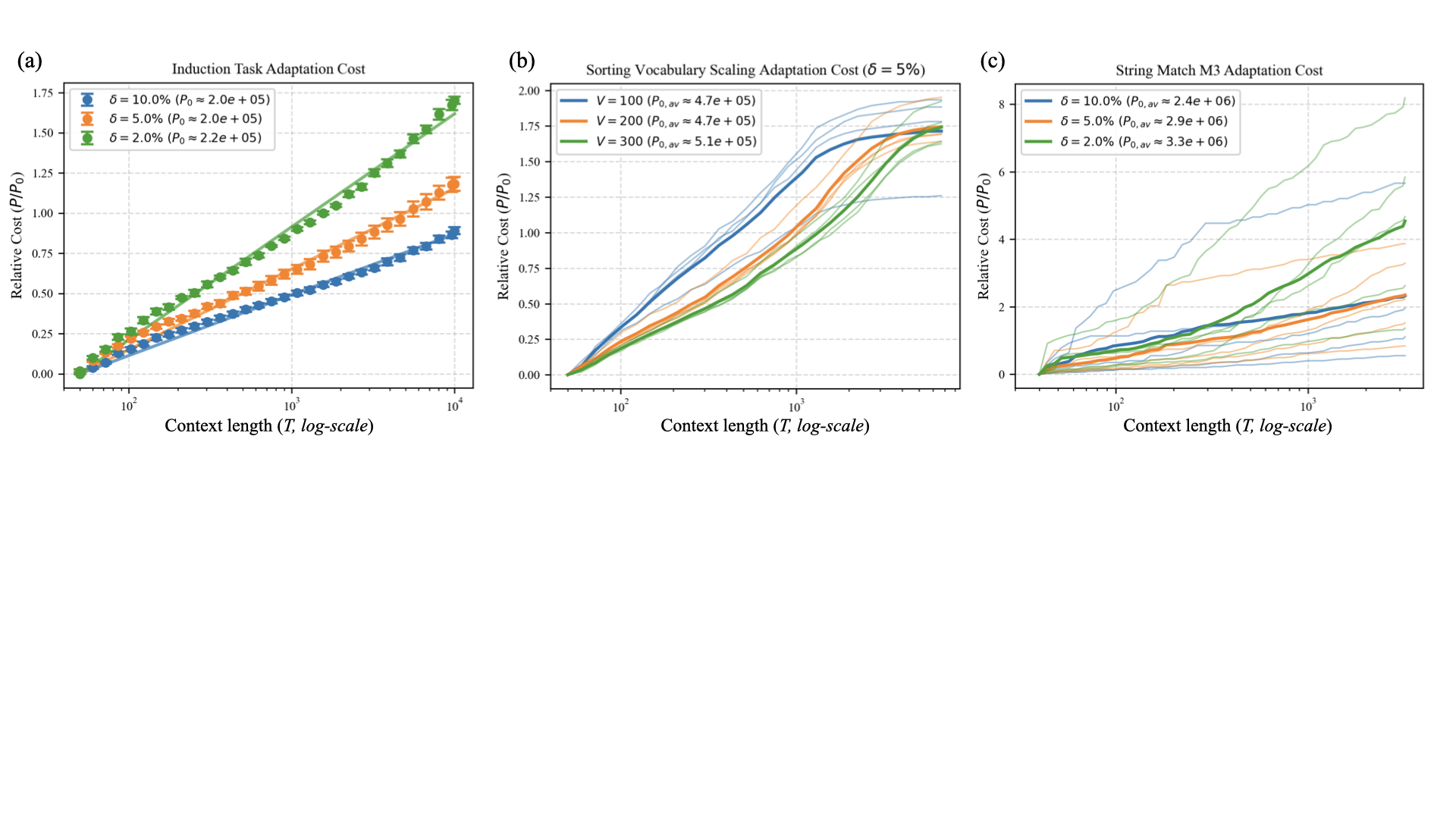}
        \label{fig:top_row}
    \end{subfigure}
    
    \vspace{-0.5cm} % Add some vertical spacing between rows

    % --- Bottom Row ---
     \begin{subfigure}[b]{\textwidth}
        \centering
        \includegraphics[trim={0cm 20cm 0 0},clip,width=\textwidth]{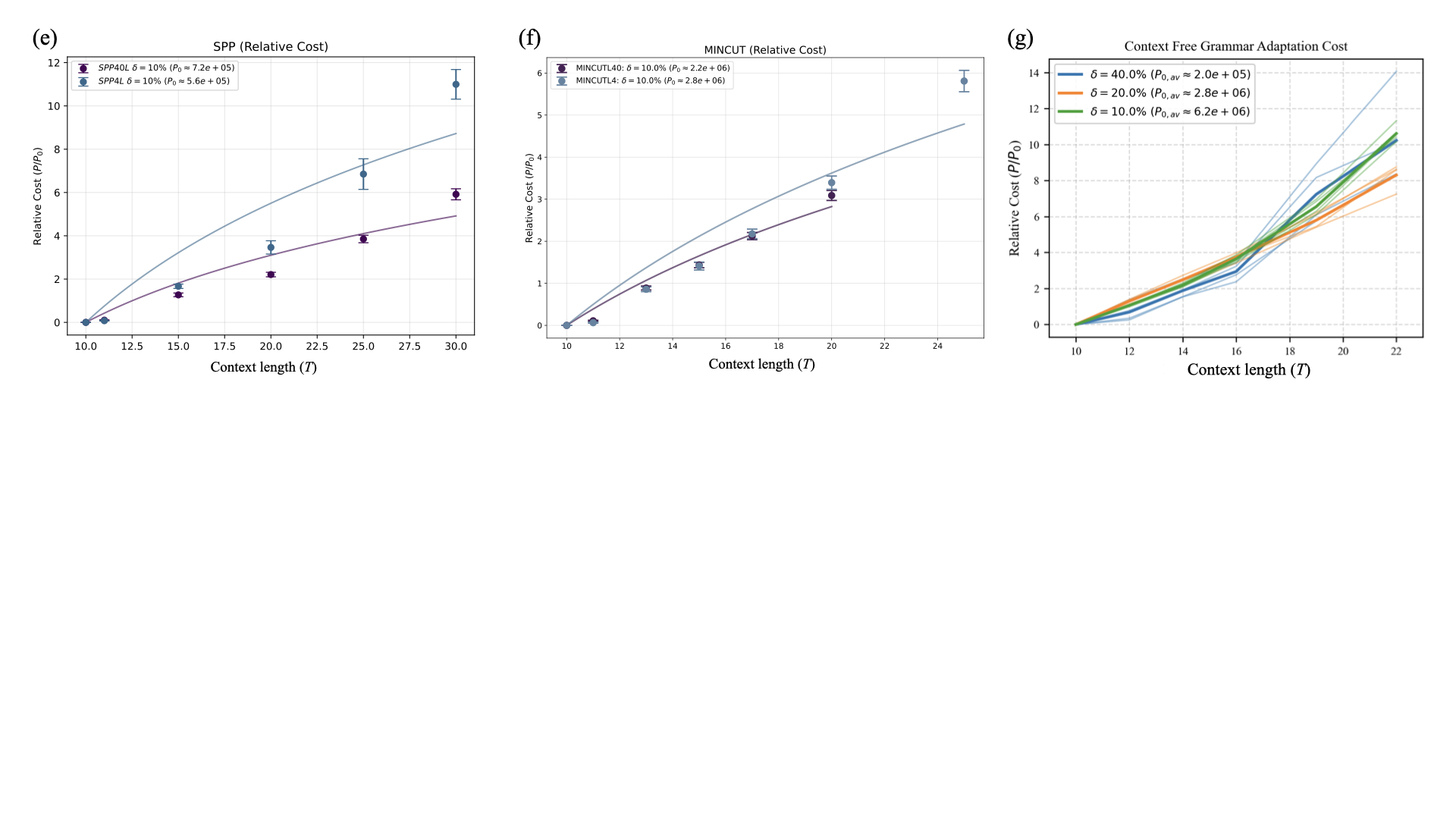}
        \label{fig:bottom_row}
    \end{subfigure}    
    \caption{\textbf{Empirical Combinatorial Task Capture.} 
    We train various models on combinatorial task instances of size $T_0$ (1st point on the x-axes) reaching accuracy $\delta$ after seeing $P_0(\delta)$ datapoints. We then fine-tune these models on larger instance sizes ($T>T_0$), re-achieving $\delta$-accuracy after $P$ extra datapoints. Panels (a)-(c) show linear-log plots of relative adaptation cost ($P/P_0$). Panels (a)-(b) show evidence of task capture consistent across seeds. Panel (c) shows task capture on average while individual seeds transition between different slopes. Panels (e)-(f) present results from a study of the combinatorial graph tasks, SPP and MinCut, that exhibit accelerating curves on a linear-linear scale. Solid lines are best fits to $C\log(T/T_0)$ meant to guide the eye. To mitigate expressivity concerns, depth $40$ networks were used for the purple datapoints in Panels (e)-(f). For the Context Free Grammar task on Panel (g), our theoretical bounds prohibit capture at fixed-depth regardless of model capacity. Panels (a,e,f) show average empirical curves with one STD error bars together with best fits to a log. Panels (b,c,g) report individual seeds (thin lines) as well as averages (thick lines).
\label{Fig:Grokking}}
\end{figure*}

%%%%%%%%%%%%%%%%%%%%%%%%%%%%%%%%
%%%%%%%%%%%%%%%%%%%%%%%%%%%%%%%%
%%%%%%%%%%%%%%%%%%%%%%%%%%%%%%%%

\section{Theoretical Complexity Inductive Bias}

Finite networks have inference-time complexity bounded by the number of FLOPs needed for their forward-pass, which, for transformers, scales as $O(LH\dmod T^2)$ with depth $L$, number of attention heads $H$, and embedding dimension $\dmod$ (see Lemma~\ref{lem:finite_width_complexity}). Thus, any task realized by a finite-width transformer must admit inference within this $O(T^2)$ budget, up to architecture-dependent constants. Infinitely-wide networks -- for transformers $H,d_{model}\rightarrow \infty$ -- are universal approximators \citep{Lee2018DNN} and, thus, expressively far richer. Still, their inductive biases, as we will show, tilt them towards low inference-time computational complexity. To this end, we seek to upper bound the inference-time complexity of a trained transformer in the extreme over-parameterized limit. 

%%%%%%%%%%%%%%%%%%%%%%%%%%%%%%%%
%%%%%%%%%%%%%%%%%%%%%%%%%%%%%%%%
%%%%%%%%%%%%%%%%%%%%%%%%%%%%%%%%

\subsection{Infinite-width Predictor}

We now turn to infinite transformer networks in the sense of \citet{Hron2020}. In this limit, the network predictor $f(X)$ converges to a kernel predictor. At initialization, or in the context of Bayesian neural networks and SGLD training \citep{NavehSompolinskyRingel,Welling2011}, the relevant kernel is the neural network Gaussian process (NNGP) kernel \citep{Lee2018DNN}. For gradient flow training (i.e., gradient descent at vanishing learning rate), the relevant kernel is the neural tangent kernel (NTK) \citep{Jacot2018,Hu2020,Hron2020}. In either case, the predictor takes the form of a weighted kernel expansion over the training dataset $\{X^{(\nu)},y^{(\nu)}\}_{\nu=1}^P$,
\begin{align}
\label{eq:kernel_predictor}
    % f_{K}(X) = \bm{\alpha}^\top K(X,\bm{X}) ,\quad\bm{\alpha} = (K(\bm{X},\bm{X})+\kappa I_P)^{-1}\bm{y},
    f_{K}(X) = \sum_{\nu=1}^P \alpha_\nu K(X,X^{(\nu)}),\quad
    \bm{\alpha} = (K+\kappa I_P)^{-1}\bm{y},
\end{align}
where $K_{\mu\nu}=K(X^{(\mu)},X^{(\nu)})$ is the Gram matrix of dimension $P \times P$ and $\bm{y}$ is the response vector.
Thus, rather than performing an infinite-compute forward-pass through the network, we may analyze the computational complexity of evaluating Eq.~\eqref{eq:kernel_predictor}, namely, the cost of computing the $P$ entries $K(X,X^{(\nu)})$. As we are evaluating inference-time complexity, the matrix inversion in $\bm{\alpha}$ is not incorporated; we consider it part of training complexity. Accordingly, we analyze the number of FLOPs required to compute $K(\cdot,\cdot)$.

\begin{remark}[Kernel two-step training protocol]
The two-step training protocol of Definition~\ref{def:capture}, which is the one used in our experiments, admits a corresponding kernel formulation \citep{bennani2020generalisation}. In this formulation, the first-step predictor, $f_K^{(1)}(X)$, is the kernel predictor of Eq.~\eqref{eq:kernel_predictor} with $P_0$ initial data samples. The second-step predictor, $f_K^{(2)}(X)$, is obtained by adding to $f_K^{(1)}(X)$ the kernel predictor on the second-stage $P_1$ labeled data samples, $\{\tilde{X}^{(\nu)},\tilde{y}^{(\nu)}\}_{\nu=1}^{P_1}$, fitted to the residuals of the first stage,
\begin{align}
\label{eq:two_step_kernel}
    % f_K^{(2)}(X) = f_K^{(1)}(X) + K(X,\bm{X}^{(2)})\left(K(\bm{X}^{(2)},\bm{X}^{(2)})+\kappa I_P\right)^{-1}\left(\bm{y}^{(2)}-f_K^{(1)}(\bm{X}^{(2)})\right).
    % f_K^{(2)}(X) = f_K^{(1)}(X) + \sum_{\mu,\nu=1}^P K(X,\tilde{X}^{(\mu)})\left(\tilde{K}+\kappa I_P\right)^{-1}_{\mu\nu}\left(\tilde{y}^{(\nu)}-f_K^{(1)}(\tilde{X}^{(\nu)})\right)
    f_K^{(2)}(X) = f_K^{(1)}(X) + \sum_{\nu=1}^{P_1} \tilde{\alpha}_\nu K(X,\tilde{X}^{(\nu)}), \  \tilde{\alpha}_\nu = \!\sum_{\mu=1}^{P_1} \left(\tilde{K}+\kappa I_{P_1}\right)^{-1}_{\nu \mu}\!\left(\tilde{y}^{(\mu)}-f_K^{(1)}(\tilde{X}^{(\mu)})\right),
\end{align}
for the second-stage Garm matrix $\tilde{K}_{\mu\nu}=K(\tilde{X}^{(\mu)},\tilde{X}^{(\nu)})$. Here, as well, both $\bm \alpha$ and $\tilde{\bm \alpha}$ are incorporated into training-time complexity.
\end{remark}

\begin{remark}[Bounded kernel coefficients]
\label{rem:bounded_kernel_weights}
We assume that the kernel coefficients $\abs{\alpha_{\nu}}$ are uniformly bounded by a constant independent of $P$, as is typically expected. Our results in Sec.~\ref{sec:kernel_evaluation_complexity}, and Proposition~\ref{prop:BlockPropagationComplexity} in particular, also accommodate the weaker bound $\abs{\alpha_{\nu}}=O(\sqrt{P})$, which corresponds to the worst case for finite-ridge regression. This is handled by scaling $N_{MC}$, defined in Proposition~\ref{prop:BlockPropagationComplexity}, by an additional factor of ${P}$.
\end{remark}

% If $\abs{\alpha_{\nu}}=O(\sqrt{P})$ then
% \begin{align}
%     \mathrm{Var}\left(\sum_{\nu=1}^P \alpha_\nu K(X,X^{(\nu)})\right) = \sum_{\nu=1}^P \mathrm{Var}\left(\alpha_\nu\right)\mathrm{Var}\left(K(X,X^{(\nu)})\right) = P \cdot P \cdot N_{\mtc}
% \end{align}
% and $N_{MC}$ gets an additional factor of $P$.

%%%%%%%%%%%%%%%%%%%%%%%%%%%%%%%%
%%%%%%%%%%%%%%%%%%%%%%%%%%%%%%%%
%%%%%%%%%%%%%%%%%%%%%%%%%%%%%%%%

\subsection{Covariance Propagation}
\label{sec:kernel_propagation}

In the NNGP setting, the sufficient statistic describing the transformer block state is the token-to-token covariance matrix $\Cov'(X_1,X_2) \in \bR^{T \times T}$, for fixed inputs $X_1,X_2 \in \bR^{T \times d}$, with entries
\begin{align}
\label{eq:covariance}
    \Cov'_{ab}(X_1,X_2) = \lim_{\dmod \to \infty} \frac{1}{\dmod} \bE \left[  \bm{z'}_{a}(X_1)^\top \bm{z'}_{b}(X_2) \right],
\end{align}
where $\bm{z'}_{a}(X)$ are the outputs of the MLP layer of the previous block (App.~\ref{app:transformer_setup}). In the infinite-width limit, $(\bm{z'}_{a}(X))_{a\in[T]}$ is a centered multivariate Gaussian. Evaluating the infinite-width NNGP kernel then amounts to propagating this token-to-token covariance matrix, in the sense of \citet{ChoSaul2009}, through the transformer blocks, starting from its attention layer. The final-block value then determines $K(X_1,X_2)$. In the infinite-width limit, we write $S(X) \in \bR^{T \times T}$ for the limiting attention score matrix and $A(X) = \Softmax(S(X))$ for the corresponding limiting attention matrix. 

\begin{restatable}[Attention layer propagation, \HronAttentionCitationII]{proposition}{AttentionCovarianceUpdate}
\label{prop:AttentionCovarianceUpdate}
Assume the incoming covariance matrix $\Cov'(X_1,X_2)$ is given. In the infinite-width limit, the token-to-token covariance matrix propagated through the attention layer is
\begin{align}
\label{eq:infinite_covariance_update}
\Cov(X_1,X_2) = \bE_{(S(X_1),S(X_2))}\!\left[A(X_1)\Cov'(X_1,X_2)A(X_2)^\top \right].
\end{align}
\end{restatable}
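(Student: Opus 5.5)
The plan is to compute the finite-width covariance of the attention outputs directly and then pass to the limit $\dmod, d_k, H \to \infty$ as in \citet{Hron2020}.

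\textbf{Step 1 (expand the finite-width covariance).} Substitute $\bm{z}_a(X) = H^{-1/2}\sum_h\sum_c A^h_{ac}(X)W_V^{h\top}\bm{z'}_c(X)$ into $\frac{1}{\dmod}\bE[\bm{z}_a(X_1)^\top \bm{z}_b(X_2)]$. This gives a double sum over heads $h,h'$ and tokens $c,c'$ of
\[
\bE\big[A^h_{ac}(X_1)A^{h'}_{bc'}(X_2)\, \bm{z'}_c(X_1)^\top W_V^h W_V^{h'\top}\bm{z'}_{c'}(X_2)\big].
\]

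\textbf{Step 2 (condition on the incoming representations and the scores).} The value weights $W_V^h$ are independent of $W_Q,W_K$ and of the incoming $\bm{z'}$. They are i.i.d.\ across heads with variance $\sigma_V^2/\dmod$. Taking the expectation over $W_V$ first therefore kills the cross-head terms $h\neq h'$. On the diagonal terms, $\frac{1}{\dmod}\bm{z'}_c(X_1)^\top\bm{z'}_{c'}(X_2)$ appears, absorbing $\sigma_V^2$ into normalization. The $1/H$ then averages the $H$ identically distributed per-head contributions. As a result, the covariance reduces to a single-head quantity of the form
\[
\bE\Big[\textstyle\sum_{c,c'} A_{ac}(X_1)\,\hat\Cov'_{cc'}\,A_{bc'}(X_2)\Big],
\]
where $\hat\Cov'$ is the empirical token covariance.

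\textbf{Step 3 (infinite-width limit).} Three facts are needed here.
\begin{enumerate}[label=(\roman*)]
\item The empirical covariance $\hat\Cov'_{cc'} = \frac{1}{\dmod}\bm{z'}_c(X_1)^\top\bm{z'}_{c'}(X_2)$ concentrates to the deterministic $\Cov'_{cc'}(X_1,X_2)$ by a law of large numbers over the $\dmod$ coordinates.
\item Under the $1/\sqrt{d_k}$ scaling of Eq.~\eqref{eq:attention_scores}, the scores $S^h(X_1),S^h(X_2)$ converge jointly in distribution to a centered Gaussian pair $(S(X_1),S(X_2))$. Its covariance is built from products of entries of $\Cov'$, by a CLT over the $d_k$ key/query coordinates. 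The scores also become asymptotically independent of the value channel.
\item Softmax is bounded and continuous.
\end{enumerate}
Combining these with the continuous mapping theorem and bounded convergence exchanges limit and expectation. This yields $\Cov_{ab}=\bE_{(S(X_1),S(X_2))}[\sum_{c,c'}A_{ac}(X_1)\Cov'_{cc'}A_{bc'}(X_2)]$, which is Eq.~\eqref{eq:infinite_covariance_update} in matrix form.

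\textbf{Main obstacle.} The delicate step is Step 3: justifying the decoupling between the random attention weights and $\hat\Cov'$, which both depend on the same $\bm{z'}$. I would handle it by conditioning on $\bm{z'}$ and using the uniform boundedness $0\le A\le 1$. Then $\abs{\sum A\hat\Cov' A - \sum A\Cov' A}\le T^2\max_{cc'}\abs{\hat\Cov'_{cc'}-\Cov'_{cc'}}\to 0$ in $L^1$, since $T$ is fixed. The score CLT is taken from \HronAttentionCitationI. I would rely on \HronAttentionCitationII\ for the precise order of limits ($d_k$ jointly with $\dmod$), and not re-derive it.
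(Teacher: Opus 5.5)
Your proposal is correct and follows essentially the same route as the paper's proof. Both arguments condition on $Z'$ via iterated expectation, use the independence of $W_V$ (with $\bE[W_V^h W_V^{h'\top}]=\delta_{hh'}I_{\dmod}$) to kill the cross-head terms, and then pass to the limit by replacing the normalized overlaps with the deterministic $\Cov'$ and the head-averaged attention products with the expectation under the Gaussian score law of Proposition~\ref{prop:CovarianceFactorization}. Your bounded-convergence bound for decoupling $A$ from $\hat{\Cov}'$, and your explicit note that $d_k\to\infty$ is what drives the score CLT, make rigorous a limit exchange that the paper only asserts in one line.
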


The proof of Proposition~\ref{prop:AttentionCovarianceUpdate} uses the law of the attention score matrix $S(X)$ in the infinite-width limit as given in following proposition. 

\begin{restatable}[Attention score distribution, \HronAttentionCitationI]{proposition}{CovarianceFactorization}
\label{prop:CovarianceFactorization}
    In the infinite-width limit, the attention score matrix $S(X)$ is a centered multivariate Gaussian, and its covariance is
    \begin{align}
    \label{eq:Factorization}
        \bE\left[S_{ac}(X_1) S_{be}(X_2)\right] = \Cov'_{ab}(X_1,X_2) \Cov'_{ce}(X_1,X_2).
    \end{align} 
\end{restatable}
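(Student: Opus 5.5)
The plan is to compute the covariance of the scores directly from their definition in Eq.~\eqref{eq:attention_scores} and then pass to the infinite-width limit with a central limit argument. Let $W_Q^h, W_K^h$ have i.i.d.\ entries $\cN(0,\sigma_Q^2/\dmod)$ and $\cN(0,\sigma_K^2/\dmod)$. Write $\bm{q}_a = W_Q^{h\top}\bm{z'}_a(X_1)\in\bR^{d_k}$ and $\bm{k}_c = W_K^{h\top}\bm{z'}_c(X_1)$, and define $\bm{q}'_b,\bm{k}'_e$ analogously from $X_2$. Then
\begin{align}
S_{ac}(X_1)=\frac{1}{\sqrt{d_k}}\sum_{i=1}^{d_k} q_{a,i}\,k_{c,i}.
\end{align}
I will condition on the incoming activations $\bm{z'}$. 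Given these, the $d_k$ coordinates $i$ are i.i.d., because the columns of $W_Q$ and $W_K$ are independent. In addition, $\bm q$ is independent of $\bm k$, since $W_Q$ and $W_K$ are independent.

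The first step is a second-moment computation. Conditional on $\bm{z'}$, each summand has mean zero, and
\begin{align}
\bE[q_{a,i}k_{c,i}\,q'_{b,j}k'_{e,j}\mid \bm{z'}]=\delta_{ij}\,\bE[q_{a,i}q'_{b,i}]\,\bE[k_{c,i}k'_{e,i}],
\end{align}
with $\bE[q_{a,i}q'_{b,i}\mid\bm{z'}]=\sigma_Q^2\,\bm{z'}_a(X_1)^\top\bm{z'}_b(X_2)/\dmod$, and similarly for $k$. The factor $1/d_k$ cancels the sum over $i$. The conditional score covariance is therefore the product $\hat\Cov'_{ab}\hat\Cov'_{ce}$ of the empirical covariances $\hat\Cov'_{ab}=\bm{z'}_a(X_1)^\top\bm{z'}_b(X_2)/\dmod$, with $\sigma_Q\sigma_K$ absorbed into the normalization convention of App.~\ref{app:transformer_setup}.

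The second step is to take limits. As $\dmod\to\infty$, the empirical covariance $\hat\Cov'$ concentrates on the deterministic $\Cov'$ of Eq.~\eqref{eq:covariance}. This follows from the inductive hypothesis that the previous block's outputs are asymptotically i.i.d.\ Gaussian across channels, together with a law of large numbers. As $d_k\to\infty$, $S$ is a normalized sum of $d_k$ i.i.d.\ centered vectors, one for each pair $(X_j,a,c)$. Since $T$ is fixed, a multivariate CLT gives a centered Gaussian limit with covariance $\Cov'_{ab}\Cov'_{ce}$. The mean vanishes because $\bE[\bm q]=0$. Mixing over the concentrating conditional covariance yields an unconditional Gaussian limit.

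The hard part is making the limit rigorous when $\dmod$ and $d_k$ grow simultaneously and the $\bm{z'}$ are themselves only asymptotically Gaussian. This requires exchangeability and uniform-integrability arguments, with fourth-moment bounds on the $q_{a,i}k_{c,i}$ to control the CLT. I would not redo this; I would invoke \HronAttentionCitationI{} directly for that step. The only paper-specific work is checking that the scaling convention matches, namely the $1/\sqrt{d_k}$ normalization in Eq.~\eqref{eq:attention_scores} together with the variance-$1/\dmod$ weight initialization.
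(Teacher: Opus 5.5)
Your proposal is correct and takes essentially the same route as the paper. Both arguments condition on $Z'$, use the independence of $W_Q^h$ and $W_K^h$ to reduce the conditional score second moment to the product $\bigl(\bm z'_a(X_1)^\top\bm z'_b(X_2)/\dmod\bigr)\bigl(\bm z'_c(X_1)^\top\bm z'_e(X_2)/\dmod\bigr)$, and then let these overlaps concentrate to $\Cov'$ and invoke a CLT for Gaussianity. You are somewhat more explicit than the paper in noting that the CLT runs over the $d_k$ i.i.d.\ column coordinates, so $d_k\to\infty$ is required, and in deferring the rigorous joint limit to \citet{Hron2020}.
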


Crucially, we will use the Kronecker tensor structure of the attention score covariance, made explicit in Proposition~\ref{prop:CovarianceFactorization}, to obtain an MC estimator of the right-hand side of Eq.~\eqref{eq:infinite_covariance_update} with efficient score sampling. We provide proofs of Propositions~\ref{prop:AttentionCovarianceUpdate} and \ref{prop:CovarianceFactorization}, simplified to our setting, in App.~\ref{app:attention_layer_propagation} and App.~\ref{app:attention_scores}, respectively.

%%%%%%%%%%%%%%%%%%%%%%%%%%%%%%%%
%%%%%%%%%%%%%%%%%%%%%%%%%%%%%%%%
%%%%%%%%%%%%%%%%%%%%%%%%%%%%%%%%

\subsection{Kernel Evaluation Complexity}
\label{sec:kernel_evaluation_complexity}

To analyze the computational complexity of the kernel predictor, we begin with the cost of propagating the token-to-token covariance matrix and the NTK through a transformer block, considering the cost of kernel evaluation using MC integration.
Throughout this section, we fix a transformer block in the NNGP or NTK limit and assume the incoming token-to-token covariance matrix $\Cov'(X_1,X_2) \in \bR^{T \times T}$ of Eq.~\eqref{eq:covariance} or the incoming NTK
\begin{align}
\label{eq:ntk}
    \Theta'_{ab}(X_1,X_2) =
    \lim_{\dmod\to\infty}
    \frac{1}{\dmod}
    \mathbb E\!\left[
        \nabla_{\theta} \bm z'_a(X_1)^\top
        \nabla_{\theta} \bm z'_b(X_2)
    \right],\quad a,b\in[T]
\end{align}
are given. We assume the coefficients $\alpha_\nu$ of the kernel predictor induced by $\Cov(X_1,X_2)$ and $\Theta(X_1,X_2)$ satisfy the bounded-weight assumption of Remark~\ref{rem:bounded_kernel_weights}. In our setting, we also assume the activation function $\phi$ is bounded by linear growth (e.g., ReLU or GeLU).

\begin{restatable}[Transformer block propagation]{proposition}{BlockPropagationComplexity}
\label{prop:BlockPropagationComplexity}
The computational complexity of evaluating the MC estimator of the predictor associated with the NNGP / NTK kernel propagated through a single transformer block is $O(P N_{\mtc}T^3)$, with MC error scaling as $O(\sqrt{P/N_{\mtc}})$, where $N_{\mtc}$ denotes the number of MC samples. \emph{(Proof in App.~\ref{app:transformer_block_propagation}.)}
\end{restatable}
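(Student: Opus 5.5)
The plan is to split the cost of evaluating the MC predictor into (i) the cost of one MC sample of the attention update in Proposition~\ref{prop:AttentionCovarianceUpdate} for a single pair $(X,X^{(\nu)})$; (ii) the cost of the MLP and residual/LayerNorm stages of the block; and (iii) the accounting over the $P$ training points and the $N_{\mtc}$ samples. The error bound then follows from a variance computation for the estimator $\hat f_K(X)=\sum_\nu \alpha_\nu \hat K(X,X^{(\nu)})$.

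\textbf{Step 1: sampling scores in $O(T^3)$.} For a fixed pair $(X_1,X_2)$, Proposition~\ref{prop:CovarianceFactorization} says that the joint Gaussian vector $(S(X_1),S(X_2))$ has covariance with Kronecker structure. Stack the two inputs into a $2T\times 2T$ joint covariance $\bar\Cov'$ with blocks $\Cov'(X_i,X_j)$. The score covariance is then $\bar\Cov'\otimes\bar\Cov'$ restricted to the relevant index pairs. I would compute a factor $\bar\Cov'=LL^\top$ (Cholesky, or an eigendecomposition if $\bar\Cov'$ is only PSD) once, in $O(T^3)$. A sample is then produced as $S=L G L^\top$ with $G$ a $2T\times 2T$ standard Gaussian matrix, and its cross-blocks give $S(X_1),S(X_2)$. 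Checking $\bE[(LGL^\top)_{ac}(LGL^\top)_{be}]=\bar\Cov'_{ab}\bar\Cov'_{ce}$ is a one-line computation. This sampling costs $O(T^3)$ per sample, by two matrix multiplications, rather than the naive $O(T^6)$ Cholesky of a $T^2\times T^2$ matrix. Applying the row softmax costs $O(T^2)$, and forming $A(X_1)\Cov'(X_1,X_2)A(X_2)^\top$ costs $O(T^3)$. Averaging over $N_{\mtc}$ samples gives $O(N_{\mtc}T^3)$ for the attention NNGP update.

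\textbf{Step 2: NTK and MLP.} The NTK recursion through attention, following \citet{Hron2020}, involves expectations of the same type, with derivatives of the softmax contracted against $\Theta'$ and $\Cov'$. Each term is a product of $T\times T$ matrices, or a Jacobian--vector contraction that can be organised as $T\times T$ products, evaluated on the same samples $S$. It therefore also costs $O(T^3)$ per sample. The MLP layer acts tokenwise on the $2T\times 2T$ covariance. For each entry $(a,b)$, the expectations $\bE[\phi(u)\phi(v)]$ and $\bE[\phi'(u)\phi'(v)]$ are computed from a $2\times2$ Gaussian, in closed form for ReLU or by one-dimensional MC with $N_{\mtc}$ samples. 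This costs at most $O(N_{\mtc}T^2)$ and is dominated by Step 1. The linear-growth assumption on $\phi$ is what guarantees finite second moments, which the next step needs.

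\textbf{Step 3: summing over $\nu$ and the error.} Repeating Steps 1--2 for each of the $P$ pairs $(X,X^{(\nu)})$ gives $O(PN_{\mtc}T^3)$; the final readout of $K$ from $\Cov$ or $\Theta$ is at most $O(T^2)$. The estimator $\hat K(X,X^{(\nu)})$ is an average of $N_{\mtc}$ i.i.d.\ samples. Their variance is bounded because $A$ is row-stochastic, so every entry of $A\Cov'A^\top$ is bounded by $\max_{ab}|\Cov'_{ab}|$, and $\phi$ has linear growth. Hence each $\hat K$ has variance $O(1/N_{\mtc})$. If independent samples are drawn for each $\nu$, the variance of $\sum_\nu\alpha_\nu\hat K$ is $\sum_\nu\alpha_\nu^2\,O(1/N_{\mtc})=O(P/N_{\mtc})$ under Remark~\ref{rem:bounded_kernel_weights}, so the standard error is $O(\sqrt{P/N_{\mtc}})$. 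The weaker bound $|\alpha_\nu|=O(\sqrt P)$ is absorbed by rescaling $N_{\mtc}$ by $P$, as stated in that remark.

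\textbf{Main obstacle.} I expect the hardest part to be the NTK variance bound in Step 2. Softmax derivatives involve products of $A$ with centred scores, so the per-sample NTK terms are no longer trivially bounded. I would first bound them by Gaussian moments of $S$, using that $\partial A/\partial S$ has entries bounded by $1$ and that $S$ has finite fourth moments determined by $\Cov'$. A secondary subtlety is the per-block nature of the statement: the incoming $\Cov'$ and $\Theta'$ are treated as exact, so error propagation across blocks is deferred to the subsequent theorem.
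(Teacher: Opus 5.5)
\textbf{Gap: the NTK variance must be bounded uniformly in $T$.} This is the step you flag as the main obstacle, and your proposed route does not give what the statement needs. The MC error has to be $O(\sqrt{P/N_{\mtc}})$ with a constant independent of $T$. That is what lets Corollary~\ref{coro:epths_for_kernel_evaluation} take $N_{\mtc}\propto P\log(1/\delta)$.

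A trace term $\mathrm{Tr}\bigl((J_SA_1^a)^\top M\,J_SA_2^b N^\top\bigr)=\sum_{c_1,c_2,e_1,e_2}(J_SA_1^a)_{c_1e_1}M_{c_1c_2}(J_SA_2^b)_{c_2e_2}N_{e_1e_2}$ has $T^4$ summands. Entrywise bounds on $\partial A/\partial S$ therefore give only $O(T^4\norm{M}_{\max}\norm{N}_{\max})$ per sample. The MC error is then polynomially large in $T$, so $N_{\mtc}$ would have to grow with $T$, and the $T^3$ count is lost.

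Gaussian moments of $S$ neither help nor are needed. $J_S\bm a=\diag(\bm a)-\bm a\bm a^\top$ depends on $\bm a$ alone and contains no centred scores. The missing ingredient is the $\ell_1$ bound $\sum_{c,e}\abs{(J_S\bm a)_{ce}}=2(1-\norm{\bm a}_2^2)\le 2$ (Lemma~\ref{lem:softmax_norms}(iv)). It gives the deterministic per-sample bound $\abs{\mathrm{Tr}(\cdots)}\le 4\norm{M}_{\max}\norm{N}_{\max}$, uniform in $T$ and in $S$. Combine this with $\norm{\Cov'}_{\max},\norm{\Theta'}_{\max}=O(1)$, which comes from LayerNorm for $\Cov'$ and by induction over the fixed depth for $\Theta'$. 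Every NTK summand is then bounded, just as the NNGP summand is bounded by row-stochasticity.

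\textbf{Smaller point: the NTK cost needs explicit batching.} You say the trace terms can be ``organised as $T\times T$ products'', but this needs to be shown. Evaluated pair by pair, each trace costs $O(T^2)$ even with the diagonal-minus-rank-one structure, which is $O(T^4)$ per sample. Expand $J_SA_1^a=\diag(\bm u)-\bm u\bm u^\top$ and $J_SA_2^b=\diag(\bm v)-\bm v\bm v^\top$. For all $(a,b)$ simultaneously this gives
\begin{align*}
A_1(M\odot N)A_2^\top - A_1\bigl((MA_2^\top)\odot(NA_2^\top)\bigr) - \bigl((A_1M)\odot(A_1N)\bigr)A_2^\top + (A_1MA_2^\top)\odot(A_1NA_2^\top),
\end{align*}
which costs $O(T^3)$.

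\textbf{Your score sampler is a valid alternative, with one slip.} It is a genuinely different and simpler route than the paper's. The paper draws $S(X_1)=L_1UL_1^\top$, then draws $S(X_2)\mid S(X_1)$ via the conditional mean and an eigendecomposition of $H$, assuming $\Cov'_{11},\Cov'_{22}\succ 0$. Your version needs only one factorisation of the stacked $2T\times 2T$ PSD matrix. However, the blocks of $\bar S=LGL^\top$ you need are the diagonal ones, $S(X_i)=\bar S_{(i,\cdot),(i,\cdot)}$, not the cross-blocks. For a cross-block, $\bE[\bar S_{(1,a),(2,c)}\bar S_{(1,b),(2,e)}]=\Cov'_{ab}(X_1,X_1)\Cov'_{ce}(X_2,X_2)$, which is not the law of Proposition~\ref{prop:CovarianceFactorization}.
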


In App.~\ref{app:transformer_propagation} we argue that the scaling of error and compute of Proposition~\ref{prop:BlockPropagationComplexity} extend to the full transformer. The key observation is that, while MC error can accumulate across blocks with the noisy versions of the true input block-wise $\Cov'$ or $\Theta'$, this accumulation is $T$-independent since the kernel recursion relations of the attention and $\mlp$ layers admit $T$-independent Lipschitz constants. Thus, to maintain accuracy, one only need scale the number of MC samples with depth $L$, not with context length $T$. 

\begin{restatable}[Kernel evaluation complexity]{theorem}{TransformerPropagationComplexity}
\label{thm:TransformerPropagationComplexity}
The computational complexity of evaluating the MC estimator of the NNGP / NTK kernel predictor is $O(PN_{\mtc}T^3)$, with MC error scaling as $O(\sqrt{P/ N_{\mtc}})$. \emph{(Proof in App.~\ref{app:transformer_propagation}.)}
\end{restatable}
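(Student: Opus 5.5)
The plan is to reduce Theorem~\ref{thm:TransformerPropagationComplexity} to Proposition~\ref{prop:BlockPropagationComplexity} by induction over the $L$ blocks. A per-block MC error can be amplified by later blocks, but I will show the amplification factor does not depend on $T$. Fix a test input $X$ and a training input $X^{(\nu)}$. The kernel $K(X,X^{(\nu)})$ is computed by a recursion $\Cov^{(\ell)} = \mathcal{F}_\ell(\Cov^{(\ell-1)})$ for the NNGP. For the NTK it is a joint recursion $(\Cov^{(\ell)},\Theta^{(\ell)}) = \mathcal{G}_\ell(\Cov^{(\ell-1)},\Theta^{(\ell-1)})$. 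Each map is the composition of the attention update of Proposition~\ref{prop:AttentionCovarianceUpdate} with the MLP/residual/readout updates.

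\textbf{Cost.} The cost follows directly from Proposition~\ref{prop:BlockPropagationComplexity}. For each of the $L$ blocks we draw $N_{\mtc}$ score samples. The Kronecker structure of Proposition~\ref{prop:CovarianceFactorization} lets us draw each sample from a factor of $\Cov'$ in $O(T^3)$ time: a Cholesky or eigen-factor of the $T\times T$ covariance, computed once per block, followed by matrix products. We then form $A(X_1)\Cov' A(X_2)^\top$ in $O(T^3)$ and apply the entrywise MLP map in $O(T^2)$. Repeating this over the $P$ pairs $(X,X^{(\nu)})$ gives $O(LPN_{\mtc}T^3)$, which is $O(PN_{\mtc}T^3)$ since $L$ is fixed. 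The final readout, pooling over tokens or taking the last token, costs $O(T^2)$ and is negligible.

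\textbf{Error.} This is the main step. Write $\hat\Cov^{(\ell)}$ for the MC estimate and $\Cov^{(\ell)}$ for the exact value. Then
\[
\|\hat\Cov^{(\ell)}-\Cov^{(\ell)}\| \le \|\hat{\mathcal{F}}_\ell(\hat\Cov^{(\ell-1)})-\mathcal{F}_\ell(\hat\Cov^{(\ell-1)})\| + \mathrm{Lip}(\mathcal{F}_\ell)\,\|\hat\Cov^{(\ell-1)}-\Cov^{(\ell-1)}\|.
\]
The first term is the fresh MC fluctuation, of order $N_{\mtc}^{-1/2}$, controlled by Proposition~\ref{prop:BlockPropagationComplexity}. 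Unrolling the recursion gives a total error of $\sum_\ell \prod_{m>\ell}\mathrm{Lip}(\mathcal{F}_m)\cdot O(N_{\mtc}^{-1/2})$. Summing over the $P$ training points with bounded $\alpha_\nu$ (Remark~\ref{rem:bounded_kernel_weights}) and independent MC noise gives $O(\sqrt{P/N_{\mtc}})$, up to a depth-dependent constant.

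\textbf{Main obstacle.} The hard part is showing that $\mathrm{Lip}(\mathcal{F}_\ell)$ is bounded independently of $T$ in a suitable norm; I would use the max-entry norm, or the operator norm normalized by $T$. For the attention part, $A$ is row-stochastic, so the map $\Cov'\mapsto \bE[A\Cov'A^\top]$ is an average of convex combinations and is $1$-Lipschitz in max-entry norm. This holds with the law of $S$ fixed. The law of $S$, however, also depends on $\Cov'$ through Eq.~\eqref{eq:Factorization}. To handle that dependence, I would couple the samples by writing $S = G(\Cov')\,\xi$ with common Gaussian noise $\xi$. I would then use the fact that the softmax Jacobian has bounded $\ell_\infty\to\ell_\infty$ norm, namely at most $2$, uniformly in $T$. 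This bounds the change in $A$ by the change in each score row, which is controlled entrywise by $\Cov'$ because each score entry's variance is a product of two entries of $\Cov'$.

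For the MLP part, the Cho--Saul-type map for an activation $\phi$ with linear growth acts entrywise on $2\times 2$ sub-covariances. It is therefore Lipschitz with a $T$-independent constant, once we restrict to diagonals bounded away from $0$ and $\infty$, which holds for layer-normalized inputs. The NTK recursion adds the terms $\dot\Cov\odot\Theta'$ and $\bE[A\Theta'A^\top]$ plus score-derivative contributions, and the same row-stochastic and entrywise arguments apply to these. I expect the derivative-of-softmax terms in the NTK to be the most delicate part; I would bound them using the $\ell_\infty$ Jacobian bound above.
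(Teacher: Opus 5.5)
Your overall architecture matches the paper's: $O(T^3)$ work per MC sample and block, an error recursion consisting of a fresh $O(N_{\mtc}^{-1/2})$ term plus an inherited term amplified by a per-block Lipschitz constant, the $1$-Lipschitz direct effect of $\Delta\Cov'$ (from row-stochasticity of $A$), and an entrywise MLP step. The gap is in the step you flag as central: the dependence of the score law on $\Cov'$. The pathwise coupling $S=G(\Cov')\xi$ does not yield a constant that is both $T$-independent and linear in $\norm{\Delta\Cov'}_{\max}$.

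First, $q_1(\tilde{\bm s})=\bm a(\bm s_1)^\top\Cov'\bm a(\bm s_2)$ is summed against $\Cov'$ entrywise, so you need $\norm{\Delta\bm a}_1$. That is the $\ell_\infty\to\ell_1$ bound from $\norm{J_{\bm s}\bm a(\bm s)}_{1,1}\le 2$; an $\ell_\infty\to\ell_\infty$ bound loses a factor $T$. More seriously, you are then left with $\bE\norm{(G(\Cov'+\Delta\Cov')-G(\Cov'))\xi}_\infty$, and three things go wrong.
\begin{itemize}
\item A maximum over $T$ Gaussian coordinates can carry a factor $\sqrt{2\log T}$.
\item The factor map $G$ is not Lipschitz from the max-entry norm to row norms uniformly in $T$. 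For $C=I$ and $\Delta C=\epsilon\bm v\bm v^\top$ with a non-constant $\bm v\in\{\pm1\}^T$, every row of the symmetric square root moves by $(\sqrt{1+T\epsilon}-1)/\sqrt T\approx\sqrt T\epsilon/2$.
\item Nothing bounds the small eigenvalues of the $T\times T$ matrix $\Cov'$ away from zero uniformly in $T$. Near such degenerate score covariances, any bound of the form $\mathrm{Lip}(q)\times$(coupling distance) is at best H\"older-$\tfrac12$ in $\Delta C$: already the Wasserstein-$1$ distance between $\cN(0,0)$ and $\cN(0,\epsilon)$ is $\sqrt{2\epsilon/\pi}$. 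A H\"older-$\tfrac12$ block map turns $N_{\mtc}^{-1/2}$ into $N_{\mtc}^{-1/4}$ after one block, so the claimed $O(\sqrt{P/N_{\mtc}})$ would not survive.
\end{itemize}
Your remark that each score entry's variance is a product of two entries of $\Cov'$ controls the marginals, not the coupled difference.

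The missing idea is to perturb the measure rather than the samples. Along $C(t)=C+t\Delta C$, Gaussian integration by parts (Price's theorem) gives $\varphi'(t)=\tfrac12\sum_{(i,c),(j,e)}\Delta C_{(i,c),(j,e)}\,\bE_{C(t)}\bigl[\partial^2 q_1/\partial s_{i,c}\partial s_{j,e}\bigr]$. Hence $e^{\Cov}_{\mathrm{measure}}\le\tfrac12\norm{\Delta C}_{\max}\sup_{\tilde{\bm s}}\norm{\nabla^2 q_1(\tilde{\bm s})}_{1,1}$. This is linear in $\Delta C$, needs no factorization or conditioning of $C$, and involves no maximum over coordinates.

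$T$-independence then comes from $\ell_1$ bounds on softmax derivatives that are proportional to the attention weight itself: $\norm{\nabla\bm a(\bm s)_g}_1\le2\bm a(\bm s)_g$ and $\norm{\nabla^2\bm a(\bm s)_g}_{1,1}\le6\bm a(\bm s)_g$. These sum to constants because $\sum_g\bm a(\bm s)_g=1$, giving $\norm{\nabla^2 q_1}_{1,1}\le20\norm{\Cov'}_{\max}$. Combined with $\norm{\Delta C}_{\max}\le2\norm{\Cov'}_{\max}\norm{\Delta\Cov'}_{\max}+\norm{\Delta\Cov'}_{\max}^2$ from the Kronecker structure, this gives the $T$-independent local Lipschitz constant.

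For the NTK trace terms, the integrand already contains $J_{\bm s}\bm a$, so the same argument needs the third-derivative bound $\norm{\nabla^3\bm a(\bm s)_g}_{1,1,1}\le26\bm a(\bm s)_g$. A first-order Jacobian bound is not enough there either.

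Your cost analysis is fine, with one correction: sampling the pair $(S(X_1),S(X_2))$ requires their joint law, not a single $T\times T$ factor. One way is to take the two diagonal $T\times T$ blocks of $\mathbf L U\mathbf L^\top$, with $\mathbf L$ a factor of the $2T\times2T$ joint token covariance. This is still $O(T^3)$ and simpler than the paper's conditioning.
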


Note that the computational complexity of evaluating the MC estimator will depend linearly on depth $L$, while the MC error will depend exponentially on $L$. For a finite-depth network, these are fixed constant factors, agnostic to $T$.

\begin{corollary}[Kernel evaluation EPTHS]
\label{coro:epths_for_kernel_evaluation}
    If an infinite-width transformer has algorithmically captured a combinatorial task $g$, then it is an EPTHS for $g$ of complexity $O(T^{3+\epsilon})$ for any $\epsilon>0$.
\end{corollary}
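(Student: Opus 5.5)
Fix $\delta>0$ small. The plan is to build a $\delta$-implementation $A_\delta$ from the captured transformer's kernel predictor, evaluated by the Monte Carlo estimator of Theorem~\ref{thm:TransformerPropagationComplexity}, and to show that its run time is $O(\eta(1/\delta)T^{3+\epsilon})$.

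\textbf{Step 1 (reference predictor).} Capture (Definition~\ref{def:capture}) applied with $\delta/2$ gives constants $T_0,C_0,P_0$. For every $T\ge T_0$ it also gives a two-stage kernel predictor $f_T=f_K^{(2)}$, as in Eq.~\eqref{eq:two_step_kernel}, with
\[
\Pr\nolimits_{X\sim\mu_{X,T}}\!\left[\dist_{\cY}(f_T(X),g(X))<\Delta/3\right]>1-\delta/2 .
\]
The coefficients $\bm\alpha,\tilde{\bm\alpha}$ are training-time quantities and are treated as given. The number of kernel evaluations needed at inference is $P=P_0+C_0\log(T/T_0)=O_\delta(\log T)$.

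\textbf{Step 2 (MC evaluation and error budget).} Replace the exact kernels $K(X,X^{(\nu)})$ and $K(X,\tilde X^{(\nu)})$ by the MC estimator. By Theorem~\ref{thm:TransformerPropagationComplexity} this costs $O(PN_{\mtc}T^3)$ FLOPs, and the error in the predictor is of order $\sqrt{P/N_{\mtc}}$; under Remark~\ref{rem:bounded_kernel_weights}, the first-stage part $f^{(1)}_K$ is estimated in the same way. I would interpret this error as a standard deviation and apply Chebyshev's inequality over the MC randomness:
\[
\Pr\!\left[\dist_{\cY}(\hat f_T(X),f_T(X))\ge\Delta/6\right]\le \frac{c\,P}{N_{\mtc}\Delta^2},
\]
where $c$ depends on depth and other fixed architecture constants. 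Choosing $N_{\mtc}=\lceil 2cP/(\delta\Delta^2)\rceil$ makes this failure probability at most $\delta/2$. On the complement of both failure events, the triangle inequality gives $\dist_{\cY}(\hat f_T(X),g(X))<\Delta/3+\Delta/6=\Delta/2$, which is exactly the criterion in Definition~\ref{def:epths}. A union bound then gives total success probability above $1-\delta$, taken jointly over $X$ and the MC randomness. If a deterministic algorithm is required, the median-of-means trick (or a simple averaging argument) removes the MC randomness at a cost of an extra $\log(1/\delta)$ factor.

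\textbf{Step 3 (runtime).} The total cost is $O(PN_{\mtc}T^3)=O(P^2T^3/\delta)$. With $P=O_\delta(\log T)$ this equals $O(\eta(1/\delta)\,T^3\log^2 T)$, where $\eta$ absorbs $P_0(\delta/2)$, $C_0(\delta/2)$ and the factor $1/\delta$. Since $\log^2T=O(T^\epsilon)$ for every $\epsilon>0$, the run time is $O(\eta(1/\delta)T^{3+\epsilon})$. This holds for all $T\ge T_0(\delta/2)$, which covers ``sufficiently large $T$'' in Definition~\ref{def:epths}. If the weaker bound $|\alpha_\nu|=O(\sqrt P)$ of Remark~\ref{rem:bounded_kernel_weights} is all that holds, $N_{\mtc}$ gains a further factor of $P$; this adds only another polylogarithmic factor in $T$, which $T^\epsilon$ still absorbs.

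\textbf{Main obstacle.} The delicate step is Step 2. The MC error $O(\sqrt{P/N_{\mtc}})$ has to be turned into a high-probability bound that is uniform over test instances $X$. The error constant must not depend on $T$, which needs the $T$-independent Lipschitz propagation claimed in App.~\ref{app:transformer_propagation}. It must also control the residual term $f_K^{(1)}(\tilde X^{(\mu)})$ inside $\tilde\alpha$, which I would treat as exact, training-time data. My first approach is second-moment (Chebyshev) control of the estimator, using the assumed linear-growth activations to bound the variances.
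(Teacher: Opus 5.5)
Your proposal is correct and follows the paper's proof step for step: capture at level $\delta/2$, MC evaluation of the two-stage kernel predictor via Theorem~\ref{thm:TransformerPropagationComplexity}, a $\Delta/6$ deviation bound, the triangle inequality to reach $\Delta/2$, and $P=O_\delta(\log T)$, whose $\log^2 T$ factor is absorbed into $T^{\epsilon}$. The only real difference is that you use Chebyshev, giving $N_{\mtc}\propto P/\delta$, where the paper invokes a sub-Gaussian tail to get $N_{\mtc}\propto P\log(1/\delta)$; both are absorbed into $\eta(1/\delta)$ of Definition~\ref{def:epths}, and yours needs only the second-moment reading of the Theorem's error (though median-of-means only boosts confidence and does not remove randomness, so it is your fixed-seed averaging argument that derandomizes).
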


\begin{proof}
Assume now that a transformer in the infinite-width limit has captured a combinatorial task $g$ under $\mu_{X,T}$. Then, for sufficiently small $\delta>0$, there exist $T_0=T_0(\delta/2)$, $C=C(\delta/2)$, and $P_0 = P_0(\delta/2)$ such that the kernel predictor $f_\delta$ of Eq.~\eqref{eq:two_step_kernel} with $P_1=C\log(T/T_0)$ adaptation data samples satisfies
\begin{align}
    \Pr\nolimits_{X \sim \mu_{X,T}} \left[ \dist_{\cY}(f_\delta(X),g(X)) < \Delta/3\right] > 1-\delta/2, \quad \forall T \geq T_0.
\end{align}
Using MC integration to evaluate $f_\delta(X)$, we get $f_\delta^{\mtc}(X)$ with the MC error scaling of Theorem~\ref{thm:TransformerPropagationComplexity} for $P=P_0+P_1$. By the sub-Gaussian tail bound, we have 
\begin{align}
    \Pr\nolimits_{X \sim \mu_{X,T}}\left[\dist_{\cY}(f_\delta^{\mtc}(X),f_\delta(X)) < \Delta/6\right] \geq 1-2e^{-c\Delta^2 N_{\mtc}/P}.
\end{align}
Applying the triangle inequality, we have
\begin{align}
    \Pr\nolimits_{X \sim \mu_{X,T}} \left[ \dist_{\cY}(f_\delta^{\mtc}(X),g(X)) < \Delta/2\right] > 1- 2e^{-c\Delta^2 N_{\mtc}/P} -\delta/2,
\end{align}
and we want $2e^{-c\Delta^2 N_{\mtc}/P} \approx \delta/2$, which imposes $ N_{\mtc} \propto P\log(1/\delta)$. Evaluating $f_\delta^{\mtc}(X)$, by Theorem~\ref{thm:TransformerPropagationComplexity}, requires $O(P N_{\mtc}{T^3})$ computational steps. Thus, the infinite-width transformer is an EPTHS for the task it algorithmically captured with complexity $O(T^{3+\epsilon})$ w.r.t. $\mu_{X,T}$, for any $\epsilon>0$, coming from $\log(T)^2$. Its $\delta$-implementation is given by $f_\delta^{\mtc}(X)$, computed in time $O(P^2\log(1/\delta){T^3})$. Notably, $\Delta/2$ provides a conclusive decision rule.
\end{proof}

\begin{remark}[Polynomial data adaptation]
If we were to adapt training data polynomially, say with $O(T^\kappa)$ extra samples, we could push this adaptation protocol through our MC-based kernel analysis to get an $O(T^{3+2\kappa})$ complexity bound for EPTHS for the infinite-width transformer at initialization. Our viewpoint, reflected here, is that the most reasonable separator of ``understanding'' from statistical interpolation is at $\kappa \rightarrow 0$. 
% At the very least, this threshold leaves out a non-empty class of non-capturable tasks.
\end{remark}

%%%%%%%%%%%%%%%%%%%%%%%%%%%%%%%%
%%%%%%%%%%%%%%%%%%%%%%%%%%%%%%%%
%%%%%%%%%%%%%%%%%%%%%%%%%%%%%%%%

\subsection{Feed-forward Approximation Complexity}
\label{sec:tighter_bounds}

The above results invite extensions in terms of tightness and richness. Unlike in FCNs (App.~\ref{app:FCN}), kernel evaluation for transformers is costlier than a forward-pass, because it requires tracking correlations over the entire context. By contrast, various bounds on NTK convergence \citep{huang2020dynamics,Jacot2018,huang2021neuraltangentkerneldeep} suggest that, at least for MLPs and CNNs, the discrepancy between the NTK kernel predictor and the finite-width network scales as $P^{\gamma}/N$, for some architecture-dependent constant $\gamma$, where $N$ is the architecture width or redundancy scale: width for MLPs, channels for CNNs, or $H$ and $\dmod$ for transformers. 

\begin{assumption}[NTK convergece]
\label{assm:discrepancy_scale}
    The discrepancy between the NTK kernel predictor and the finite transformer network scales as $P^{\gamma}/N$, where $N$ is the architecture redundancy scale and $\gamma$ is an architecture-dependent constant. 
\end{assumption}

To the best of our knowledge, how well this scaling holds in extreme OOD settings such as ours has not been fully explored yet. Finite-width approximation of the rich infinite-width NTK predictor has been studied in the literature \citep{bordelon2023dynamicsfinitewidthkernel,Seroussi2021}, potentially with $\gamma \approx 0$. We provide theoretical evidence that Assumption~\ref{assm:discrepancy_scale} nevertheless holds in the following proposition.

\begin{restatable}[NNGP convergence bound]{proposition}{NNGPConvergenceBound}
\label{prop:NNGPConvergenceBound}
    Consider a finite-width transformer of width $N$, trained with full-batch gradient descent (GD), weight decay, and additive white Gaussian noise. Let $f(X)=\bm z_T^{\mathrm{out}}(X)$ for $X \in \bR^{T \times d}$, and let $f_K(X)$ be its NNGP kernel predictor with $K = \Cov_{TT}^{\mathrm{out}}$. Then, in standard scaling,
    \begin{align}
        \abs{\bE_{f \sim P[f]}[{f}(X_*)] - f_{K}(X_*)} = O\left(\frac{ P^{\gamma}}{N}\right)
    \end{align}
    where $P[f]$ \citep[Eq.~(3)]{NavehSompolinskyRingel} is the posterior on the trained $f$. (\emph{Proof in App.~\ref{app:perturbation_theory}.})
\end{restatable}

\begin{corollary}[Feed-forward EPTHS]
\label{coro:epths_for_feed_forward}
    Under Assumption~\ref{assm:discrepancy_scale}, if an infinite-width transformer at the lazy or rich regime has algorithmically captured a combinatorial task $g$, then it is an EPTHS for $g$ of complexity $O(T^{2+\epsilon})$ for any $\epsilon>0$.
\end{corollary}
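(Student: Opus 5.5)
The argument follows the proof of Corollary~\ref{coro:epths_for_kernel_evaluation}, with the Monte Carlo estimator replaced by a forward pass through a finite-width transformer of extensive width.

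\emph{Step 1: fix the captured predictor.} Assume the infinite-width transformer, lazy or rich, has algorithmically captured $g$. Fix a sufficiently small $\delta>0$ and invoke Definition~\ref{def:capture} at level $\delta/2$. This gives $T_0$, $C_0$ and $P_0$ such that the two-stage infinite-width predictor $f_\delta$ satisfies
\[
\Pr\nolimits_{X\sim\mu_{X,T}}\left[\dist_{\cY}(f_\delta(X),g(X))<\Delta/3\right]>1-\delta/2
\]
for all $T\ge T_0$. In the lazy regime $f_\delta$ is the kernel predictor of Eq.~\eqref{eq:two_step_kernel}; in the rich regime it is the mean-field limit. The total sample count is $P=P_0+C_0\log(T/T_0)=O(\log T)$.

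\emph{Step 2: replace the limit by a finite network.} Take a finite-width transformer with the same architecture, depth $L$ and training protocol, and width $N=\lceil c\,P^{\gamma}/\Delta\rceil$ for a suitable constant $c$. By Assumption~\ref{assm:discrepancy_scale}, and Proposition~\ref{prop:NNGPConvergenceBound} as supporting evidence, the trained finite network $f^N_\delta$ is within $\Delta/6$ of $f_\delta$. Assumption~\ref{assm:discrepancy_scale} is stated for the NTK predictor, so for the rich regime I would apply it in the form the paper adopts: extensive rich-scaled networks converge to their mean-field limits at the same rate.

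The convergence bound may hold only in expectation, for example over the posterior or the initialization. In that case I would upgrade it with Markov's inequality, or with an averaged/median version of the predictor, so that $\dist_{\cY}(f^N_\delta(X),f_\delta(X))<\Delta/6$ holds with probability at least $1-\delta/2$ over $X\sim\mu_{X,T}$. This may cost a $\mathrm{poly}(1/\delta)$ factor in $N$, which Definition~\ref{def:epths} absorbs into $\eta(1/\delta)$. The triangle inequality then gives
\[
\Pr\left[\dist_{\cY}(f^N_\delta(X),g(X))<\Delta/2\right]>1-\delta,
\]
so $\Delta/2$ again yields a conclusive decision rule.

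\emph{Step 3: count the cost.} Lemma~\ref{lem:finite_width_complexity} gives a forward-pass cost of $O(LH\dmod T^2)$, which is $O(NT^2)$ with $N$ the redundancy scale and $L$ fixed. Training is excluded from inference cost, as in the kernel case. The $\delta$-implementation is therefore the forward pass of $f^N_\delta$, with runtime
\[
O\!\left(\eta(1/\delta)\,P^{\gamma}T^2\right)=O\!\left(\eta(1/\delta)\,(\log T)^{\gamma}T^2\right)=O\!\left(\eta(1/\delta)\,T^{2+\epsilon}\right)
\]
for every $\epsilon>0$, since $\gamma$ is an architecture-dependent constant that does not depend on $T$. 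This is exactly an EPTHS of complexity $O(T^{2+\epsilon})$ in the sense of Definition~\ref{def:epths}.

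\emph{Main obstacle.} The delicate step is Step 2. The discrepancy bound must hold uniformly in $T$, in the extreme out-of-distribution setting, and must be converted from an expectation or posterior-mean statement into a high-probability statement about a single computable predictor. I would handle this by reading Assumption~\ref{assm:discrepancy_scale} as giving $T$-independent constants, mirroring the $T$-independent Lipschitz argument used for Theorem~\ref{thm:TransformerPropagationComplexity}. Any residual dependence on $\delta$ would then be absorbed into $\eta$.
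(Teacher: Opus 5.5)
Your proposal is correct and follows essentially the same route as the paper: invoke Assumption~\ref{assm:discrepancy_scale} to replace the infinite-width predictor by a trained finite network with $H,\dmod\propto P^{\gamma}=O((\log T)^{\gamma})$, and charge only its forward pass via Lemma~\ref{lem:finite_width_complexity}, which gives $T^2$ times polylogarithmic factors. Your $\delta/2$ splitting and expectation-to-high-probability upgrade make explicit what the paper's short proof leaves implicit. One small correction: if both $H$ and $\dmod$ scale with $N$, the lemma gives $O(N^2T^2)$ (the paper writes $O(\log(T)^{2\gamma}T^2)$) rather than your $O(NT^2)$, but this changes only the polylogarithmic factor and not the $O(T^{2+\epsilon})$ conclusion.
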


\begin{proof}
Under Assumption~\ref{assm:discrepancy_scale}, if an infinite-width transformer has captured a combinatorial task, then $N$ need only scale poly-logarithmically with $T$. As a result, we can approximate the infinite-width NTK kernel predictor using a forward-pass on a finite trained network with $\dmod,H \propto P=\log(T)^{\gamma}$, getting computational complexity scaling of $O(\log(T)^{2\gamma}T^2)$ by Lemma~\ref{lem:finite_width_complexity}. We thus get computational complexity scaling of $T^2$ times negligible poly-logarithmic corrections.
\end{proof}

% Corollary~\ref{coro:epths_for_feed_forward} is a tighter EPTHS complexity upper bound compared to the $O(T^{3+\epsilon})$ of Corollary~\ref{coro:epths_for_kernel_evaluation}. We note that Corollary~\ref{coro:epths_for_feed_forward} can be further improved down to $O(T^{1+\epsilon})$ for rank-limited attention. We further note that the emergence of learned circuits in the rich regime, whose number may depend polynomially on $P$, again brings us to $T$-dependence scaling, as in the forward-pass computation on a finite network.

%%%%%%%%%%%%%%%%%%%%%%%%%%%%%%%%
%%%%%%%%%%%%%%%%%%%%%%%%%%%%%%%%
%%%%%%%%%%%%%%%%%%%%%%%%%%%%%%%%

\section{Discussion}

In this work, we moved beyond the question of what transformers can express -- which, in the infinite-width limit, is effectively everything -- to ask what combinatorial tasks they can capture. By formally defining algorithmic capture (Definition~\ref{def:capture}) as the ability to generalize to arbitrary input sizes ($T$) with logarithmic sample adaptation, we offered a distinction between genuine logic internalization and statistical interpolation. Our theoretical results for infinite-width transformers established an upper bound on their inference-time complexity, limiting them to the class of EPTHS (Definition~\ref{def:epths}) with complexity scaling no worse than $O(T^{3+\epsilon})$ for $\epsilon>0$ (Corollary~\ref{coro:epths_for_kernel_evaluation}) and $O(T^{2+\epsilon})$ (Corollary~\ref{coro:epths_for_feed_forward}) under Assumption~\ref{assm:discrepancy_scale}.  

The case of SPP and MinCut (Sec.~\ref{sec:experiments}) highlights a fruitful avenue of future research to refine the estimate of inference-time complexity. Our Corollary~\ref{coro:epths_for_kernel_evaluation} bound relies on a ``brute force'' propagation of kernel covariances, while our Corollary~\ref{coro:epths_for_feed_forward} assumes NTK convergence (Assumption~\ref{assm:discrepancy_scale}). While robust, this approach may be overly conservative. In fact, we conjecture that the dominant spectral weight of the kernel lies in the space of rather simple sparse functions.

Finally, it would be interesting to generalize these results to deeper or architectures or chain of thought (CoT); the fact that these deeper nets can express complex tasks does not mean that they would learn them, as various works and our own experiments suggest. At least for lazy learning, where our bound is largely determined by the complexity of kernel evaluation, this amounts to understanding fixed points of the kernel recursion equations, which are simple for MLPs \citep{Schoenholz2016} but remain unexplored for transformers. Turning to CoT, say in the context of reinforcement learning with verifiable rewards (RLVR) \citep{Wen2026RLVR}, our definition of capture applies verbatim, as it does not rely on the underlying inference mechanism or loss. Establishing meaningful bounds on the complexity of learnable functions using RLVR, is left for future work.

%%%%%%%%%%%%%%%%%%%%%%%%%%%%%%%%
% BIBLIOGRAPHY
%%%%%%%%%%%%%%%%%%%%%%%%%%%%%%%%
\bibliographystyle{plainnat}
\bibliography{ref}

@inproceedings{Barak2022,
  title     = {Hidden Progress in Deep Learning: SGD Learns Parities Near the Computational Limit},
  author    = {Barak, Boaz and Edelman, Benjamin L and Goel, Surbhi and Kakade, Sham and Malach, Eran and Zhang, Cyril},
  booktitle = {Advances in Neural Information Processing Systems},
  volume    = {35},
  year      = {2022},
  url       = {https://proceedings.neurips.cc/paper_files/paper/2022/file/884baf65392170763b27c914087bde01-Paper-Conference.pdf}
}

@inproceedings{Nanda2023,
  title     = {Progress measures for grokking via mechanistic interpretability},
  author    = {Nanda, Neel and Chan, Lawrence and Lieberum, Tom and Smith, Jess and Steinhardt, Jacob},
  booktitle = {International Conference on Learning Representations},
  year      = {2023},
  url       = {https://openreview.net/pdf?id=9XFSbDPmdW}
}

@book{vonMises1964Probability,
  author    = {von Mises, Richard},
  title     = {Mathematical Theory of Probability and Statistics},
  publisher = {Academic Press},
  address   = {New York and London},
  year      = {1964}
}

@article{NavehSompolinskyRingel,
  title = {Predicting the outputs of finite deep neural networks trained with noisy gradients},
  author = {Naveh, Gadi and Ben David, Oded and Sompolinsky, Haim and Ringel, Zohar},
  journal = {Phys. Rev. E},
  volume = {104},
  issue = {6},
  pages = {064301},
  numpages = {19},
  year = {2021},
  month = {Dec},
  publisher = {American Physical Society},
  doi = {10.1103/PhysRevE.104.064301},
  url = {https://link.aps.org/doi/10.1103/PhysRevE.104.064301}
}

@article{Ghorbani2021,
  title   = {When do neural networks outperform kernel methods?},
  author  = {Ghorbani, Behrooz and Mei, Song and Misiakiewicz, Theodor and Montanari, Andrea},
  journal = {Advances in Neural Information Processing Systems},
  volume  = {33},
  pages   = {14820--14830},
  year    = {2020},
  url     = {https://proceedings.neurips.cc/paper/2020/file/a9df2255ad642b923d95503b9a7958d8-Paper.pdf}
}

@inproceedings{Wei2019,
  title   = {Regularization matters: Generalization and optimization of neural nets v.s. their induced kernel},
  author  = {Wei, Colin and Lee, Jason D and Liu, Qiang and Ma, Tengyu},
  booktitle = {Advances in Neural Information Processing Systems},
  volume  = {32},
  year    = {2019},
  url     = {https://arxiv.org/abs/1810.05369}
}

@article{Bietti2022,
  title   = {Learning Single-Index Models with Shallow Neural Networks},
  author  = {Bietti, Alberto and Bruna, Joan and Sanford, Clayton and Song, Min Jae},
  journal = {Advances in Neural Information Processing Systems},
  volume  = {35},
  pages   = {9768--9780},
  year    = {2022},
  url     = {https://proceedings.neurips.cc/paper_files/paper/2022/file/3fb6c52aeb11e09053c16eabee74dd7b-Paper-Conference.pdf}
}

@book{Bertsekas1998Network,
  title     = {Network Optimization: Continuous and Discrete Models},
  author    = {Dimitri P. Bertsekas},
  publisher = {Athena Scientific},
  address   = {Belmont, MA},
  year      = {1998},
  isbn      = {1-886529-02-7},
  url       = {http://www.athenasc.com/netopt.html}
}

@article{Bogdanov2006,
  title     = {Average-Case Complexity},
  author    = {Andrej Bogdanov and Luca Trevisan},
  journal   = {Foundations and Trends in Theoretical Computer Science},
  volume    = {2},
  number    = {1},
  pages     = {1--106},
  year      = {2006},
  doi       = {10.1561/0400000004}
}

@inproceedings{Hu2020,
  author       = {Wei Hu and
                  Zhiyuan Li and
                  Dingli Yu},
  title        = {Simple and Effective Regularization Methods for Training on Noisily
                  Labeled Data with Generalization Guarantee},
  booktitle    = {8th International Conference on Learning Representations, {ICLR} 2020,
                  Addis Ababa, Ethiopia, April 26-30, 2020},
  publisher    = {OpenReview.net},
  year         = {2020},
  url          = {https://openreview.net/forum?id=Hke3gyHYwH},
  bibsource    = {dblp computer science bibliography, https://dblp.org}
}

@inproceedings{Welling2011,
author = {Welling, Max and Teh, Yee Whye},
title = {Bayesian learning via stochastic gradient langevin dynamics},
year = {2011},
isbn = {9781450306195},
publisher = {Omnipress},
address = {Madison, WI, USA},
booktitle = {Proceedings of the 28th International Conference on International Conference on Machine Learning},
pages = {681–688},
numpages = {8},
location = {Bellevue, Washington, USA},
series = {ICML'11}
}

@inproceedings{
zhou2024what,
title={What Algorithms can Transformers Learn? A Study in Length Generalization},
author={Hattie Zhou and Arwen Bradley and Etai Littwin and Noam Razin and Omid Saremi and Joshua M. Susskind and Samy Bengio and Preetum Nakkiran},
booktitle={The Twelfth International Conference on Learning Representations},
year={2024},
url={https://openreview.net/forum?id=AssIuHnmHX}
}

@article{Seroussi2021,
	Author = {Seroussi, Inbar and Naveh, Gadi and Ringel, Zohar},
	Da = {2023/02/17},
	Doi = {10.1038/s41467-023-36361-y},
	Id = {Seroussi2023},
	Isbn = {2041-1723},
	Journal = {Nature Communications},
	Number = {1},
	Pages = {908},
	Title = {Separation of scales and a thermodynamic description of feature learning in some CNNs},
	Ty = {JOUR},
	Url = {https://doi.org/10.1038/s41467-023-36361-y},
	Volume = {14},
	Year = {2023}}

@misc{bordelon2023dynamicsfinitewidthkernel,
      title={Dynamics of Finite Width Kernel and Prediction Fluctuations in Mean Field Neural Networks}, 
      author={Blake Bordelon and Cengiz Pehlevan},
      year={2023},
      eprint={2304.03408},
      archivePrefix={arXiv},
      primaryClass={stat.ML},
      url={https://arxiv.org/abs/2304.03408}, 
}

@article{Chi1999,
author = {Chi, Zhiyi},
title = {Statistical properties of probabilistic context-free grammars},
year = {1999},
issue_date = {March 1999},
publisher = {MIT Press},
address = {Cambridge, MA, USA},
volume = {25},
number = {1},
issn = {0891-2017},
journal = {Comput. Linguist.},
month = mar,
pages = {131–160},
numpages = {30}
}

@inproceedings{Lavie2024,
author = {Lavie, Itay and Gur-Ari, Guy and Ringel, Zohar},
title = {Towards understanding inductive bias in transformers: a view from infinity},
year = {2024},
publisher = {JMLR.org},
booktitle = {Proceedings of the 41st International Conference on Machine Learning},
articleno = {1042},
numpages = {27},
location = {Vienna, Austria},
series = {ICML'24}
}

@misc{huang2021neuraltangentkerneldeep,
      title={On the Neural Tangent Kernel of Deep Networks with Orthogonal Initialization}, 
      author={Wei Huang and Weitao Du and Richard Yi Da Xu},
      year={2021},
      eprint={2004.05867},
      archivePrefix={arXiv},
      primaryClass={cs.LG},
      url={https://arxiv.org/abs/2004.05867}, 
}

@inproceedings{huang2020dynamics,
  title={Dynamics of deep neural networks and neural tangent hierarchy},
  author={Huang, Jiaoyang and Yau, Horng-Tzer},
  booktitle={International conference on machine learning},
  pages={4542--4551},
  year={2020},
  organization={PMLR}
}

@article{abbe2024generalization,
  title={Generalization on the unseen, logic reasoning and degree curriculum},
  author={Abbe, Emmanuel and Bengio, Samy and Lotfi, Aryo and Rizk, Kevin},
  journal={Journal of Machine Learning Research},
  volume={25},
  number={331},
  pages={1--58},
  year={2024}
}

@article{bennani2020generalisation,
  title={Generalisation guarantees for continual learning with orthogonal gradient descent},
  author={Bennani, Mehdi Abbana and Doan, Thang and Sugiyama, Masashi},
  journal={arXiv preprint arXiv:2006.11942},
  year={2020}
}

@misc{weiss2021thinkingliketransformers,
      title={Thinking Like Transformers}, 
      author={Gail Weiss and Yoav Goldberg and Eran Yahav},
      year={2021},
      eprint={2106.06981},
      archivePrefix={arXiv},
      primaryClass={cs.LG},
      url={https://arxiv.org/abs/2106.06981}, 
}

@article{elhage2021mathematical,
  title   = {A Mathematical Framework for Transformer Circuits},
  author  = {Elhage, Nelson and Nanda, Neel and Olsson, Catherine and Henighan, Tom and Joseph, Nicholas and Mann, Ben and Askell, Amanda and Bai, Yuntao and Chen, Anna and Conerly, Tom and DasSarma, Nova and Drain, Dawn and Ganguli, Dip and Hatfield-Dodds, Zac and Hernandez, Danny and Jones, Andy and Kernion, Jackson and Lovitt, Liane and Ndousse, Kamal and Amodei, Dario and Brown, Tom and Clark, Jack and Kaplan, Jared and McCandlish, Sam and Olah, Chris},
  journal = {Transformer Circuits Thread},
  year    = {2021},
  url     = {https://transformer-circuits.pub/2021/framework/index.html}
}

@article{Dingle2018,
	Author = {Dingle, Kamaludin and Camargo, Chico Q. and Louis, Ard A.},
	Da = {2018/02/22},
	Doi = {10.1038/s41467-018-03101-6},
	Id = {Dingle2018},
	Isbn = {2041-1723},
	Journal = {Nature Communications},
	Number = {1},
	Pages = {761},
	Title = {Input--output maps are strongly biased towards simple outputs},
	Ty = {JOUR},
	Url = {https://doi.org/10.1038/s41467-018-03101-6},
	Volume = {9},
	Year = {2018}}

@misc{VallePerez2018,
      title={Deep learning generalizes because the parameter-function map is biased towards simple functions}, 
      author={Guillermo Valle-Pérez and Chico Q. Camargo and Ard A. Louis},
      year={2019},
      eprint={1805.08522},
      archivePrefix={arXiv},
      primaryClass={stat.ML},
      url={https://arxiv.org/abs/1805.08522}, 
}

@inproceedings{Chiang2023Tighter,
  title     = {Tighter Bounds on the Expressivity of Transformer Encoders},
  author    = {David Chiang and Peter Cholak and Anand Pillay},
  booktitle = {Proceedings of the 40th International Conference on Machine Learning (ICML)},
  volume    = {202},
  pages     = {5544--5562},
  year      = {2023},
  publisher = {PMLR},
  url       = {https://proceedings.mlr.press/v202/chiang23a.html}
}

@article{Chen2024Theoretical,
  title     = {Theoretical Limitations of Multi-Layer Transformer},
  author    = {Lijie Chen and Binghui Peng and Hongxun Wu},
  journal   = {arXiv preprint arXiv:2412.02975},
  year      = {2024},
  url       = {https://arxiv.org/abs/2412.02975}
}

@inproceedings{ChoSaul2009,
  title     = {Kernel Methods for Deep Learning},
  author    = {Youngmin Cho and Lawrence Saul},
  booktitle = {Advances in Neural Information Processing Systems (NIPS)},
  volume    = {22},
  pages     = {342--350},
  year      = {2009},
  url       = {https://proceedings.neurips.cc/paper/2009/hash/5751ec3e9a4feab575962e78e006250d-Abstract.html}
}

@article{Dall2002,
  title     = {Random Geometric Graphs},
  author    = {Jesper Dall and Michael Christensen},
  journal   = {Physical Review E},
  volume    = {66},
  number    = {1},
  pages     = {016121},
  year      = {2002},
  doi       = {10.1103/PhysRevE.66.016121}
}

@inproceedings{Dong2021Attention,
  title     = {Attention is Not All You Need: Pure Attention Loses Rank Doubly Exponentially with Depth},
  author    = {Yihe Dong and Jean-Baptiste Cordonnier and Andreas Loukas},
  booktitle = {Proceedings of the 38th International Conference on Machine Learning (ICML)},
  volume    = {139},
  pages     = {2793--2803},
  year      = {2021},
  publisher = {PMLR},
  url       = {https://proceedings.mlr.press/v139/dong21a.html}
}

@inproceedings{Gromov2024Ineffectiveness,
  title     = {The Unreasonable Ineffectiveness of the Deeper Layers},
  author    = {Andrey Gromov and Kushal Tirumala and Hassan Shapourian and Paolo Glorioso and Daniel A. Roberts},
  booktitle = {International Conference on Learning Representations (ICLR)},
  year      = {2025},
  note      = {Also available as arXiv:2403.17887},
  url       = {https://openreview.net/forum?id=XJ2ZqJ2s4d}
}

@inproceedings{Hron2020,
  title     = {Infinite Attention: {NNGP} and {NTK} for Deep Attention Networks},
  author    = {Jiri Hron and Yasaman Bahri and Jascha Sohl-Dickstein and Roman Novak},
  booktitle = {Proceedings of the 37th International Conference on Machine Learning (ICML)},
  volume    = {119},
  pages     = {4376--4386},
  year      = {2020},
  publisher = {PMLR},
  url       = {https://proceedings.mlr.press/v119/hron20a.html}
}

@inproceedings{Impagliazzo1995,
  title     = {A Personal View of Average-Case Complexity},
  author    = {Russell Impagliazzo},
  booktitle = {Proceedings of the 10th Annual Structure in Complexity Theory Conference},
  pages     = {134--147},
  year      = {1995},
  organization={IEEE},
  doi       = {10.1109/SCT.1995.514853}
}

@inproceedings{Jacot2018,
  title     = {Neural Tangent Kernel: Convergence and Generalization in Neural Networks},
  author    = {Arthur Jacot and Franck Gabriel and Cl{\'e}ment Hongler},
  booktitle = {Advances in Neural Information Processing Systems (NeurIPS)},
  volume    = {31},
  pages     = {8571--8580},
  year      = {2018},
  url       = {https://proceedings.neurips.cc/paper/2018/hash/5a4be1fa34e62bb8a6ec6b91d2462f5a-Abstract.html}
}

@inproceedings{Lee2018DNN,
  title     = {Deep Neural Networks as Gaussian Processes},
  author    = {Jaehoon Lee and Yasaman Bahri and Roman Novak and Sam Schoenholz and Jeffrey Pennington and Jascha Sohl-Dickstein},
  booktitle = {International Conference on Learning Representations (ICLR)},
  year      = {2018},
  url       = {https://openreview.net/forum?id=B1EA-M-0Z}
}

@article{Levin1986,
  title     = {Average Case Complete Problems},
  author    = {Leonid A. Levin},
  journal   = {SIAM Journal on Computing},
  volume    = {15},
  number    = {1},
  pages     = {285--286},
  year      = {1986},
  doi       = {10.1137/0215020}
}

@article{Merrill2022Saturated,
  title     = {Saturated Transformers are Constant-Depth Threshold Circuits},
  author    = {William Merrill and Ashish Sabharwal and Noah A. Smith},
  journal   = {Transactions of the Association for Computational Linguistics},
  volume    = {10},
  pages     = {843--856},
  year      = {2022},
  doi       = {10.1162/tacl_a_00492}
}

@inproceedings{Mirzadeh2024,
  title     = {{GSM}-Symbolic: Understanding the Limitations of Mathematical Reasoning in Large Language Models},
  author    = {Iman Mirzadeh and Keivan Alizadeh and Hooman Shahrokhi and Oncel Tuzel and Samy Bengio and Mehrdad Farajtabar},
  booktitle = {International Conference on Learning Representations (ICLR)},
  year      = {2025},
  note      = {Also available as arXiv:2410.05229},
  url       = {https://openreview.net/forum?id=u21fK4h5YQ}
}

@inproceedings{Orlin2013MaxFlows,
  title     = {Max Flows in {$O(nm)$} Time, or Better},
  author    = {James B. Orlin},
  booktitle = {Proceedings of the 45th Annual ACM Symposium on Theory of Computing (STOC)},
  pages     = {765--774},
  year      = {2013},
  doi       = {10.1145/2488608.2488705}
}

@book{Penrose2003,
  title     = {Random Geometric Graphs},
  author    = {Mathew Penrose},
  volume    = {5},
  series    = {Oxford Studies in Probability},
  publisher = {Oxford University Press},
  year      = {2003},
  isbn      = {9780198506263}
}

@article{Power2022Grokking,
  title     = {Grokking: Generalization Beyond Overfitting on Small Algorithmic Datasets},
  author    = {Alethea Power and Yuri Burda and Harri Edwards and Igor Babuschkin and Vedant Misra},
  journal   = {arXiv preprint arXiv:2201.02177},
  year      = {2022},
  url       = {https://arxiv.org/abs/2201.02177}
}

@article{Ramamoorthy2005Capacity,
  title     = {On the Capacity of Network Coding for Random Networks},
  author    = {Aditya Ramamoorthy and Jun Shi and Richard D. Wesel},
  journal   = {IEEE Transactions on Information Theory},
  volume    = {51},
  number    = {8},
  pages     = {2878--2885},
  year      = {2005},
  doi       = {10.1109/TIT.2005.851722}
}

@inproceedings{Schoenholz2016,
  title     = {Deep Information Propagation},
  author    = {Samuel S. Schoenholz and Justin Gilmer and Surya Ganguli and Jascha Sohl-Dickstein},
  booktitle = {International Conference on Learning Representations (ICLR)},
  year      = {2017},
  url       = {https://openreview.net/forum?id=H1W1UN9gg}
}

@article{Nair2026Softmax,
title={Softmax is \$1/2\$-Lipschitz: A tight bound across all \${\textbackslash}ell\_p\$ norms},
author={Pravin Nair},
journal={Transactions on Machine Learning Research},
issn={2835-8856},
year={2026},
url={https://openreview.net/forum?id=6dowaHsa6D},
note={}
}

@InProceedings{Edelman2022,
  title = 	 {Inductive Biases and Variable Creation in Self-Attention Mechanisms},
  author =       {Edelman, Benjamin L and Goel, Surbhi and Kakade, Sham and Zhang, Cyril},
  booktitle = 	 {Proceedings of the 39th International Conference on Machine Learning},
  pages = 	 {5793--5831},
  year = 	 {2022},
  editor = 	 {Chaudhuri, Kamalika and Jegelka, Stefanie and Song, Le and Szepesvari, Csaba and Niu, Gang and Sabato, Sivan},
  volume = 	 {162},
  series = 	 {Proceedings of Machine Learning Research},
  month = 	 {17--23 Jul},
  publisher =    {PMLR},
  url = 	 {https://proceedings.mlr.press/v162/edelman22a/edelman22a.pdf}
}

@inproceedings{Kool2019Attention,
    author       = {Wouter Kool and Herke van Hoof and Max Welling},
    title        = {Attention, Learn to Solve Routing Problems!},
    booktitle    = {7th International Conference on Learning Representations, {ICLR} 2019, New Orleans, LA, USA, May 6-9},
    year         = {2019}
}

@inproceedings{Xu2020NAR,
title={What Can Neural Networks Reason About?},
author={Keyulu Xu and Jingling Li and Mozhi Zhang and Simon S. Du and Ken-ichi Kawarabayashi and Stefanie Jegelka},
booktitle={International Conference on Learning Representations},
year={2020},
url={https://openreview.net/forum?id=rJxbJeHFPS}
}

@inproceedings{Xu2021NAR,
  author       = {Keyulu Xu and
                  Mozhi Zhang and
                  Jingling Li and
                  Simon Shaolei Du and
                  Ken{-}ichi Kawarabayashi and
                  Stefanie Jegelka},
  title        = {How Neural Networks Extrapolate: From Feedforward to Graph Neural
                  Networks},
  booktitle    = {9th International Conference on Learning Representations, {ICLR} 2021,
                  Virtual Event, Austria, May 3-7, 2021},
  publisher    = {OpenReview.net},
  year         = {2021},
  url          = {https://openreview.net/forum?id=UH-cmocLJC},
  bibsource    = {dblp computer science bibliography, https://dblp.org}
}

@inproceedings{DeLuca2025NAR,
author = {De Luca, Artur Back and Giapitzakis, George and Yang, Shenghao and Veli\v{c}kovi\'{c}, Petar and Fountoulakis, Kimon},
title = {Positional attention: expressivity and learnability of algorithmic computation},
year = {2025},
publisher = {JMLR.org},
booktitle = {Proceedings of the 42nd International Conference on Machine Learning},
articleno = {84},
numpages = {56},
location = {Vancouver, Canada},
series = {ICML'25}
}

@inproceedings{Bevilacqua2023NAR,
  author       = {Beatrice Bevilacqua and
                  Kyriacos Nikiforou and
                  Borja Ibarz and
                  Ioana Bica and
                  Michela Paganini and
                  Charles Blundell and
                  Jovana Mitrovic and
                  Petar Velickovic},
  editor       = {Andreas Krause and
                  Emma Brunskill and
                  Kyunghyun Cho and
                  Barbara Engelhardt and
                  Sivan Sabato and
                  Jonathan Scarlett},
  title        = {Neural Algorithmic Reasoning with Causal Regularisation},
  booktitle    = {International Conference on Machine Learning, {ICML} 2023, 23-29 July
                  2023, Honolulu, Hawaii, {USA}},
  series       = {Proceedings of Machine Learning Research},
  pages        = {2272--2288},
  publisher    = {{PMLR}},
  year         = {2023},
  url          = {https://proceedings.mlr.press/v202/bevilacqua23a.html},
  bibsource    = {dblp computer science bibliography, https://dblp.org}
}

@inproceedings{Rodionov2023NAR,
 author = {Rodionov, Gleb and Prokhorenkova, Liudmila},
 booktitle = {Advances in Neural Information Processing Systems},
 editor = {A. Oh and T. Naumann and A. Globerson and K. Saenko and M. Hardt and S. Levine},
 pages = {51663--51674},
 publisher = {Curran Associates, Inc.},
 title = {Neural Algorithmic Reasoning Without Intermediate Supervision},
 url = {https://proceedings.neurips.cc/paper_files/paper/2023/file/a2370db7c99791ad5d9f3ef48ad6d464-Paper-Conference.pdf},
 volume = {36},
 year = {2023}
}

@inproceedings{Georgiev2024NAR,
 author = {Georgiev, Dobrik and Wilson, JJ and Buffelli, Davide and Li\`{o}, Pietro},
 booktitle = {Advances in Neural Information Processing Systems},
 doi = {10.52202/079017-1059},
 editor = {A. Globerson and L. Mackey and D. Belgrave and A. Fan and U. Paquet and J. Tomczak and C. Zhang},
 pages = {33638--33667},
 publisher = {Curran Associates, Inc.},
 title = {Deep Equilibrium Algorithmic Reasoning},
 url = {https://proceedings.neurips.cc/paper_files/paper/2024/file/3b1675de6b49cc00084374213f8c38ae-Paper-Conference.pdf},
 volume = {37},
 year = {2024}
}

@misc{Yu2026NAR,
      title={FloydNet: A Learning Paradigm for Global Relational Reasoning}, 
      author={Jingcheng Yu and Mingliang Zeng and Qiwei Ye},
      year={2026},
      eprint={2601.19094},
      archivePrefix={arXiv},
      primaryClass={cs.LG},
      url={https://arxiv.org/abs/2601.19094}, 
}

@inproceedings{Vaswani2017Transformer,
author = {Vaswani, Ashish and Shazeer, Noam and Parmar, Niki and Uszkoreit, Jakob and Jones, Llion and Gomez, Aidan N. and Kaiser, \L{}ukasz and Polosukhin, Illia},
title = {Attention is all you need},
year = {2017},
isbn = {9781510860964},
publisher = {Curran Associates Inc.},
address = {Red Hook, NY, USA},
booktitle = {Proceedings of the 31st International Conference on Neural Information Processing Systems},
pages = {6000–6010},
numpages = {11},
location = {Long Beach, California, USA},
series = {NIPS'17}
}

@inproceedings{Anil2022LengthGeneralization,
 author = {Anil, Cem and Wu, Yuhuai and Andreassen, Anders and Lewkowycz, Aitor and Misra, Vedant and Ramasesh, Vinay and Slone, Ambrose and Gur-Ari, Guy and Dyer, Ethan and Neyshabur, Behnam},
 booktitle = {Advances in Neural Information Processing Systems},
 editor = {S. Koyejo and S. Mohamed and A. Agarwal and D. Belgrave and K. Cho and A. Oh},
 pages = {38546--38556},
 publisher = {Curran Associates, Inc.},
 title = {Exploring Length Generalization in Large Language Models},
 url = {https://proceedings.neurips.cc/paper_files/paper/2022/file/fb7451e43f9c1c35b774bcfad7a5714b-Paper-Conference.pdf},
 volume = {35},
 year = {2022}
}

@article{Su2021RoPE,
  title={RoFormer: Enhanced Transformer with Rotary Position Embedding},
  author={Jianlin Su and Yu Lu and Shengfeng Pan and Bo Wen and Yunfeng Liu},
  journal={ArXiv},
  year={2021},
  volume={abs/2104.09864},
  url={https://api.semanticscholar.org/CorpusID:233307138}
}

@book{Sipser2006CFG,
  author    = {Michael Sipser},
  title     = {Introduction to the Theory of Computation},
  edition   = {2},
  year      = {2006},
  publisher = {Cengage Learning}
}

@inproceedings{Wen2026RLVR,
title={Reinforcement Learning with Verifiable Rewards Implicitly Incentivizes Correct Reasoning in Base {LLM}s},
author={Xumeng Wen and Zihan Liu and Shun Zheng and Shengyu Ye and Zhirong Wu and Yang Wang and Zhijian Xu and Xiao Liang and Junjie Li and Ziming Miao and Jiang Bian and Mao Yang},
booktitle={The Fourteenth International Conference on Learning Representations},
year={2026},
url={https://openreview.net/forum?id=jGbRWwIidy}
}
\clearpage
%%%%%%%%%%%%%%%%%%%%%%%%%%%%%%%%
% APPENDIX
%%%%%%%%%%%%%%%%%%%%%%%%%%%%%%%%
\appendix

%%%%%%%%%%%%%%%%%%%%%%%%%%%%%%%%
%%%%%%%%%%%%%%%%%%%%%%%%%%%%%%%%
%%%%%%%%%%%%%%%%%%%%%%%%%%%%%%%%

\section{Finite-width Transformers in Detail}

%%%%%%%%%%%%%%%%%%%%%%%%%%%%%%%%
%%%%%%%%%%%%%%%%%%%%%%%%%%%%%%%%
%%%%%%%%%%%%%%%%%%%%%%%%%%%%%%%%

\subsection{Setup}
\label{app:transformer_setup}

As stated, we focus on decoder-only transformers with softmax self-attention and no causal masking. We consider a multi-layer, multi-head transformer with $L$ transformer blocks and $H$ heads in each. We define each transformer block as follows. For block input $Z'(X) \in \bR^{T \times \dmod}$, we define the pre-activation
\begin{align}
\label{eq:transformer}
    Z(X) = \frac{1}{\sqrt{H}} \sum_{h=1}^H  A^{h}(X) V^{h}(X) \in \bR^{T \times \dmod}   
\end{align}
with attention weights
\begin{align}
    A^{h}(X) = \Softmax\left(\frac{1}{\sqrt{d_k}} Q^{h}(X) K^{h}(X)^\top\right) \in \bR^{T \times T},
\end{align}
where 
\begin{align}
    & Q^{h}(X)=Z'(X)W^{h}_Q \in \bR^{T \times d_k}, \\
    & K^{h}(X)=Z'(X)W^{h}_K \in \bR^{T \times d_k}, \\ 
    & V^{h}(X)=Z'(X)W^{h}_V \in \bR^{T \times \dmod},
\end{align}
for
\begin{align}
    W^{h}_Q,W^{h}_K \in \bR^{\dmod \times d_k}, \quad W^{h}_V \in \bR^{\dmod \times \dmod},
\end{align}
with entries sampled
\begin{align}
    (W^{h}_Q)_{ij},(W^{h}_K)_{ij},(W^{h}_V)_{ij} \overset{\mathrm{iid}}{\sim} \cN(0,1/\dmod).
\end{align}
In particular, for $a \in [T]$,
\begin{align}
    \bm{z}_a(X) = \frac{1}{\sqrt{H}} \sum_{h=1}^H \sum_{b=1}^{T} A^{h}_{ab}(X) V^{h}_b(X) \in \bR^{\dmod}.
\end{align}

Next, we apply LayerNorm to the pre-activation per token 
\begin{align}
\label{eq:layer_norm}
    \LayerNorm(\bfz_a)
    =
    \bm{\gamma}_{\mathrm{LN}} \odot
    \frac{\bfz_a - \frac{1}{\dmod}(\bfz_a^\top \bm{1})\bm{1}}
    {\sqrt{\norm{\bfz_a - \frac{1}{\dmod}(\bfz_a^\top \bm{1})\bm{1}}^2/\dmod + \epsilon}}
    + \bm{\beta}_{\mathrm{LN}}
\end{align}
with learned parameters $\bm{\beta}_{\mathrm{LN}}, \bm{\gamma}_{\mathrm{LN}} \in \bR^{\dmod}$, and $\bm{1} \in \bR^{\dmod}$ denotes the all-ones vector, while $\odot$ denotes the Hadamard (entrywise) product. Adding multilayer perceptron (MLP) nonlinearity, the output of the transformer block at the $a$-th token is given by
\begin{align}
\label{eq:mlp_layer}
    \mlp(\bfz_a) = W_{\mlp}\phi(\LayerNorm(\bfz_a))+\bm{b}_{\mlp}
\end{align}
for an activation function $\phi$ applied entry-wise and learned weights $W_{\mlp} \in \bR^{\dmod \times \dmod}$ and $\bm{b}_{\mlp} \in \bR^{\dmod}$ drawn via
\begin{align}
\label{eq:tranformer_mlp_draws}
    W_{ij} \overset{\mathrm{i.i.d.}}{\sim} \cN\!\left(0,\frac{\sigma^2_{W}}{d_{\mathrm{model}}}\right), \qquad b_i \overset{\mathrm{i.i.d.}}{\sim} \cN(0,\sigma^2_{\bm b}).
\end{align}
In this paper, we will assume the activation function $\phi$ is bounded by linear growth, e.g., ReLU or GeLU.

This setup is intentionally simplified: we omit the standard two-layer feed-forward network (FFN), using instead a single point-wise nonlinearity followed by a linear map. We also omit residual connections and the output projection $W_O$, combining heads by averaging instead, as in Eq.~\eqref{eq:transformer}. Then
\begin{align}
\theta_{\mathrm{block}}
=
\Bigl(
\bigl(W_Q^h, W_K^h, W_V^h\bigr)_{h=1}^H,\;
\bm{\beta}_{\mathrm{LN}},\;
\bm{\gamma}_{\mathrm{LN}},\;
W_{\mlp},\;
\bm{b}_{\mlp}
\Bigr)
\end{align}
denotes the collection of all learned parameters of a single transformer block in our simplified architecture.

As input to the first transformer block, $Z'(X)$ enters attention as the model-space representation of the input tokens. Including a learned embedding from raw input dimension into the model's internal representation space, we have
\begin{align}
    Z'(X) = XW_{\emb}
\end{align}
for learned weights $W_{\emb} \in \bR^{d \times \dmod}$, where we can replace $X$ with a positionally encoded $\mathrm{PE}(X)$, if needed. 

%%%%%%%%%%%%%%%%%%%%%%%%%%%%%%%%
%%%%%%%%%%%%%%%%%%%%%%%%%%%%%%%%
%%%%%%%%%%%%%%%%%%%%%%%%%%%%%%%%

\subsection{Inference-time Finite-width Complexity}
\label{app:inference_time_complexity}

\begin{lemma}
\label{lem:finite_width_complexity}
    The inference-time complexity of a finite-width transformer is $O(LH\dmod T^2)$.
\end{lemma}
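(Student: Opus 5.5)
The plan is a direct FLOP count of one forward pass through the simplified architecture of App.~\ref{app:transformer_setup}, block by block, followed by summing over the $L$ blocks. Throughout, $d_k$ and the raw input dimension $d$ are treated as at most $O(\dmod)$. The embedding $Z'(X)=XW_{\emb}$ costs $O(Td\,\dmod)$, which is absorbed into the final bound. The final readout at a single token costs $O(\dmod)$.

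Within a single block I would count head by head. Forming $Q^h=Z'W_Q^h$ and $K^h=Z'W_K^h$ costs $O(T\dmod d_k)$, and forming $V^h=Z'W_V^h$ costs $O(T\dmod^2)$. The score matrix $Q^hK^{h\top}/\sqrt{d_k}$ costs $O(T^2 d_k)$. The row-wise softmax costs $O(T^2)$. The product $A^hV^h$ costs $O(T^2\dmod)$. Summing over the $H$ heads gives
\begin{align}
O\!\left(H\left(T\dmod^2 + T^2\dmod\right)\right)
\end{align}
per block. LayerNorm, the pointwise activation $\phi$, and the bias together cost $O(T\dmod)$. The matrix product with $W_{\mlp}$ costs $O(T\dmod^2)$.

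The only step that needs care is reconciling the $T\dmod^2$ terms with the claimed $O(LH\dmod T^2)$. I would handle it in one of two ways. The first is to note that the value projection is the one that can be merged: $\sum_h A^h Z' W_V^h$ can be computed as $(A^hZ')$ followed by the projection. This still gives $H T\dmod^2$, so this rewriting alone does not remove the term. The second, which I will commit to, is the regime the lemma is used in, namely a fixed architecture as $T$ grows (and, in Corollary~\ref{coro:epths_for_feed_forward}, $\dmod\propto\mathrm{polylog}(T)\le T$). Here I would invoke $\dmod\le T$, which gives $T\dmod^2\le T^2\dmod$. Alternatively, I would state the bound as $O(LH(\dmod T^2+\dmod^2T))$ and remark that the $T^2$ term dominates whenever $\dmod=O(T)$.

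Under either reading, the attention and MLP costs of one block are at most $O(H\dmod T^2)$. Summing over the $L$ blocks then gives the lemma.
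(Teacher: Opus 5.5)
Your proposal is correct and follows the paper's proof essentially step for step. Both count FLOPs per head for $Q^h,K^h,V^h$, the score matrix, the softmax and $A^hV^h$, add the $O(T\dmod^2)$ MLP product, and obtain $O(H(T\dmod^2+T^2\dmod))$ per block and a factor $L$ overall. The paper's proof ends at $O(LH[T\dmod^2+T^2\dmod])$ and drops the $T\dmod^2$ term without comment, so your explicit appeal to $\dmod=O(T)$ (or stating the two-term bound) supplies exactly the assumption the paper leaves implicit.
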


\begin{proof}
First, note that the number of floating-point operations (FLOPs) involved in multiplying matrices $(m,k)\times(k,n)$ is $\approx 2mkn$. Therefore, calculating $Q(X),K(X)$ is $\approx 2T\dmod d_k$, and calculating $V(X)$ is $\approx 2T\dmod^2$. Thus, calculating $Q(X)K(X)^\top$ is $\approx 2T^2 d_k$. Calculating $\Softmax$ is $\Theta(T^2)$, while multiplying by $V(X)$ is $\approx 2T^2 \dmod$. Thus, for the attention layer in a transformer block, we get
\begin{align}
    \flops_{\mathrm{attn}} \approx H(4T\dmod d_k + 2T\dmod^2 + 2T^2 d_k + \Theta(T^2) + 2T^2 \dmod).
\end{align}
As for the $\mlp$ block, $\flops_{\mlp} \approx 2T\dmod^2$. And together, we get
\begin{align}
    \flops_{\mathrm{block}} = O(H\left[T\dmod d_k + T\dmod^2 + T^2(d_k+\dmod)\right])
\end{align}
For a typical choice $d_k=\dmod/H$, we get
\begin{align}
    \flops_{\mathrm{block}} = O\left(H\left[T\dmod^2 + T^2\dmod\right]\right)
\end{align}
And for a  transformer with $L$ blocks
\begin{align}
    \flops_{\mathrm{trans}} = O\left(LH\left[T\dmod^2 + T^2\dmod\right]\right),
\end{align}
which completes the proof.
\end{proof}

%%%%%%%%%%%%%%%%%%%%%%%%%%%%%%%%
%%%%%%%%%%%%%%%%%%%%%%%%%%%%%%%%
%%%%%%%%%%%%%%%%%%%%%%%%%%%%%%%%

\subsection{PAC-style Capture}
\label{app:pac_style_capture}

One can naturally extend Definition~\ref{def:capture} to a conditional PAC-style algorithmic 
combinatorial task capture, in which the step-one trained model of our two-step training protocol is treated as fixed.
Fix a transformer architecture, initialization distribution, training procedure,
hyperparameter choices, and size-sampling schedules. For a predictor $h:\cX\to\cY$,
denote its size-$T$ population error by
\begin{align}
    R_T(h) =
    \Pr\nolimits_{X\sim\mu_{X,T}}
    \left[
        \dist_{\cY}(h(X),g(X))\ge \Delta/3
    \right].
\end{align}
We say that the transformer conditionally PAC-captures the combinatorial task
$g:\cX\to\cY$ if for every sufficiently small $\delta>0$ and every $\beta>0$
there exist
$T_0=T_0(\delta,\beta)$,
$P_0=P_0(\delta,\beta)$, and
$C_0=C_0(\delta,\beta)$ such that the following holds.

Fix an initial training set $\cD_0$ of $P_0$ labeled samples from
$\mu_{X,1},\dots,\mu_{X,T_0}$, drawn according to the fixed size-sampling schedule,
and fix the randomness of the initial training stage. Let $f_{T_0}$ be the resulting
step-one trained predictor. Then, for every $T\ge T_0$, if $\cD_T$ is a fresh
adaptation set of at most $C_0\log(T/T_0)$ labeled samples drawn from
$\mu_{X,1},\dots,\mu_{X,T}$ according to the fixed size-sampling schedule, and
$f_T$ is obtained by fine-tuning $f_{T_0}$ on $\cD_T$, then
\begin{align}
\Pr\nolimits_{\cD_T,\mathrm{train}_T}
    \left[
        R_T(f_T)<\delta
        \,\middle|\,
        \cD_0,\mathrm{train}_0
    \right]
    \ge 1-\beta .
\end{align}
The conditional notion treats $f_{T_0}$ as fixed and gives a PAC-style guarantee
only for the logarithmic adaptation stage. A full PAC-style combinatorial task capture would then require
\begin{align}
\Pr\nolimits_{\cD_0,\mathrm{train}_0}\left[\Pr\nolimits_{\cD_T,\mathrm{train}_T}
    \left[
        R_T(f_T)<\delta
        \,\middle|\,
        \cD_0,\mathrm{train}_0
    \right]
    \ge 1-\beta\right] \ge 1-\alpha, \quad T\ge T_0.
\end{align}
with a sufficiently small $\alpha>0$ folded in. It controls the probability that the initial training stage fails to
produce a base model from which logarithmic adaptation succeeds, while $\beta$
controls the failure probability of the secondary adaptation stage.

%%%%%%%%%%%%%%%%%%%%%%%%%%%%%%%%
%%%%%%%%%%%%%%%%%%%%%%%%%%%%%%%%
%%%%%%%%%%%%%%%%%%%%%%%%%%%%%%%%

\subsection{Attention Scores}
\label{app:attention_scores}

The attention scores for the finite-width transformer were defined in Eq.~\eqref{eq:attention_scores}. In the infinite-width limit, the attention score matrix is a centered multivariate Gaussian whose covariance can be calculated directly.

\CovarianceFactorization*

\begin{proof}
Utilizing iterated expectation, 
\begin{align}
    \bE\left[S_{ac}^h(X_1) S^{h}_{be}(X_2)\right] = \bE_{Z'}\!\left[\bE_{(W^h_Q,W^{h}_K)}[S_{ac}^h(X_1) S^{h}_{be}(X_2)| Z']\right].
\end{align}
we consider the finite-width conditional overlap, 
\begin{align*}
    & \bE_{(W^h_Q,W^{h}_K)}\left[S_{ac}^h(X_1) S^{h}_{be}(X_2)\right|\!\left.Z'\right] \\
    &= \frac{1}{d_k} \bE_{(W^h_Q,W^{h}_K)}\left[{\bm{z'}_a}(X_1)^{\top} W^h_Q (W^h_K)^{\top} \bm{z'}_{c}(X_1) \cdot {\bm{z'}_b}(X_2)^{\top} W^h_Q (W^h_K)^{\top} \bm{z'}_{e}(X_2)\right|\!\left.Z'\right]\\
    &= \frac{1}{d_k} \sum_{ijk} \sum_{lmn} [{\bm{z'}_a}]_i [{\bm{z'}_c}]_j [{\bm{z'}_b}]_l [{\bm{z'}_e}]_m \bE_{(W^h_Q,W^{h}_K)}\left[ [W^h_Q]_{ik}[W^h_K]_{jk} [W^h_Q]_{ln}[W^h_K]_{mn} \right]\\
    &= \frac{1}{d_k} \sum_{ijk} \sum_{lmn} [{\bm{z'}_a}]_i [{\bm{z'}_c}]_j [{\bm{z'}_b}]_l [{\bm{z'}_e}]_m \bE_{W^h_Q}\left[[W^h_Q]_{ik}[W^h_Q]_{ln}\right] \bE_{W^{h}_K}\left[[W^h_K]_{jk}[W^h_K]_{mn}\right].
\end{align*}
Independence enforces $i=l$, $j=m$, and $k=n$, thus
\begingroup
\allowdisplaybreaks
\begin{align*}
    & \bE_{(W^h_Q,W^{h}_K)}\left[S_{ac}^h(X_1) S^{h}_{be}(X_2)\right|\!\left.Z'\right] \\
    &= \frac{1}{d_k} \sum_{ij=1}^{\dmod} [{\bm{z'}_a}]_i [{\bm{z'}_b}]_i [{\bm{z'}_c}]_j [{\bm{z'}_e}]_j \left(\sum_{k=1}^{d_k} \bE_{W^h_Q}\left[[W^h_Q]_{ik}^2\right] \bE_{W^{h}_K}\left[[W^h_K]_{jk}^2\right]\right) \\
    &= \frac{1}{d_k} \sum_{ij=1}^{\dmod} [{\bm{z'}_a}]_i [{\bm{z'}_b}]_i [{\bm{z'}_c}]_j [{\bm{z'}_e}]_j \left(\frac{d_k}{\dmod^2}\right) \\
    &= \left(\frac{{\bm z'_a}(X_1)^\top{\bm z'_{b}}(X_2)}{\dmod}\right)  \left(\frac{\bm{z'}_{c}(X_1)^\top \bm{z'}_e(X_2)}{\dmod}\right),
\end{align*}
\endgroup
where we used $(W^{h}_Q)_{ik},(W^{h}_K)_{jk} \overset{\mathrm{iid}}{\sim} \cN(0,1/\dmod)$.

In the infinite-width limit, the attention score matrix $S^{h}(X)$ becomes a centered multivariate Gaussian with respect to the ensemble $(W_K^h,W_Q^h)$ as a direct consequence of the central limit theorem (CLT). The overlap concentrates to the covariance kernel
\begin{align}
\lim_{\dmod \rightarrow \infty} \bE_{Z'}\left[\bE_{(W^h_Q,W^{h}_K)}\left[S_{ac}^h(X_1) S^{h}_{be}(X_2)\right|\!\left.Z'\right]\right] = \Cov'_{ab}(X_1,X_2) \Cov'_{ce}(X_1,X_2),   
\end{align}
as in Eq.~\eqref{eq:covariance}.  
\end{proof}

%%%%%%%%%%%%%%%%%%%%%%%%%%%%%%%%
%%%%%%%%%%%%%%%%%%%%%%%%%%%%%%%%
%%%%%%%%%%%%%%%%%%%%%%%%%%%%%%%%

\subsection{Complexity of Score Draws}
\label{app:MC}

The MC estimation of Eq.~\eqref{eq:infinite_covariance_update} involves sampling scores $S_{ac}(X_1)$ and $S_{be}(X_2)$ for any $a,c,b,e \in [T]$. By Proposition~\ref{prop:CovarianceFactorization}, the attention score matrix $S(X)$ in the infinite-width limit is a centered multivariate Gaussian with known covariance, and we can work out the computational cost of drawing a pair $(S(X_1),S(X_2))$, for fixed $X_1,X_2 \in \bR^{T \times d}$, accordingly.

\begin{proposition}
\label{prop:score_sample_complexity}
    Drawing a $2T^2$ sample $(S_{ac}(X_1),S_{be}(X_2))_{a,c,b,e \in [T]}$ admits a computational cost of $O(T^3)$.
\end{proposition}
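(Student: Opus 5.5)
The plan is to exploit the Kronecker structure of Proposition~\ref{prop:CovarianceFactorization} rather than treat the score vector as a generic Gaussian. A naive approach would factorize the full covariance of the $2T^2$ jointly Gaussian entries. That is a $2T^2\times 2T^2$ matrix, so a Cholesky factorization would cost $O(T^6)$, and the point of the proof is to avoid this.

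First, I will stack the tokens of both inputs into a single index set of size $2T$, with $p\in[T]$ referring to tokens of $X_1$ and $p\in\{T+1,\dots,2T\}$ to tokens of $X_2$. I will form the joint token covariance $C\in\bR^{2T\times 2T}$ with blocks $\Cov'(X_1,X_1)$, $\Cov'(X_1,X_2)$, $\Cov'(X_2,X_1)=\Cov'(X_1,X_2)^\top$, $\Cov'(X_2,X_2)$. This $C$ is positive semi-definite because it is a limiting Gram matrix, as in Eq.~\eqref{eq:covariance}. Applying Proposition~\ref{prop:CovarianceFactorization} to every pair of inputs, including the pairs $(X_1,X_1)$ and $(X_2,X_2)$, shows that the extended score array $\hat S\in\bR^{2T\times 2T}$ satisfies
\begin{align}
\bE[\hat S_{pq}\hat S_{p'q'}]=C_{pp'}C_{qq'},
\end{align}
that is, $\mathrm{vec}(\hat S)$ has covariance $C\otimes C$. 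The desired pair $(S(X_1),S(X_2))$ consists of the two diagonal $T\times T$ blocks of $\hat S$. The off-diagonal blocks are simply discarded.

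Second, I will factor $C=LL^\top$ with $L\in\bR^{2T\times 2T}$ at cost $O(T^3)$. Since $C$ may be only semi-definite, I will use either a pivoted Cholesky factorization or a symmetric eigendecomposition, $L=U\Lambda^{1/2}$ with negative rounding eigenvalues clipped to zero. Both cost $O(T^3)$. I will then draw $W\in\bR^{2T\times 2T}$ with i.i.d.\ $\cN(0,1)$ entries, which costs $O(T^2)$, and set $\hat S=LWL^\top$. A direct check shows
\begin{align}
\bE[(LWL^\top)_{pq}(LWL^\top)_{p'q'}]=\sum_{i,j}L_{pi}L_{qj}L_{p'i}L_{q'j}=C_{pp'}C_{qq'}.
\end{align}
Since $\hat S$ is a linear image of a Gaussian, it is exactly the centered Gaussian with the required law. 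This construction mirrors the finite-width structure $Z'W_QW_K^\top Z'^\top$, with $L$ playing the role of $Z'/\sqrt{\dmod}$ in the limit. The two matrix products cost $O(T^3)$, and extracting the two diagonal blocks costs $O(T^2)$, so the total cost is $O(T^3)$.

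The main obstacle is conceptual rather than computational. One must recognize that the cross-correlations between $S(X_1)$ and $S(X_2)$ must be sampled jointly, and that this is achieved by embedding both inputs in one $2T$-token covariance $C$. This embedding keeps the Kronecker form, so only a $2T\times 2T$ factorization is needed. A secondary point is robustness to a rank-deficient $C$, which is handled by the eigendecomposition or pivoted factorization above without changing the $O(T^3)$ bound. If multiple heads are involved, they are independent in the limit, so each head costs one extra $W$ draw and one extra product, and the bound in $T$ is unchanged.
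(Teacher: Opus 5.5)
Your proof is correct, and it takes a genuinely different route from the paper's.

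\textbf{The paper's route.} The paper samples sequentially. It draws $S(X_1)=L_1UL_1^\top$ from a Cholesky factor of $\Sigma'(X_1,X_1)$. It then draws $S(X_2)$ from its Gaussian conditional given $S(X_1)$, using the Kronecker structure in two places: to write the conditional mean as $BS(X_1)B^\top$ with $B=\Sigma'(X_2,X_1)\Sigma'(X_1,X_1)^{-1}$, and to whiten the Schur complement into $I\otimes I-H\otimes H$, which the eigenbasis of $H$ diagonalizes. This needs $\Sigma'(X_1,X_1),\Sigma'(X_2,X_2)\succ 0$, with a pseudo-inverse invoked otherwise. It also needs matrix inversions, two Cholesky factorizations and an eigendecomposition.

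\textbf{Your route.} You observe that the paper's $2T^2\times 2T^2$ block covariance $(\Sigma'_{ij}\otimes\Sigma'_{ij})_{i,j\in\{1,2\}}$ is a principal submatrix of $C\otimes C$, where $C$ is the stacked $2T\times 2T$ token Gram matrix. This collapses all of the above into one factorization of a positive semi-definite $2T\times 2T$ matrix and one product $LWL^\top$.

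\textbf{What each buys.} Your construction avoids inverses and needs no definiteness assumption beyond $C\succeq 0$, which you justify correctly. It makes positive semi-definiteness of the joint score covariance immediate, and it mirrors the finite-width computation of scores on a concatenated sequence. The only price is a constant factor. You can even avoid that by forming only $L_iWL_i^\top$ for the two $T\times 2T$ row blocks $L_i$ of $L$. The same $W$ must be used for both blocks, since the shared $W$ is what produces the cross-covariance $\Sigma'(X_1,X_2)\otimes\Sigma'(X_1,X_2)$. In exchange, the paper's route yields the explicit conditional law of $S(X_2)$ given $S(X_1)$, with the cross-input correlation encoded in the spectrum of $H$. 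The complexity claim does not need this.

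\textbf{A small point of phrasing.} The off-diagonal blocks of $\hat S$ (queries from $X_1$, keys from $X_2$) are not network scores, so Proposition~\ref{prop:CovarianceFactorization} says nothing about them. You should simply define $\hat S:=LWL^\top$ as an auxiliary array. Then note that your direct moment computation, restricted to the two diagonal blocks, reproduces exactly the covariances that Proposition~\ref{prop:CovarianceFactorization} gives for the pairs $(X_1,X_1)$, $(X_1,X_2)$ and $(X_2,X_2)$.
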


\begin{proof}
We utilize the Kronecker tensor structure of the score matrix covariance as well as conditioning in multivariate Gaussians to analyze its computational complexity. First, we consider $S(X_1) \in \bR^{T \times T}$. Then 
\begin{align}
    S(X_1) \sim \cN(0,\Theta(X_1,X_1))
\end{align}
with $\Theta(X_1,X_1) = \Sigma'(X_1,X_1) \otimes \Sigma'(X_1,X_1)$ as in Eq.~\eqref{eq:Factorization}. Note that $\Sigma'(X_1,X_1)\succeq 0$ by definition and consequently $\Theta(X_1,X_1)\succeq 0$. Given a Cholesky decomposition $\Sigma'(X_1,X_1)=L_1L_1^\top$, we can decompose $\Theta(X_1,X_1) = (L_1\otimes L_1)(L_1\otimes L_1)^\top$. An $S(X_1)$ sample can then be generated at a cost of $O(T^3)$ via $S(X_1) = L_1UL_1^\top$ for $U_{ab} \overset{\text{i.i.d.}}{\sim} \cN(0,1)$.

Next, we apply Bayes' rule
\begin{align}
    p_{(S(X_1),S(X_2))} = p_{S(X_1)} p_{S(X_2) \mid S(X_1)}.
\end{align}
The joint covariance of  $(S(X_1),S(X_2))$ is the block matrix
\begin{align}
    \left(\begin{array}{cc}
        \Theta_{11} & \Theta_{12} \\
         \Theta_{21} & \Theta_{22}
    \end{array}\right) = 
    \left(\begin{array}{cc}
        \Sigma'_{11} \otimes \Sigma'_{11} & \Sigma'_{12} \otimes \Sigma'_{12} \\
        \Sigma'_{21} \otimes \Sigma'_{21} & \Sigma'_{22} \otimes \Sigma'_{22}
    \end{array}\right)\succeq 0
\end{align}
where $\Sigma'_{ij} \equiv \Sigma'(X_i,X_j)$. The conditional distribution of $S(X_2)$ given $S(X_1)$ is known \citep[Sec. 9.3]{vonMises1964Probability} to be
\begin{align}
    S(X_2) | S(X_1) &\sim \cN\left(M_{\mathrm{cond}},\Theta_{\mathrm{cond}}\right)\\
    M_{\mathrm{cond}} &= \Theta_{21}\Theta_{11}^{-1}\mathrm{vec}(S(X_1)) \\
    \Theta_{\mathrm{cond}} &= \Theta_{22}-\Theta_{21}\Theta_{11}^{-1}\Theta_{12} \succeq 0
\end{align}
which involves calculating
\begin{align}
    M_{\mathrm{cond}} = \Sigma'_{21}(\Sigma'_{11})^{-1} S(X_1) (\Sigma'_{21}(\Sigma'_{11})^{-1})^\top
\end{align}
and $\Sigma'_{21}(\Sigma'_{11})^{-1}\Sigma'_{21}$ for $\Theta_{\mathrm{cond}}$, again, at a cost of $O(T^3)$.\footnote{If $\Sigma'_{11}$ is not positive definite (PD), we can use the Moore-Penrose pseudo-inverse and the conditional becomes degenerate.}

To draw a sample from the conditioned Gaussian, we will assume $\Sigma'_{11},\Sigma'_{22}\succ 0$. We can then calculate 
\begin{align}
    H := L_2^{-1}\Sigma'_{21}(\Sigma'_{11})^{-1}\Sigma'_{21} (L_2^\top)^{-1}
\end{align}
for a Cholesky decomposition $\Sigma'_{22}=L_2L_2^\top$ at a cost of $O(T^3)$. Then we have
\begin{align*}
     \left((L_2^\top)^{-1} \otimes (L_2)^{-1}\right) \Theta_{\mathrm{cond}} \left((L_2)^{-1} \otimes (L_2^\top)^{-1}\right) = I \otimes I - H \otimes H.
\end{align*}
Since the LHS is PSD, we have $I-H\succeq 0$. Consequently, if we diagonalize $H = Q\Lambda Q^{-1}$ with $\Lambda = \mathrm{diag}(\bm{\lambda})$ and $\bm{\lambda}=(\lambda_1,\ldots,\lambda_T)$, then we can generate a sample at a cost of $O(T^3)$ with 
\begin{align}
    S(X_2) | S(X_1) &= M_{\mathrm{cond}} + L_2 Q \Delta Q^\top L_2^\top \\
    \Delta &:= \sqrt{\bm{1}\bm{1}^\top-\bm{\lambda}\bm{\lambda}^\top} \odot Q^{-1} U (Q^\top)^{-1}
\end{align}
for $U_{ab} \overset{\text{i.i.d.}}{\sim} \cN(0,1)$, where $\odot$ is matrix element-wise product. To conclude, we obtain a sample of $(S(X_1),S(X_2))$ at an overall cost of $O(T^3)$.
\end{proof}

%%%%%%%%%%%%%%%%%%%%%%%%%%%%%%%%
%%%%%%%%%%%%%%%%%%%%%%%%%%%%%%%%
%%%%%%%%%%%%%%%%%%%%%%%%%%%%%%%%

\section{Kernel Evaluation}
\label{app:kernel_evaluation}

%%%%%%%%%%%%%%%%%%%%%%%%%%%%%%%%
%%%%%%%%%%%%%%%%%%%%%%%%%%%%%%%%
%%%%%%%%%%%%%%%%%%%%%%%%%%%%%%%%

\subsection{Propagating Through the Attention Layer}
\label{app:attention_layer_propagation}

\AttentionCovarianceUpdate*

\begin{proof}
In finite width, the covariance update across the attention layer requires computing the expectation $\bE[\bm{z}_a(X_1)^\top \bm{z}_b(X_2)]$ for all pairs $a,b \in [T]$, which involves averaging over the random hidden representations $Z'=(Z'(X_1),Z'(X_2))$ and the random weights $W=(W^h_Q,W^{h}_K,W^{h}_V)$. 
By iterated expectation, we have
\begin{align}
    \bE\left[\bm{z}_a(X_1)^{\top} \bm{z}_b(X_2)\right] = \bE_{Z'}\!\left[\bE_{W}[\bm{z}_a(X_1)^{\top} \bm{z}_b(X_2)| Z']\right].
\end{align}
The inner expectation is straightforward
\begingroup
\allowdisplaybreaks
\begin{align}
& \bE_{W}[\bm{z}_a(X_1)^{\top} \bm{z}_b(X_2)|Z'] = \nonumber\\
& = \frac{1}{H} \sum_{h,h'=1}^H \sum_{c,e=1}^T \bE_{W}[A^h_{ac}(X_1) A^{h'}_{be}(X_2) \bm{z'}_c(X_1)^\top W^h_{V} W^{h'\top}_{V} \bm{z'}_e(X_2) |Z'] \nonumber \\
& = \frac{1}{H} \sum_{h=1}^H \sum_{c,e=1}^T \bm{z'}_c(X_1)^\top \bm{z'}_e(X_2) \bE_{(W^h_Q,W^{h}_K)}[A^h_{ac}(X_1) A^{h}_{be}(X_2)  |Z'], \label{eq:conditional_finite_covariance}
\end{align}
\endgroup
where, in the second equality, we used $(W^{h}_V)_{ij} \overset{\mathrm{iid}}{\sim} \cN(0,1/\dmod)$, so that $\bE[W^h_{V} W^{h'\top}_{V}] = \delta_{hh'}I_{\dmod}$. Therefore,
\begin{multline}
    \frac{1}{\dmod} \bE\left[\bm{z}_a(X_1)^{\top} \bm{z}_b(X_2)\right] =\\  \sum_{c,e=1}^T \bE_{Z'}\!\left[\frac{\bm{z'}_c(X_1)^\top \bm{z'}_e(X_2)}{\dmod} \ \bE_{(W^h_Q,W^{h}_K)}\left[\frac{1}{H} \sum_{h=1}^H A^h_{ac}(X_1) A^{h}_{be}(X_2) \right.\left|Z' \vphantom{\sum_{h=1}^H}\right]\right].
\end{multline}

We assume the incoming token-to-token covariance matrix $\Cov'(X_1,X_2)$ is given. Taking the limit $\dmod,H \to \infty$, the normalized overlaps in $Z'$ may be replaced by the deterministic $\Cov_{ce}'(X_1,X_2)$, and the head average converges to expectation under the limiting score law determined by Proposition~\ref{prop:CovarianceFactorization}. Thus, we have
\begin{align}
\Cov_{ab}(X_1,X_2) &= \sum_{c,e=1}^T \Cov_{ce}'(X_1,X_2) \bE_{(S(X_1),S(X_2))}\left[A_{ac}(X_1) A_{be}(X_2)\right] \nonumber\\
&= \bE_{(S(X_1),S(X_2))}\left[A_{a\bullet}(X_1) \Cov'(X_1,X_2) A_{\bullet b}(X_2)^\top\right]
\end{align}
which completes the proof.
\end{proof}

\begin{remark}[First transformer block]
In the first transformer block $Z'(X)$ is the embedded input field in the model's internal representation space, which may include positional encoding (see App.~\ref{app:transformer_setup}). This case can be analyzed similarly, but is slightly more cumbersome, since it requires working directly with the induced query, key, and value fields $Q^h(X), K^h(X)$, and $V^h(X)$, rather than only with the recursive hidden field notation used for subsequent blocs.
\end{remark}

%%%%%%%%%%%%%%%%%%%%%%%%%%%%%%%%
%%%%%%%%%%%%%%%%%%%%%%%%%%%%%%%%
%%%%%%%%%%%%%%%%%%%%%%%%%%%%%%%%

\subsection{Propagating Through the Transformer Block}
\label{app:transformer_block_propagation}

We fix a transformer block in the NNGP or NTK limit and assume the incoming token-to-token covariance matrix $\Cov'(X_1,X_2) \in \bR^{T \times T}$ of Eq.~\eqref{eq:covariance} or the incoming NTK of Eq.~\eqref{eq:ntk} are given. We assume that the coefficients $\alpha_\nu$ of the kernel predictor induced by the propagated covariance $\Cov(X_1,X_2)$ and NTK $\Theta(X_1,X_2)$ satisfy the bounded-weight assumption of Remark~\ref{rem:bounded_kernel_weights}. In our setting, we also assume the activation function $\phi$ is bounded by linear growth (e.g., ReLU or GeLU).

Because standard transformer architectures employ LayerNorm (Eq.~\eqref{eq:layer_norm}), the input covariance matrix is bounded
\begin{align}
\label{eq:input_covariance_bounded}
    \norm{\Cov'(X_1,X_2)}_{\max} = \max_{a,b} \abs{\Cov'_{ab}(X_1,X_2)} = O(1).
\end{align}
Unlike the covariance bound, the NTK bound does not follow directly from
LayerNorm, since $\Theta'$ depends on derivatives with respect to all previous
parameters. However, for fixed depth, 
\begin{align}
\label{eq:input_ntk_bounded}
    \norm{\Theta'(X_1,X_2)}_{\max} = \max_{a,b} \abs{\Theta'_{ab}(X_1,X_2)} = O(1)
\end{align}
follows
by induction from the NTK recursion relation of \citep[Theorem 18]{Hron2020},
\begin{align}
\label{eq:ntk_recursion}
    \Theta_{ab} &= 2\Cov_{ab} + \bE_{(S_1,S_2)}\!\left[A_1\Theta'A_2^\top \right]_{ab} \nonumber\\
    &+ \left(2\Cov'_{ab}+\Theta'_{ab}\right)\bE_{(S_1,S_2)}\!\left[\mathrm{Tr}( (J_SA_1^a)^\top \Cov' J_SA_2^b (\Cov')^\top) \right] \nonumber\\
    &+ \Cov'_{ab} \bE_{(S_1,S_2)}\!\left[\mathrm{Tr}( (J_SA_1^a)^\top \Cov' J_SA_2^b (\Theta')^\top) \right],
\end{align}
where we used
\begin{align}
    & A_j \equiv A(X_j) \in \bR^{T \times T}, \quad A_j^a = (A_j)_{a\bullet} \in \bR^T \nonumber\\
    & S_j \equiv S(X_j) \in \bR^{T \times T}, \quad j=1,2
\end{align}
and $J_SA_j^a \in \bR^{T \times T}$ denote the Jacobian with respect to the scores $S$. We omit the pair $(X_1,X_2)$ from input and output kernels and attention scores for better readability.

\BlockPropagationComplexity*

\begin{proof}
It is sufficient to show that the computational complexity of evaluating the MC estimator of the NNGP / NTK kernel propagated through the attention layer is $O(N_{\mtc}T^3)$, with MC error scaling as $O(N_{\mtc}^{-1/2})$. We begin with the MC estimator of the propagated token-to-token covariance. Using Proposition~\ref{prop:AttentionCovarianceUpdate}, 
\begin{align}
\label{eq:mc_covariance_estimation}
    \Cov(X_1,X_2) \approx \frac{1}{N_{\mtc}} \sum_{n=1}^{N_{\mtc}}A(X_1)^{(n)}\Cov'(X_1,X_2)A(X_2)^{(n)\top}.
\end{align}
For each sample, we first draw $2T^2$ score samples $(S_{ac}(X_1),S_{be}(X_2))_{a,c,b,e \in [T]}$. By Proposition~\ref{prop:score_sample_complexity}, this can be done in $O(T^3)$ computational steps. Producing the attention matrices $A(X_1),A(X_2) \in \bR^{T \times T}$ requires additional $O(T^2)$ steps. Multiplying $A(X_1)^{(n)} \Cov'(X_1,X_2) A(X_2)^{(n)\top}$ requires $O(T^3)$ additional operations. The cost of computing a summand for a single MC sample is thus $O(T^3)$, dominated by efficient Cholesky decomposition and matrix multiplication, hence an overall $O(N_{\mtc}T^3)$. 

Next, we consider the MC estimator of the propagated NTK. 
By the recursion relation of Eq.~\eqref{eq:ntk_recursion}, the computational complexity of estimating $\Theta \in \bR^{T \times T}$ using MC integration is $O(N_{\mtc}T^3)$ for its first two summands, as before. To work out the trace terms, we note that 
\begin{align}
    J_SA_j^a = \diag(A_j^a) - A_j^a(A_j^a)^\top,
\end{align}
by \cite[Lemma 2]{Nair2026Softmax}, and, consequently,
\begin{align}
\label{eq:trace_attention_jacobians}
    \mathrm{Tr}( (J_SA_1)^\top M J_SA_2 N^\top) &= A_1(M\odot N)A_2^\top - A_1\left((MA_2^\top)\odot(NA_2^\top)\right) \nonumber\\
    &- \left((A_1M)\odot(A_1N)\right)A_2^\top + \left(A_1MA_2^\top\right)\odot\left(A_1NA_2^\top\right),
\end{align}
for any $M,N\in\bR^{T \times T}$.
Applying Eq.~\eqref{eq:trace_attention_jacobians} once to $M=N=\Cov'$ and twice to $M=\Cov',N=\Theta'$, we get a computational complexity of $O(T^3)$ for the trace terms in Eq.~\eqref{eq:ntk_recursion} simultaneously over all $a,b\in[T]$, and hence $O(N_{\mtc}T^3)$ for the MC estimator with $N_{\mtc}$ samples.

Kernel propagation through deep FCNs (App.~\ref{app:FCN}) was considered in \citep{Lee2018DNN}. In our setting, the analogous covariance propagation rule is obtained by replacing the activation map $\phi$ with
$\phi \circ \LayerNorm$. As we pass the covariance through the MLP layer
(Eq.~\ref{eq:mlp_layer}),
\begin{multline}
\label{eq:mlp_propagation}
    \Cov^{\mathrm{out}}_{ab}(X_1,X_2)
    = \sigma_{\bm{b}}^{2} + \\
    \lim_{\dmod\to\infty}
    \frac{\sigma_W^2}{\dmod}
    \bE_{\left(\bm z_{a}(X_1),\bm z_{b}(X_2)\right)\sim\cN(0,k)}
    \left[
        \phi\left(\LayerNorm(\bm{z}_{a}(X_1))\right)^\top
        \phi\left(\LayerNorm(\bm{z}_{b}(X_2))\right)
    \right],
\end{multline}
where \(\sigma_W,\sigma_{\bm b}\) are given in Eq.~\eqref{eq:tranformer_mlp_draws}
and \(\cN(0,k)\) is the bivariate Gaussian distribution with the \(2\times 2\)
covariance matrix \(k\) defined via
\begin{align}
\label{eq:mlp_2x2_covariance}
    k_{aa} = \Cov_{aa}(X_1,X_1),\quad
    k_{ab}=k_{ba}=\Cov_{ab}(X_1,X_2),\quad
    k_{bb}=\Cov_{bb}(X_2,X_2).
\end{align}
The corresponding NTK propagation through the MLP has the FCN form of \citet[Theorem~1]{Jacot2018}, with the scalar activation derivative kernel replaced by the Jacobian trace of $\phi\circ\LayerNorm$,
\begin{multline}
\label{eq:mlp_ntk_propagation}
    \Theta^{\mathrm{out}}_{ab}(X_1,X_2)
    = \Cov^{\mathrm{out}}_{ab}(X_1,X_2) + \\
    \Theta_{ab}(X_1,X_2)
    \lim_{\dmod\to\infty}
    \frac{\sigma_W^2}{\dmod}
    \bE_{\left(\bm z_a,\bm z_b\right)\sim\cN(0,k)}
    \left[
        \mathrm{Tr}\left(
            J_{\bm z_a}\left(\phi\circ\LayerNorm\right)^\top
            J_{\bm z_b}\left(\phi\circ\LayerNorm\right)
        \right)
    \right].
\end{multline}
Here \(J_{\bm z_a}(\phi\circ\LayerNorm)\in\bR^{\dmod\times \dmod}\) denotes the Jacobian with respect to the coordinates of \(\bm z_a\), namely
\begin{align}
    \left[J_{\bm z_a}(\phi\circ\LayerNorm)\right]_{ij}
    =
    \frac{\partial [\phi(\LayerNorm(\bm z_a))]_i}{\partial z_{a j}} .
\end{align}
Since the MLP acts token-wise,
neither Eq.~\eqref{eq:mlp_propagation} nor Eq.~\eqref{eq:mlp_ntk_propagation}
mix sequence indices. Thus, evaluating the MLP covariance and NTK updates for all \(a,b\in[T]\) costs \(O(T^2)\), up to a \(T\)-independent cost for the Gaussian expectations, and is dominated by the \(O(T^3)\) attention-layer
computation.

Each MC sample in Eq.~\eqref{eq:mc_covariance_estimation}  has uniformly bounded variance. Since the attention weights are nonnegative and row-stochastic, each sample entry is a convex combination of the entries of $\Cov'(X_1,X_2)$. Under LayerNorm, the $\Cov'(X_1,X_2)$ entries are bounded by a $T$-independent constant (Eq.~\eqref{eq:input_covariance_bounded}). Hence, the standard MC error for the propagated covariance estimate scales as $O(N_{\mtc}^{-1/2})$. 

To control the MC variance of the NTK estimator, we use the entrywise
$L_1$-bound for the softmax Jacobian from Lemma~\ref{lem:softmax_norms}(iv). Consequently, 
\begin{align}
\label{eq:trace_bound}
    \abs{\mathrm{Tr}\left((J_SA_1^a)^\top M J_SA_2^b N^\top\right)}
    &\le
    \sum_{c_1,c_2,e_1,e_2}
    \abs{M_{c_1c_2}}\abs{N_{e_1e_2}}
    \abs{(J_SA_1^a)_{c_1e_1}}
    \abs{(J_SA_2^b)_{c_2e_2}} \nonumber\\
    &\le 4\norm{M}_{\max}\norm{N}_{\max}
\end{align}
As noted, $\norm{\Cov'}_{\max}=O(1)=\norm{\Theta'}_{\max}$, and we get that the trace terms are uniformly bounded entrywise. Hence, here, as well, the standard MC error for the propagated NTK estimate scales as $O(N_{\mtc}^{-1/2})$. 

Since Eq.~\eqref{eq:kernel_predictor} is a weighted sum of $P$ independent MC-estimated kernel entries, each with fluctuation $O(N_{\mtc}^{-1/2})$, and since the coefficients $\alpha_\nu$ are bounded by assumption (following Remark~\ref{rem:bounded_kernel_weights}), the total MC fluctuation of the predictor scales as $O(\sqrt{P/N_{\mtc}})$.
\end{proof}

%%%%%%%%%%%%%%%%%%%%%%%%%%%%%%%%
%%%%%%%%%%%%%%%%%%%%%%%%%%%%%%%%
%%%%%%%%%%%%%%%%%%%%%%%%%%%%%%%%

\subsection{Propagating Through the Transformer}
\label{app:transformer_propagation}

In Proposition~\ref{prop:BlockPropagationComplexity}, we established that a single MC estimation of $\Cov(X_1,X_2)$ or $\Theta(X_1,X_2)$  yields an estimation error bounded by $O(N_{\mtc}^{-1/2})$ irrespective of the sequence length $T$. However, evaluating the kernel of a deep transformer involves a sequential chain of estimations, where the empirically evaluated covariance matrix of one block is passed into the non-linear Gaussian expectations of the next. A critical theoretical concern is whether the accumulation of MC errors across transformer blocks magnifies extensively with the sequence length $T$. For if it does, the number of MC samples, $N_{\mtc}$, will grow polynomially or above with $T$ in order to maintain a target accuracy at network output, thereby invalidating the inference-complexity bound established in Proposition~\ref{prop:BlockPropagationComplexity}.
Here, we show that this fortunately does not occur; the recursion maps for both the attention and MLP layers are Lipschitz continuous with respect to matrix max-norm, and, crucially, their Lipschitz constants are strictly $O(1)$. 

As in App.~\ref{app:transformer_block_propagation}, we let $\Cov'(X_1,X_2)$ and $\Theta'(X_1,X_2)$ denote the true input covariance matrix and NTK for a fixed pair of combinatorial task instance inputs $X_1,X_2 \in \bR^{T \times d}$ of a given transformer block, while 
\begin{align}
    \Cov'(X_1,X_2) \mapsto \Cov'(X_1,X_2) + \Delta\Cov'(X_1,X_2) \label{eq:CovShift}\\
    \Theta'(X_1,X_2) \mapsto \Theta'(X_1,X_2) + \Delta\Theta'(X_1,X_2) \label{eq:ThetaShift}
\end{align}
will be their perturbed MC estimates, with a sup-norm estimation error of
\begin{align}
    \norm{\Delta\Cov'(X_1,X_2)}_{\max} = \max_{a,b} \abs{\Delta\Cov'_{ab}(X_1,X_2)},\\
    \norm{\Delta\Theta'(X_1,X_2)}_{\max} = \max_{a,b} \abs{\Delta\Theta'_{ab}(X_1,X_2)}.
\end{align}
Following Eqs.~\eqref{eq:input_covariance_bounded} and \eqref{eq:input_ntk_bounded},
\begin{align}
    \norm{\Cov'(X_1,X_2)}_{\max} = O(1) = \norm{\Theta'(X_1,X_2)}_{\max}.
\end{align}

Let us fix a pair of tokens $a, b \in [T]$ and inputs $X_1, X_2 \in \bR^{T\times d}$, and denote
\begin{align}
\label{eq:s_j_notation}
    \bm{s}_1 = S_{a\bullet}^{h=1}(X_1), \quad \bm{s}_2 = S_{b\bullet}^{h=1}(X_2), \quad \bm{a}(\bm{s}_i) = \Softmax\left(\bm{s}_i\right).
\end{align}
Both $\bm{s}_i$ are considered random vectors in $\bR^T$. Taking the expectation over $\frac{1}{H} \sum_{h=1}^H$ implies that we may choose a single head to evaluate on, here, chosen arbitrarily to be $h=1$. 

The $2T$-dimensional random vector $\tilde{\bm{s}}=(\bm{s}_1,\bm{s}_2)$ follows a centered multivariate Gaussian distribution $\cN(0, C)$ with covariance matrix $C \in \bR^{2T \times 2T}$. A perturbation introduced in the previous block, as in Eqs.~\eqref{eq:CovShift} and \eqref{eq:ThetaShift}, also shifts  $C$ to $C + \Delta C$. As we have suppressed $X_1,X_2$ in our notation for $\bm{s}_i$, $i=1,2$, we will do so for $\Cov' \equiv \Cov'(X_1,X_2)$ and $\Theta' \equiv \Theta'(X_1,X_2)$ for the rest of this section. 

\begin{lemma}[$\Delta C$ bound] 
\label{lem:DeltaC}
$\norm{\Delta C}_{\max} \le 2\norm{\Cov'}_{\max} \norm{\Delta\Cov'}_{\max} + \norm{\Delta\Cov'}_{\max}^2$.
\end{lemma}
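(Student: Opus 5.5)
The plan is to write the entries of $C$ explicitly via Proposition~\ref{prop:CovarianceFactorization} and then expand the perturbation algebraically. Since $\bm{s}_1 = S_{a\bullet}(X_1)$ and $\bm{s}_2 = S_{b\bullet}(X_2)$, every entry of $C$ is a product of two entries of the incoming covariance. For $c,e\in[T]$, the cross block is
\begin{align*}
C_{(1,c),(2,e)} = \bE[S_{ac}(X_1)S_{be}(X_2)] = \Cov'_{ab}\Cov'_{ce},
\end{align*}
and the diagonal blocks are $\Cov'_{aa}(X_1,X_1)\Cov'_{ce}(X_1,X_1)$ and the analogous $X_2$ expression. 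Thus every entry of $C$ has the form $u\,v$, where $u,v$ are entries of the (appropriate) incoming covariance matrix. I will read $\Cov'$ and $\Delta\Cov'$ in the statement as covering all three pairings $(X_1,X_1),(X_1,X_2),(X_2,X_2)$, with the max-norm taken over them. This is implicitly how the perturbation of Eq.~\eqref{eq:CovShift} is applied in the next block.

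Under the shift of Eq.~\eqref{eq:CovShift}, each entry $u v$ becomes $(u+\Delta u)(v+\Delta v)$. The change is therefore
\begin{align*}
\Delta C_{(i,c),(j,e)} = u\,\Delta v + v\,\Delta u + \Delta u\,\Delta v.
\end{align*}
Taking absolute values and bounding each of $|u|,|v|$ by $\norm{\Cov'}_{\max}$ and each of $|\Delta u|,|\Delta v|$ by $\norm{\Delta\Cov'}_{\max}$ gives
\begin{align*}
\abs{\Delta C_{(i,c),(j,e)}} \le 2\norm{\Cov'}_{\max}\norm{\Delta\Cov'}_{\max} + \norm{\Delta\Cov'}_{\max}^2.
\end{align*}
Maximizing over all index pairs yields the claim.

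The algebra is trivial; the only point needing care is bookkeeping. All three pairings $(X_i,X_j)$ enter $C$, so the max-norms in the statement must be understood as taken over this joint family, since the diagonal blocks of $C$ involve $\Cov'(X_1,X_1)$ and $\Cov'(X_2,X_2)$ rather than only $\Cov'(X_1,X_2)$. I would state this convention explicitly at the start of the argument and then run the expansion above uniformly across the blocks. No $T$-dependence appears, because each entry of $C$ depends on only two entries of $\Cov'$.
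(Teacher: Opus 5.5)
Your proof is correct and follows the paper's argument: you write each entry of $C$ as a product of two incoming-covariance entries via Proposition~\ref{prop:CovarianceFactorization}, expand $(u+\Delta u)(v+\Delta v)-uv$, and bound each term in max-norm. You make two points explicit that the paper's proof leaves implicit: that the max-norms range over all three pairings $(X_i,X_j)$, and that the diagonal blocks carry the row indices $aa$ and $bb$ (the paper writes $(\Cov'_{ij})_{ab}$ for every block, which is slightly loose when $i=j$).
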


\begin{proof}
As established in Proposition~\eqref{prop:CovarianceFactorization}, the covariance $C \in \bR^{2T \times 2T}$ factorizes as 
\begin{align}
    C_{(i,c), (j,e)} = (\Cov_{ij}')_{ab} (\Cov_{ij}')_{ce},
\end{align}
where $\Cov'_{ij} \equiv \Cov'(X_i,X_j)$. Under perturbation, the corresponding score covariance is shifted
\begin{align}
    C_{(i,c), (j,e)} \mapsto \left((\Cov_{ij}')_{ab}+
    (\Delta\Cov_{ij}')_{ab}\right) \left((\Cov_{ij}')_{ce}+(\Delta\Cov_{ij}')_{ce}\right),
\end{align}
for $i,j \in \{1,2\}$ and $c,e \in [T]$.
The difference in the score covariance matrices is then
\begin{align}
    \Delta C_{(i,c), (j,e)} = (\Cov_{ij}')_{ab} (\Delta\Cov_{ij}')_{ce} + (\Delta\Cov_{ij}')_{ab} (\Cov_{ij}')_{ce} + (\Delta\Cov_{ij}')_{ab} (\Delta\Cov_{ij}')_{ce}
\end{align}
and therefore $\norm{\Delta C}_{\max} \le 2\norm{\Cov'}_{\max} \norm{\Delta\Cov'}_{\max} + \norm{\Delta\Cov'}_{\max}^2$.
\end{proof}

We will use the constraints set upon ${\bm a}({\bm s}) \in \bR^T$ by Softmax to establish useful bounds on its derivatives. 

\begin{lemma}[Derivative norms for Softmax] \label{lem:softmax_norms}
For $c\in [T]$, the following bounds hold
\begin{enumerate}[label=(\roman*)]
    \item $\norm{\nabla \bm{a}(\bm s)_c}_1 \leq 2\bm{a}(\bm s)_c$,
    \item $\norm{\nabla^2 \bm{a}(\bm s)_c}_{1,1} \leq 6\bm{a}(\bm s)_c$,
    \item $\norm{\nabla^3 \bm a(\bm s)_c}_{1,1,1} \le 26 \bm a(\bm s)_c$,
    \item $\norm{J_{\bm s} \bm{a}(\bm s)}_{1,1} \le 2$,
\end{enumerate}
where $\bm{a}(\bm s)$ is read as either $\bm{a}(\bm s_i)$, $i=1,2$.
\end{lemma}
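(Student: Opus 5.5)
The plan is to compute everything from the single identity $\partial_{s_j}\bm a_c = \bm a_c(\delta_{cj}-\bm a_j)$, i.e.\ $\nabla \bm a_c = \bm a_c\,\bm u_c$ with $\bm u_c := \bm e_c - \bm a$. Each higher derivative then comes from the product rule, and each resulting term is bounded in the entrywise $L_1$ norm by multiplying $L_1$ norms of its factors.

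The elementary fact used throughout is
\begin{align}
\norm{\bm u_c}_1 = (1-\bm a_c)+\sum_{j\neq c}\bm a_j = 2(1-\bm a_c)\le 2,
\end{align}
which uses only $\bm a_j\ge 0$ and $\sum_j \bm a_j=1$. This gives (i) at once: $\norm{\nabla\bm a_c}_1=\bm a_c\norm{\bm u_c}_1\le 2\bm a_c$. For (iv), the Jacobian is $J_{\bm s}\bm a=\diag(\bm a)-\bm a\bm a^\top$. Row $c$ of it is $\nabla \bm a_c$, so summing (i) over rows gives $\norm{J_{\bm s}\bm a}_{1,1}\le\sum_c 2\bm a_c(1-\bm a_c)\le 2$. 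Two consequences will be reused below: $\nabla\bm u_c=-J_{\bm s}\bm a$ for every $c$, and this matrix has $\norm{\cdot}_{1,1}\le 2$.

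For (ii), I will differentiate $\bm a_c\bm u_c$ to get
\begin{align}
\nabla^2\bm a_c=\bm a_c\,\bm u_c\otimes\bm u_c-\bm a_c\,M, \qquad M_{jk}=\bm a_j(\delta_{jk}-\bm a_k),
\end{align}
so that $M=J_{\bm s}\bm a$. The $L_1$ norm of an outer product is the product of the $L_1$ norms, so the first term contributes at most $4\bm a_c$ and the second at most $2\bm a_c$. That totals $6\bm a_c$.

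For (iii), I will differentiate each of the two terms of $\nabla^2\bm a_c$ once more.
\begin{itemize}
\item From $\bm a_c\,\bm u_c\otimes\bm u_c$: the derivative falling on $\bm a_c$ gives $\bm a_c\,\bm u_c^{\otimes 3}$, bounded by $8\bm a_c$. The derivative falling on either copy of $\bm u_c$ gives a term $\bm a_c\,\bm u_c\otimes(-M)$ up to index ordering, bounded by $2\cdot 2\,\bm a_c=4\bm a_c$. There are two such terms, so this part contributes $8\bm a_c$.
\item From $\bm a_c M$: the derivative falling on $\bm a_c$ gives $\bm a_c\,\bm u_c\otimes M$, bounded by $4\bm a_c$. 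The derivative falling on $M$ gives $\bm a_c\,\nabla M$, where
\begin{align}
\partial_l M_{jk}=\bm a_j(\delta_{jl}-\bm a_l)(\delta_{jk}-\bm a_k)-\bm a_j\bm a_k(\delta_{kl}-\bm a_l).
\end{align}
Its $L_1$ norm is at most $\sum_j\bm a_j\norm{\bm u_j}_1^2+\sum_{j,k}\bm a_j\bm a_k\norm{\bm u_k}_1\le 4+2=6$, so this part contributes $6\bm a_c$.
\end{itemize}
Adding up gives $8+8+4+6=26$, which matches the constant $26\bm a_c$ claimed in (iii).

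The only delicate step is the bookkeeping in (iii). Each product-rule term has to be written with the correct factor of $\bm a_c$ pulled out, so that the bound is proportional to $\bm a_c$ and not merely $O(1)$. The mixed tensor $\nabla M$ also has to be split into its two pieces before taking norms, so that the normalization $\sum_j\bm a_j=1$ can absorb the free summation indices.
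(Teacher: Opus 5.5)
Your proposal is correct and takes essentially the same route as the paper: your $(\bm u_c)_j=\delta_{cj}-\bm a_j$ is the paper's $L_{cj}$, your form $\bm a_c(\bm u_c\otimes\bm u_c-M)$ is the paper's $\bm a_c(L_{ce}L_{cb}-\bm a_eL_{eb})$, and your third-derivative tally $8+8+4+6$ is the paper's $12+4+4+4+2$ regrouped term by term. The only differences are cosmetic: you bound (ii) in factored form ($4+2$) rather than expanding it into five monomials, and you obtain (iv) by summing (i) over rows rather than computing $2(1-\norm{\bm a}_2^2)$ directly, which is the same quantity.
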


\begin{proof}
(i) By \cite[Lemma 2]{Nair2026Softmax}, the first derivatives are given by $\partial_e \bm{a}(\bm s)_c = \bm{a}(\bm s)_c(\delta_{ce} - \bm{a}(\bm s)_e)$. Then
\begin{align}
    \norm{\nabla \bm{a}(\bm s)_c}_1 &= \sum_{e=1}^T \abs{\bm{a}(\bm s)_c(\delta_{ce} - \bm{a}(\bm s)_e)} \\&= \bm{a}(\bm s)_c(1-\bm{a}(\bm s)_c) + \sum_{e \neq c} \bm{a}(\bm s)_c \bm{a}(\bm s)_e \nonumber = 2\bm{a}(\bm s)_c(1-\bm{a}(\bm s)_c) \le 2\bm{a}(\bm s)_c.
\end{align}
(ii) The second derivatives are given by
\begin{align}
    \partial^2_{eb} \bm{a}(\bm s)_c &= \partial_b \left( \bm{a}(\bm s)_c \delta_{ce} - \bm{a}(\bm s)_c \bm{a}(\bm s)_e \right) \nonumber \\
    &= \bm{a}(\bm s)_c \delta_{ce} \delta_{cb} - \bm{a}(\bm s)_c \bm{a}(\bm s)_b \delta_{ce} \nonumber\\& - \bm{a}(\bm s)_c \bm{a}(\bm s)_e \delta_{cb} - \bm{a}(\bm s)_c \bm{a}(\bm s)_e \delta_{eb} + 2 \bm{a}(\bm s)_c \bm{a}(\bm s)_e \bm{a}(\bm s)_b.
\end{align}
Using $\sum_{e=1}^T \bm{a}(\bm s)_e = 1$ and $\bm{a}(\bm s)_e \ge 0$, the sum evaluates to
\begin{align}
    \sum_{e,b=1}^T \abs{\partial^2_{eb} \bm{a}(\bm s)_c} \le  \bm{a}(\bm s)_c + \bm{a}(\bm s)_c \cdot 1 + \bm{a}(\bm s)_c \cdot 1 + \bm{a}(\bm s)_c \cdot 1 + 2 \bm{a}(\bm s)_c \cdot 1 = 6 \bm{a}(\bm s)_c,
\end{align}
which proves the bound for $\norm{\nabla^2 \bm{a}(\bm s)_c}_{1,1}$. (iii) For the third derivatives, write
\begin{align}
    L_{ce} = \delta_{ce}-\bm a(\bm s)_e,
\end{align}
so that $\partial_e \bm a(\bm s)_c=\bm a(\bm s)_c L_{ce}$ and $\sum_{e=1}^T \abs{L_{ce}}\le 2$. From the second-derivative formula,
\begin{align}
    \partial^2_{eb}\bm a(\bm s)_c
    =
    \bm a(\bm s)_c
    \left(
        L_{ce}L_{cb}
        -
        \bm a(\bm s)_e L_{eb}
    \right).
\end{align}
Differentiating once more gives
\begin{multline}
    \partial^3_{ebr}\bm a(\bm s)_c
    =
    \bm a(\bm s)_c
    \Big[
        L_{cr}\left(L_{ce}L_{cb}-\bm a(\bm s)_e L_{eb}\right)
        -\bm a(\bm s)_eL_{er}L_{cb}
        -\bm a(\bm s)_bL_{br}L_{ce} \\
        -\bm a(\bm s)_eL_{er}L_{eb}
        +\bm a(\bm s)_e\bm a(\bm s)_bL_{br}
    \Big].
\end{multline}
Therefore,
\begin{align}
    \sum_{e,b,r=1}^T \abs{\partial^3_{ebr}\bm a(\bm s)_c}
    &\le
    \bm a(\bm s)_c
    \left[
        12+4+4+4+2
    \right]
    =
    26\bm a(\bm s)_c.
\end{align}
Indeed, the first contribution is bounded by
\begin{align}
    \sum_{e,b,r=1}^T
    \abs{L_{cr}}
    \abs{L_{ce}L_{cb}-\bm a(\bm s)_eL_{eb}}
    \le
    2\cdot 6,
\end{align}
using the second-derivative bound above. The remaining four contributions are bounded by
\begin{align}
    \sum_{e,b,r=1}^T \bm a(\bm s)_e\abs{L_{er}}\abs{L_{cb}}\le 4,\quad
    \sum_{e,b,r=1}^T \bm a(\bm s)_b\abs{L_{br}}\abs{L_{ce}}\le 4,
\end{align}
\begin{align}
    \sum_{e,b,r=1}^T \bm a(\bm s)_e\abs{L_{er}}\abs{L_{eb}}\le 4,\quad
    \sum_{e,b,r=1}^T \bm a(\bm s)_e\bm a(\bm s)_b\abs{L_{br}}\le 2.
\end{align}
This proves the bound for $\norm{\nabla^3\bm a(\bm s)_c}_{1,1,1}$. (iv) For the final bound,
\begingroup
\allowdisplaybreaks
\begin{align}
    \norm{J_{\bm s} \bm{a}(\bm s)}_{1,1} &= \sum_{c,e \in [T]} \abs{J_{\bm s} \bm{a}(\bm s)_{ce}} \nonumber\\
    &= \sum_{c \in [T]} \bm{a}(\bm s)_{c}(1-\bm{a}(\bm s)_{c}) + \sum_{c \neq e} \bm{a}(\bm s)_{c}\bm{a}(\bm s)_{e} \nonumber\\
    &= \sum_{c \in [T]} \bm{a}(\bm s)_{c}(1-\bm{a}(\bm s)_{c}) + \sum_{c, e \in [T]} \bm{a}(\bm s)_{c}\bm{a}(\bm s)_{e} - \sum_{c \in [T]} \bm{a}(\bm s)_{c}^2\nonumber\\
    &= \sum_{c \in [T]} \bm{a}(\bm s)_{c} - \sum_{c \in [T]} \bm{a}(\bm s)_{c}^2 + \left(\sum_{c \in [T]} \bm{a}(\bm s)_{c}\right)^2 - \sum_{c \in [T]} \bm{a}(\bm s)_{c}^2\nonumber\\
    &= (1-\norm{\bm{a}(\bm s)}^2_2) + (1-\norm{\bm{a}(\bm s)}^2_2)\nonumber\\
    &\le 2,
\end{align}
\endgroup
using \cite[Lemma 2]{Nair2026Softmax} to establish the second equality and the fact that $\bm{a}(\bm s)$ is a probability vector.
\end{proof}

\subsubsection{NNGP}

We shall begin with the token-to-token covariance matrix. By Proposition~\ref{prop:AttentionCovarianceUpdate}, the exact covariance update is given by
\begin{align}
    \Cov_{ab} = \bE_{\tilde{\bm{s}}} \left[ q_1(\tilde{\bm{s}}) \right], \quad
    q_1(\tilde{\bm{s}}) = \sum_{c,e=1}^T \bm{a}(\bm{s}_1)_c \Cov'_{ce} \bm{a}(\bm{s}_2)_e,
\end{align}
where $\bm{a}(\bm{s}) = \Softmax\left(\bm{s}\right)$. We can then bound its total discrepancy,
\begingroup
\allowdisplaybreaks
\begin{align}
\label{eq:DeltaCov}
    \abs{\Delta\Cov_{ab}} &=\abs{\bE_{\tilde{\bm{s}}\sim\cN(0,C+\Delta C)} \left[ \sum_{c,e=1}^T \bm{a}(\bm{s}_1)_c \left(\Cov'_{ce}+\Delta\Cov'_{ce}\right) \bm{a}(\bm{s}_2)_e \right] - \Cov_{ab}} \nonumber\\
    &\le \underbrace{\abs{\bE_{\tilde{\bm{s}}\sim\cN(0,C+\Delta C)} \left[ \sum_{c,e=1}^T \bm{a}(\bm{s}_1)_c \Cov'_{ce} \bm{a}(\bm{s}_2)_e \right] - \Cov_{ab}}}_{e_{\mathrm{measure}}^{\Cov}} \nonumber\\
    &+ \underbrace{\abs{\bE_{\tilde{\bm{s}}\sim\cN(0,C+\Delta C)} \left[\sum_{c,e=1}^T \bm{a}(\bm{s}_1)_c \Delta\Cov'_{ce} \bm{a}(\bm{s}_2)_e \right]}}_{e_{\mathrm{val}}^{\Cov}},
\end{align}
\endgroup
where the first error contribution, $e_{\mathrm{measure}}^{\Cov}$, comes from the indirect perturbative effect via $\bm{a}(\bm{s}_1)_c$ and $\bm{a}(\bm{s}_2)_e$, while the second, $e_{\mathrm{val}}^{\Cov}$, comes from the direct perturbative effect on $\Cov'_{ce}$.

\begin{lemma}[Bounding $e_{\mathrm{val}}^{\Cov}$]
\label{lem:e_val_bound}
    $e_{\mathrm{val}}^{\Cov} \leq \norm{\Delta\Sigma'}_{\max}$.
\end{lemma}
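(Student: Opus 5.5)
The plan is a pointwise bound inside the expectation, followed by taking expectations. The key fact is that softmax outputs are probability vectors, whatever the (perturbed) Gaussian law of the scores. For every realization $\tilde{\bm s}=(\bm s_1,\bm s_2)$ drawn from $\cN(0,C+\Delta C)$, the vectors $\bm a(\bm s_1)$ and $\bm a(\bm s_2)$ have nonnegative entries summing to one. This requires $C+\Delta C$ to be a valid covariance, so that the expectation is well defined. Beyond that, the perturbation of the measure plays no role in this term.

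First, fix a realization and bound the integrand:
\begin{align*}
    \abs{\sum_{c,e=1}^T \bm a(\bm s_1)_c\,\Delta\Cov'_{ce}\,\bm a(\bm s_2)_e}
    \le \sum_{c,e=1}^T \bm a(\bm s_1)_c\,\abs{\Delta\Cov'_{ce}}\,\bm a(\bm s_2)_e
    \le \norm{\Delta\Cov'}_{\max}\Bigl(\sum_{c}\bm a(\bm s_1)_c\Bigr)\Bigl(\sum_e \bm a(\bm s_2)_e\Bigr)
    = \norm{\Delta\Cov'}_{\max}.
\end{align*}
Equivalently, the bilinear form $\bm a(\bm s_1)^\top \Delta\Cov'\,\bm a(\bm s_2)$ is a convex combination of the entries of $\Delta\Cov'$, with product weights $\bm a(\bm s_1)_c\bm a(\bm s_2)_e$. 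This is the same observation used in Proposition~\ref{prop:BlockPropagationComplexity} to bound the MC variance.

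Second, move the absolute value inside the expectation by Jensen's inequality, $\abs{\bE[\cdot]}\le\bE\abs{\cdot}$. The integrand is bounded, so it is integrable, and its expectation is at most the constant bound. This gives $e_{\mathrm{val}}^{\Cov}\le\norm{\Delta\Cov'}_{\max}$, uniformly in $T$ and in $\Delta C$.

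There is no real obstacle here. The only point needing care is that the bound must hold almost surely under the perturbed measure and not merely under $\cN(0,C)$. This is automatic, because the softmax simplex constraint holds for every score vector in $\bR^T$. The $T$-independence, which is the point that matters for the downstream accumulation argument, comes from the weights summing to one rather than from any entrywise count.
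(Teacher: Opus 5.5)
Your proof is correct and follows the paper's argument: bound the integrand pointwise by $\norm{\Delta\Cov'}_{\max}$ times the product of the softmax row sums, which equal one for every score realization, then pass the absolute value inside the expectation to get a bound uniform in $T$ and in $\Delta C$. Your caveat that $C+\Delta C$ must be a valid covariance for the expectation to make sense is a well-definedness condition the paper leaves implicit; it is not a gap in the argument.
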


\begin{proof}
The error induced by substituting $\Cov'$ for $\Delta\Cov'$ can be bounded with
\begingroup
\allowdisplaybreaks
\begin{align}
    e_{\mathrm{val}}^{\Cov} &= \abs{ \bE_{\tilde{\bm{s}} \sim \cN(0,C+\Delta C)} \left[ \sum_{c,e=1}^T \bm{a}(\bm{s}_1)_c \Delta\Cov'_{ce} \bm{a}(\bm{s}_2)_e \right] } \nonumber \\ 
    &\le \norm{\Delta\Cov'}_{\max} \bE_{\tilde{\bm{s}} \sim \cN(0,C+\Delta C)} \left[ \left(\sum_{c=1}^T \bm{a}(\bm{s}_1)_c\right) \left(\sum_{e=1}^T \bm{a}(\bm{s}_2)_e\right) \right] \nonumber\\
    &\le \norm{\Delta\Cov'}_{\max}.    
\end{align}
\endgroup
The 2nd inequality results from $\Softmax$ outputs summing to $1$. The valuation error, thus, absorbs the $T^2$ summands and is exactly $1$-Lipschitz and independent of $T$.
\end{proof}

\begin{lemma}[Bounding $e_{\mathrm{measure}}^{\Cov}$] 
\label{lem:e_measure_bound}
$e_{\mathrm{measure}}^{\Cov} \le \frac{1}{2} \norm{\Delta C}_{\max} \sup_{\tilde{\bm{s}} \in \bR^{2T}} \norm{\nabla^2q(\tilde{\bm s})}_{1,1}$.
\end{lemma}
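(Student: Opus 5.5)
The plan is to compare the two Gaussian expectations by moving continuously from $\cN(0,C)$ to $\cN(0,C+\Delta C)$ and using the Gaussian heat-equation identity (Price's theorem). Set $q(\tilde{\bm s}) = q_1(\tilde{\bm s}) = \sum_{c,e} \bm a(\bm s_1)_c \Cov'_{ce}\bm a(\bm s_2)_e$, and define the interpolating covariance $C_t = C + t\,\Delta C$ for $t\in[0,1]$. Both endpoints are covariances of genuine score laws: $C$ is the covariance of the true score vector, and $C+\Delta C$ is the score covariance induced by the perturbed input $\Cov'+\Delta\Cov'$. Both are therefore positive semidefinite, and hence so is every convex combination $C_t$. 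Write $F(t)=\bE_{\tilde{\bm s}\sim\cN(0,C_t)}[q(\tilde{\bm s})]$, so that $e_{\mathrm{measure}}^{\Cov} = \abs{F(1)-F(0)}$.

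The key step is the identity
\begin{align}
F'(t) = \frac12 \sum_{i,j=1}^{2T} (\Delta C)_{ij}\, \bE_{\tilde{\bm s}\sim\cN(0,C_t)}\!\left[\partial_i\partial_j q(\tilde{\bm s})\right].
\end{align}
This is the statement that the Gaussian density $p_t$ satisfies $\partial_t p_t = \tfrac12\sum_{ij}(\Delta C)_{ij}\partial_i\partial_j p_t$, after integrating by parts twice. I will first prove it when $C_t$ is positive definite. I will get there by replacing $C$ with $C+\varepsilon I$ (which leaves $\Delta C$ unchanged), differentiating under the integral sign, and integrating by parts. This is legitimate because $q$ is smooth and bounded: $\abs{q}\le\norm{\Cov'}_{\max}$, since softmax outputs are probability vectors. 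Moreover, by Lemma~\ref{lem:softmax_norms} its first and second derivatives are bounded uniformly in $\tilde{\bm s}$, so the boundary terms against the Gaussian density vanish.

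Given the identity, the bound is direct. Pointwise,
\begin{align}
\abs{\sum_{ij}(\Delta C)_{ij}\partial_i\partial_j q} \le \norm{\Delta C}_{\max}\norm{\nabla^2 q(\tilde{\bm s})}_{1,1} \le \norm{\Delta C}_{\max}\sup_{\tilde{\bm s}}\norm{\nabla^2 q(\tilde{\bm s})}_{1,1}.
\end{align}
Taking expectations gives $\abs{F'(t)}\le \tfrac12\norm{\Delta C}_{\max}\sup\norm{\nabla^2 q}_{1,1}$ for every $t$. Integrating over $[0,1]$ then yields the claim, first for the $\varepsilon$-regularized problem.

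I expect the main obstacle to be the degenerate case. The covariances $C$ and $C+\Delta C$ are Kronecker products of possibly singular $\Cov'$ blocks, so they may have no density. I will handle this with the $\varepsilon I$ regularization. The bound obtained for each $\varepsilon>0$ does not depend on $\varepsilon$. As $\varepsilon\to0$, $\cN(0,C_t+\varepsilon I)\to\cN(0,C_t)$ weakly, and since $q$ is bounded and continuous, $F_\varepsilon(0)\to F(0)$ and $F_\varepsilon(1)\to F(1)$. The inequality therefore passes to the limit. An alternative that avoids densities is to write $F(t)=\bE[q(C_t^{1/2}\bm g)]$, or to use a Fourier or characteristic-function argument, but the regularization route is the cleanest. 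Finally, I will note that the supremum of $\norm{\nabla^2 q}_{1,1}$ is itself finite and $T$-independent. This follows from the product rule together with Lemma~\ref{lem:softmax_norms}(i)--(ii) and the fact that $\sum_c \bm a_c=1$, which is what the subsequent Lipschitz argument needs.
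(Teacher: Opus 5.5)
Your proposal is correct and follows the same route as the paper. Both use the linear interpolation $C+t\Delta C$, Price's theorem to express $\varphi'(t)$ through $\bE[\partial_i\partial_j q_1]$, the entrywise bound by $\norm{\Delta C}_{\max}\norm{\nabla^2 q_1}_{1,1}$, and a $t$-independent supremum before integrating over $[0,1]$. Your $\varepsilon I$ regularization for singular covariances, the PSD check on the interpolants, and the justification for differentiating under the integral are extra rigor the paper skips: it applies Price's theorem directly and implicitly assumes $C+\Delta C$ is a valid covariance.
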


\begin{proof}
Here, we wish to bound the shift in the expectation due to the perturbed distribution of $\tilde{\bm{s}}$,
\begin{align}
    & e_{\mathrm{measure}}^{\Cov} = \nonumber\\ & \abs{\bE_{\tilde{\bm{s}}\sim\cN(0,C+\Delta C)} \left[ \sum_{c,e=1}^T \bm{a}(\bm{s}_1)_c \Cov'_{ce} \bm{a}(\bm{s}_2)_e \right] - \bE_{\tilde{\bm{s}}\sim\cN(0,C)} \left[ \sum_{c,e=1}^T \bm{a}(\bm{s}_1)_c \Cov'_{ce} \bm{a}(\bm{s}_2)_e \right]}.
\end{align}

To bound the shift in expectation under the perturbed Gaussian measure, we employ a multivariate Gaussian interpolation technique -- a standard application of Gaussian integration by parts, often associated with Price's Theorem. We construct a continuous interpolation between the unperturbed covariance $C$ and the perturbed covariance $C+\Delta C$ parameterized by $t \in [0, 1]$:
\begin{align}
    C(t) := C + t\Delta C.
\end{align}
with $\tilde{\bm{s}}(t) \sim \cN(0,C(t))$ interpolating accordingly. 
We define the expected value at $t$ to be
\begin{align}
    \varphi(t) = \bE_{\bm{s} \sim \tilde{\bm{s}}(t)} \left[ q_1(\tilde{\bm{s}}) \right].
\end{align}
The total error introduced by the measure pathway is the absolute difference at the endpoints, $e_{\mathrm{measure}}^{\Cov} = \abs{\varphi(1) - \varphi(0)}$. By the fundamental theorem of calculus, this is bounded by
\begin{align}
    e_{\mathrm{measure}}^{\Cov} = \abs{ \int_0^1 \varphi'(t) dt } \le \int_0^1 \abs{\varphi'(t)} dt.
\end{align}
To compute $\varphi'(t)$, we utilize Price's Theorem, 
\begin{align}
    \varphi'(t) 
    &= \frac{1}{2} \sum_{(i,c),(j,e)} \Delta C_{(i,c),(j,e)} \bE_{\tilde{\bm{s}}\sim\tilde{\bm{s}}(t)} \left[ \frac{\partial^2 q_1(\tilde{\bm{s}})}{\partial {\bm s}_{i,c} \partial {\bm s}_{j,e}} \right].
\end{align}
Substituting the derivative back into the integral, 
\begin{align}
    e_{\mathrm{measure}}^{\Cov} &\le \frac{1}{2} \int_0^1 \sum_{(i,c),(j,e)} \abs{\Delta C_{(i,c),(j,e)}} \, \abs{ \bE_{\tilde{\bm{s}}\sim\tilde{\bm{s}}(t)} \left[ \frac{\partial^2 q_1(\tilde{\bm{s}})}{\partial {\bm s}_{i,c} \partial {\bm s}_{j,e}} \right]} dt \nonumber \\
    &\le \frac{1}{2} \norm{\Delta C}_{\max} \int_0^1 \bE_{\tilde{\bm{s}}\sim\tilde{\bm{s}}(t)} \left[ \sum_{(i,c),(j,e)} \abs{  \frac{\partial^2 q_1(\tilde{\bm{s}})}{\partial {\bm s}_{i,c} \partial {\bm s}_{j,e}} } \right] dt.
\end{align}
We can now obtain a strict, distribution-free upper bound by replacing the expression inside the expectation with its supremum. Because this supremum is independent of the interpolation parameter $t$, both expectation and integral trivially evaluate to 1, giving us
\begin{align} 
    e_{\mathrm{measure}}^{\Cov} \le \frac{1}{2} \norm{\Delta C}_{\max} \sup_{\tilde{\bm{s}} \in \bR^{2T}} \left\{\sum_{(i,c),(j,e)} \abs{  \frac{\partial^2 q_1(\tilde{\bm{s}})}{\partial {\bm s}_{i,c} \partial {\bm s}_{j,e}} } \right\}.
\end{align}
Finally, we identify
\begin{align}
\label{eq:hessian_l1_norm}
    \norm{\nabla^2q_1(\tilde{\bm s})}_{1,1} = \sum_{(i,c),(j,e)} \abs{  \frac{\partial^2 q_1(\tilde{\bm{s}})}{\partial {\bm s}_{i,c} \partial {\bm s}_{j,e}} },
\end{align}
which completes our proof.
\end{proof}

Next, we bound $\norm{\nabla^2q(\tilde{\bm s})}_{1,1}$ in Lemma~\ref{lem:e_measure_bound} by showing that each summand of Eq.~\eqref{eq:hessian_l1_norm} scales as $O(1/T^2)$. To this end, we use the constraints set upon $\bm a(\bm s) \in \bR^T$ by Softmax, as established in Lemma~\ref{lem:softmax_norms}.

\begin{lemma}[$L_1$-bounded Hessian of $q_1$] \label{lem:bounded_hessian}
$\sup_{\tilde{\bm{s}} \in \bR^{2T}} \norm{\nabla^2q_1(\tilde{\bm s})}_{1,1} \le 20\norm{\Cov'}_{\max}$.
\end{lemma}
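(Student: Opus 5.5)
We will bound $\norm{\nabla^2 q_1(\tilde{\bm s})}_{1,1}$ by splitting the Hessian of
\begin{align*}
q_1(\tilde{\bm s})=\sum_{c,e}\bm a(\bm s_1)_c\Cov'_{ce}\bm a(\bm s_2)_e
\end{align*}
into its three blocks, according to whether the two derivatives fall on $\bm s_1$ or $\bm s_2$. Since $\bm a(\bm s_1)$ depends only on $\bm s_1$ and $\bm a(\bm s_2)$ only on $\bm s_2$, each block has a product structure. We bound each block entrywise in $L_1$ using Lemma~\ref{lem:softmax_norms} together with the normalization $\sum_c \bm a(\bm s)_c=1$ and $\bm a(\bm s)_c\ge 0$. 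The key point is that every sum over a free index is absorbed by a softmax weight, so no factor of $T$ survives.

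\emph{The $(1,1)$ block.} Its entries are
\begin{align*}
\partial^2_{s_{1,r}s_{1,t}}q_1=\sum_{c,e}\partial^2_{rt}\bm a(\bm s_1)_c\,\Cov'_{ce}\,\bm a(\bm s_2)_e .
\end{align*}
We take absolute values, bound $\abs{\Cov'_{ce}}\le\norm{\Cov'}_{\max}$, and sum over $r,t$. Lemma~\ref{lem:softmax_norms}(ii) then gives at most $\norm{\Cov'}_{\max}\sum_{c,e}6\,\bm a(\bm s_1)_c\,\bm a(\bm s_2)_e=6\norm{\Cov'}_{\max}$. The $(2,2)$ block is symmetric to this one and contributes another $6\norm{\Cov'}_{\max}$.

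\emph{The mixed blocks.} Their entries are
\begin{align*}
\partial_{s_{1,r}}\partial_{s_{2,t}}q_1=\sum_{c,e}\partial_r\bm a(\bm s_1)_c\,\Cov'_{ce}\,\partial_t\bm a(\bm s_2)_e .
\end{align*}
Summing absolute values over $r,t$ and applying Lemma~\ref{lem:softmax_norms}(i) twice bounds each mixed block by $\norm{\Cov'}_{\max}\sum_{c,e}2\bm a(\bm s_1)_c\cdot 2\bm a(\bm s_2)_e=4\norm{\Cov'}_{\max}$. The Hessian contains both the $(1,2)$ and the $(2,1)$ block, so together they contribute $8\norm{\Cov'}_{\max}$.

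Adding the four blocks gives $6+6+8=20$ times $\norm{\Cov'}_{\max}$. This holds uniformly in $\tilde{\bm s}\in\bR^{2T}$, so the same bound holds for the supremum.

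The only delicate step is the order of summation. We must sum over the derivative indices $r,t$ first, for each fixed pair $(c,e)$, so that the bounds of Lemma~\ref{lem:softmax_norms}(i)--(ii), which are proportional to $\bm a(\bm s)_c$, apply. Only afterwards do we sum over $c,e$ using normalization. Bounding the $\Cov'$ contraction before this step, for instance through operator norms, would introduce spurious factors of $T$.
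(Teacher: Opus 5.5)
Your proof is correct and matches the paper's argument: the paper also splits $\norm{\nabla^2 q_1}_{1,1}$ into two diagonal blocks, each bounded by $6\norm{\Cov'}_{\max}$ via Lemma~\ref{lem:softmax_norms}(ii), plus twice the mixed block, bounded by $4\norm{\Cov'}_{\max}$ via Lemma~\ref{lem:softmax_norms}(i), with row-stochasticity absorbing the sums over $c,e$. Your mixed-block expression with $\partial_t \bm a(\bm s_2)_e$ is also the correct form of the paper's display, which mistakenly writes $\partial \bm a(\bm s_1)_h/\partial \bm s_{2,e}$ there.
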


\begin{proof}
By definition,
\begin{align}
\label{eq:hessian_expansion}
    \norm{\nabla^2 q_1(\tilde{\bm s})}_{1,1} = \sum_{i \in \{1,2\}} \sum_{c,e=1}^T \abs{\frac{\partial^2 q_1}{\partial {\bm s}_{i,c} \partial {\bm s}_{i,e}}} + 2 \sum_{c,e=1}^T \abs{\frac{\partial^2 q_1}{\partial {\bm s}_{1,c} \partial {\bm s}_{2,e}}}. 
\end{align}
For the first summand on the right-hand side, with $i=1$,
\begingroup
\allowdisplaybreaks
\begin{align}
    \sum_{c,e=1}^T \abs{\frac{\partial^2 q_1}{\partial {\bm s}_{1,c} \partial {\bm s}_{1,e}}} & =
    \sum_{c,e=1}^T \abs{ \sum_{g,h=1}^T \frac{\partial^2 \bm{a}(\bm s_1)_g}{\partial (\bm s_1)_c \partial (\bm s_1)_e} \Cov'_{gh} \bm{a}(\bm s_2)_h} \nonumber \\
    & \le \sum_{g,h=1}^T \left(\sum_{c,e=1}^T \abs{\frac{\partial^2 \bm{a}(\bm s_1)_g}{\partial (\bm s_1)_c \partial (\bm s_1)_e}} \right) \abs{\Cov'_{gh} \bm{a}(\bm s_2)_h}  \nonumber \\    
    & = \sum_{g,h=1}^T \norm{\nabla^2 \bm{a}(\bm s_1)_g}_{1,1} \abs{\Cov'_{gh} \bm{a}(\bm s_2)_h} \nonumber \\
    & \le 6 \norm{\Cov'}_{\max} \left(\sum_{g=1}^T \bm{a}(\bm s_1)_g\right) \left(\sum_{h=1}^T \bm{a}(\bm s_2)_h\right) \nonumber \\
    & \le 6 \norm{\Cov'}_{\max},
\end{align}
\endgroup
applying Lemma~\ref{lem:softmax_norms}(ii) and the row-stochasticity of the attention matrix in the last two inequalities above. 
The same applies, of course, to the case $i=2$. For the second summand on the right-hand side of Eq.~\eqref{eq:hessian_expansion}, we have
\begingroup
\allowdisplaybreaks
\begin{align}
    \sum_{c,e=1}^T \abs{\frac{\partial^2 q_1}{\partial {\bm s}_{1,c} \partial {\bm s}_{2,e}}} & = \sum_{c,e=1}^T \abs{\sum_{g,h=1}^T  \frac{\partial \bm{a}(\bm s_1)_g}{\partial {\bm s}_{1,c}} \Cov'_{gh} \frac{\partial \bm{a}(\bm s_1)_h}{\partial {\bm s}_{2,e}}} \nonumber \\
    &\le \sum_{g,h=1}^T \left(\sum_{c=1}^T \abs{\frac{\partial \bm{a}(\bm s_1)_g}{\partial {\bm s}_{1,c}}}\right) \abs{\Cov'_{gh}} \left(\sum_{e=1}^T \abs{\frac{\partial \bm{a}(\bm s_2)_h}{\partial {\bm s}_{2,e}}}\right) \\
    &\le \norm{\Cov'}_{\max} \sum_{g,h=1}^T \norm{\nabla \bm{a}(\bm s_1)_g}_1 \norm{\nabla \bm{a}(\bm s_2)_h}_1 \nonumber \\
    &\le 4\norm{\Cov'}_{\max} \left(\sum_{g=1}^T \bm{a}(\bm s_1)_g\right) \left(\sum_{h=1}^T \bm{a}(\bm s_2)_h\right) \nonumber \\
    & \le 4\norm{\Cov'}_{\max}.
\end{align}
\endgroup
applying Lemma~\ref{lem:softmax_norms}(i) and the row-stochasticity of the attention matrix again. Substituting these inequalities into Eq.~\eqref{eq:hessian_expansion}, we get
\begin{align}
    \sup_{\tilde{\bm{s}} \in \bR^{2T}} \norm{\nabla^2q_1(\tilde{\bm s})}_{1,1} \le 20\norm{\Cov'}_{\max},
\end{align}
which completes the proof.
\end{proof}

\subsubsection{NTK}

We move on to discussing the discrepancy in the NTK kernel evaluation. By the recursion relation in Eq.~\eqref{eq:ntk_recursion}, we need only consider the trace terms. Let us denote
\begin{align}
    F = \bE_{\tilde{\bm s}\sim\cN(0,C)}[q_2(\tilde{\bm s})], \quad q_2(\tilde{\bm s}) = \mathrm{Tr}\left(
            \left(J_{\bm s_1}\bm a(\bm s_1)\right)^\top
            \Cov'
            J_{\bm s_2}\bm a(\bm s_2)
            (\Cov')^\top
        \right), \label{eq:trace_term_ntk}\\
    G = \bE_{\tilde{\bm s}\sim\cN(0,C)}[q_3(\tilde{\bm s})], \quad q_3(\tilde{\bm s}) = \mathrm{Tr}\left(
            \left(J_{\bm s_1}\bm a(\bm s_1)\right)^\top
            \Cov'
            J_{\bm s_2}\bm a(\bm s_2)
            (\Theta')^\top \label{eq:trace_term_ntk_2nd}
        \right).
\end{align}
We will focus on the discrepancy in $F$ and the discrepancy in $G$ will follow in similar fashion. The former is given by
\begin{align}
\label{eq:trace_discrepancy_ntk}
    \abs{\Delta F}
    & = \abs{\bE_{\tilde{\bm s}\sim\cN(0,C+\Delta C)}
    \left[
        \mathrm{Tr}\left(
            \left(J_{\bm s_1}\bm a(\bm s_1)\right)^\top
            (\Cov' + \Delta\Cov')
            J_{\bm s_2}\bm a(\bm s_2)
            (\Cov' + \Delta\Cov')^\top
        \right)
    \right]
    - F} \nonumber\\
    & \le e_{\mathrm{val}}^{F}+e_{\mathrm{measure}}^{F}
\end{align}
with value and measure errors given by
\begin{multline}
\label{eq:e_val_ntk}
    e_{\mathrm{val}}^{F}
    =
    \left|
    \bE_{\tilde{\bm s}\sim\cN(0,C+\Delta C)}
    \left[
        \mathrm{Tr}\left(
            \left(J_{\bm s_1}\bm a(\bm s_1)\right)^\top
            (\Cov' + \Delta\Cov')
            J_{\bm s_2}\bm a(\bm s_2)
            (\Cov' + \Delta\Cov')^\top
        \right)
        - \right.\right.\\
    \left.\left.
        \mathrm{Tr}\left(
            \left(J_{\bm s_1}\bm a(\bm s_1)\right)^\top
            \Cov'
            J_{\bm s_2}\bm a(\bm s_2)
            (\Cov')^\top
        \right)
    \right]
    \right|
\end{multline}
and
\begin{align}
\label{eq:e_measure_ntk}
    e_{\mathrm{measure}}^{F}
    =
    \left|
    \bE_{\tilde{\bm s}\sim\cN(0,C+\Delta C)}
    \left[
        \mathrm{Tr}\left(
            \left(J_{\bm s_1}\bm a(\bm s_1)\right)^\top
            \Cov'
            J_{\bm s_2}\bm a(\bm s_2)
            (\Cov')^\top
        \right)
    \right]
    -
    F
    \right|
\end{align}

\begin{lemma}[Bounding $e_{\mathrm{val}}^{F}$]
\label{lem:e_val_bound_ntk}
    $e_{\mathrm{val}}^{F} \le 8\norm{\Delta\Cov'}_{\max}\norm{\Cov'}_{\max} + 4\norm{\Delta\Cov'}_{\max}^2$.
\end{lemma}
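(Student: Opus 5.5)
We expand the difference inside the expectation in Eq.~\eqref{eq:e_val_ntk} by bilinearity of the trace in its two matrix slots. Writing $J_i \equiv J_{\bm s_i}\bm a(\bm s_i)$, we have
\begin{align*}
    \mathrm{Tr}\left(J_1^\top (\Cov'+\Delta\Cov') J_2 (\Cov'+\Delta\Cov')^\top\right) - \mathrm{Tr}\left(J_1^\top \Cov' J_2 (\Cov')^\top\right)
    &= \mathrm{Tr}\left(J_1^\top \Delta\Cov' J_2 (\Cov')^\top\right) + \mathrm{Tr}\left(J_1^\top \Cov' J_2 (\Delta\Cov')^\top\right) \\
    &\quad + \mathrm{Tr}\left(J_1^\top \Delta\Cov' J_2 (\Delta\Cov')^\top\right).
\end{align*}
This gives three terms, each of the generic form $\mathrm{Tr}(J_1^\top M J_2 N^\top)$ with $(M,N)$ equal to $(\Delta\Cov',\Cov')$, $(\Cov',\Delta\Cov')$ and $(\Delta\Cov',\Delta\Cov')$, respectively. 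The expectation is then pulled outside the absolute value by the triangle inequality (Jensen), so it suffices to bound each trace pointwise in $\tilde{\bm s}$, uniformly over $\bR^{2T}$. The perturbed measure $\cN(0,C+\Delta C)$ then plays no role.

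For the pointwise bound, we write out the trace entrywise,
\[
    \mathrm{Tr}(J_1^\top M J_2 N^\top)=\sum_{c_1,c_2,e_1,e_2}(J_1)_{c_1e_1}M_{c_1c_2}(J_2)_{c_2e_2}N_{e_1e_2},
\]
and bound it by $\norm{M}_{\max}\norm{N}_{\max}\norm{J_1}_{1,1}\norm{J_2}_{1,1}$. By Lemma~\ref{lem:softmax_norms}(iv), each $\norm{J_i}_{1,1}\le 2$, so the product is at most $4\norm{M}_{\max}\norm{N}_{\max}$. This is exactly the estimate of Eq.~\eqref{eq:trace_bound}, which we reuse verbatim. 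It is $T$-independent because the $L_1$ mass of the softmax Jacobian is bounded regardless of $T$.

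Summing the three contributions gives
\[
    4\norm{\Delta\Cov'}_{\max}\norm{\Cov'}_{\max}+4\norm{\Cov'}_{\max}\norm{\Delta\Cov'}_{\max}+4\norm{\Delta\Cov'}_{\max}^2,
\]
which equals the claimed $8\norm{\Delta\Cov'}_{\max}\norm{\Cov'}_{\max}+4\norm{\Delta\Cov'}_{\max}^2$. There is no real obstacle here. The only point needing care is to use the full $(1,1)$-norm bound (iv) on the Jacobians, not the row-wise bound (i), so that the quadruple sum over $T^4$ indices does not pick up any power of $T$. The same argument, with $\Theta'$ in place of the second $\Cov'$ factor, will handle $e_{\mathrm{val}}^{G}$.
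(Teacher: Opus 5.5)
Your proposal is correct and follows the paper's proof. The paper likewise splits $e_{\mathrm{val}}^{F}$ into the three bilinear terms $e_1,e_2,e_3$ and bounds each pointwise by repeating the entrywise trace estimate of Eq.~\eqref{eq:trace_bound} with Lemma~\ref{lem:softmax_norms}(iv); this gives $4\norm{M}_{\max}\norm{N}_{\max}$ per term and the stated total $8\norm{\Delta\Cov'}_{\max}\norm{\Cov'}_{\max}+4\norm{\Delta\Cov'}_{\max}^2$.
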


\begin{proof}
We begin by identifying three terms in Eq.~\eqref{eq:e_val_ntk}.
\begin{align}
    e_{\mathrm{val}}^\Theta \le e_1 + e_2 + e_3,
\end{align}
with
\begin{align}
    e_1 &= \abs{\bE_{\tilde{\bm s}\sim\cN(0,C+\Delta C)}\left[\mathrm{Tr}\left(\left(J_{\bm s_1}\bm a(\bm s_1)\right)^\top \Delta\Cov' J_{\bm s_2}\bm a(\bm s_2)(\Cov')^\top\right)\right]},\\
    e_2 &= \abs{\bE_{\tilde{\bm s}\sim\cN(0,C+\Delta C)}\left[\mathrm{Tr}\left(\left(J_{\bm s_1}\bm a(\bm s_1)\right)^\top \Cov' J_{\bm s_2}\bm a(\bm s_2)(\Delta\Cov')^\top\right)\right]},\\    
    e_3 &= \abs{\bE_{\tilde{\bm s}\sim\cN(0,C+\Delta C)}\left[\mathrm{Tr}\left(\left(J_{\bm s_1}\bm a(\bm s_1)\right)^\top \Delta\Cov' J_{\bm s_2}\bm a(\bm s_2)(\Delta\Cov')^\top\right)\right]}.    
\end{align}
Repeating the calculation of Eq.~\eqref{eq:trace_bound}, for each term, we get 
\begin{align}
    e_1+e_2+e_3 \le 8\norm{\Delta\Cov'}_{\max}\norm{\Cov'}_{\max} + 4\norm{\Delta\Cov'}_{\max}^2,
\end{align}
as stated, utilizing Lemma~\ref{lem:softmax_norms}(iv).
\end{proof}

\begin{lemma}[Bounding $e_{\mathrm{measure}}^{F}$]
\label{lem:e_measure_bound_ntk}
    $e_{\mathrm{measure}}^{\Theta} \le \frac{1}{2}\norm{\Delta C}_{\max}\sup_{\tilde{\bm s}}\norm{\nabla_{\tilde{\bm s}}^2 q_2(\tilde{\bm s})}_{1,1}$.
\end{lemma}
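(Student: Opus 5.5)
The argument is the Gaussian interpolation (Price's theorem) from Lemma~\ref{lem:e_measure_bound}, applied to $q_2$ in place of $q_1$. Here $e_{\mathrm{measure}}^{F}$ (written $e_{\mathrm{measure}}^{\Theta}$ in the statement) is the shift in $\bE[q_2(\tilde{\bm s})]$ when the score covariance moves from $C$ to $C+\Delta C$; the value matrices $\Cov'$ stay fixed inside $q_2$.

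\textbf{Setup.} Set $C(t)=C+t\Delta C$ for $t\in[0,1]$, let $\tilde{\bm s}(t)\sim\cN(0,C(t))$, and define $\varphi(t)=\bE_{\tilde{\bm s}\sim\tilde{\bm s}(t)}[q_2(\tilde{\bm s})]$. Then $e^{F}_{\mathrm{measure}}=\abs{\varphi(1)-\varphi(0)}\le\int_0^1\abs{\varphi'(t)}\,dt$. I assume $C+\Delta C\succeq 0$, which holds for small perturbations; otherwise one perturbs slightly and takes a limit.

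\textbf{Regularity.} Before applying Price's theorem I check that $q_2$ is $C^2$ with derivatives of controlled growth. $q_2$ is a polynomial in the softmax outputs $\bm a(\bm s_1),\bm a(\bm s_2)$. This follows from $J_{\bm s}\bm a=\diag(\bm a)-\bm a\bm a^\top$, which makes $q_2$ a finite sum of terms of the form $(\Cov')_{\cdot\cdot}(\Cov')_{\cdot\cdot}$ times products of softmax entries. Softmax is smooth, and all of its derivatives are bounded via Lemma~\ref{lem:softmax_norms}. So $q_2$ and its first two derivatives are bounded on $\bR^{2T}$, and the integrability conditions of Gaussian integration by parts hold trivially.

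\textbf{Differentiation and bound.} Price's theorem gives
\begin{align*}
\varphi'(t)=\frac12\sum_{(i,c),(j,e)}\Delta C_{(i,c),(j,e)}\,\bE_{\tilde{\bm s}(t)}\!\left[\partial_{(i,c)}\partial_{(j,e)}q_2(\tilde{\bm s})\right].
\end{align*}
Pull out $\norm{\Delta C}_{\max}$, move the absolute value inside the sum and the expectation, and bound the resulting $L_{1,1}$ norm of the Hessian by its supremum over $\tilde{\bm s}\in\bR^{2T}$. That supremum does not depend on $t$, so the expectation and the $t$-integral each contribute a factor $1$. This yields the stated inequality.

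\textbf{Main obstacle.} The statement itself is mechanical. The real difficulty is making sure the supremum on the right is finite and $T$-independent; otherwise the lemma is vacuous. That is deferred to a companion Hessian bound in the spirit of Lemma~\ref{lem:bounded_hessian}. There I would expand $\partial^2 q_2$ by the product rule into terms involving second derivatives of $J_{\bm s}\bm a$, which are third derivatives of softmax, and products of first derivatives of $J_{\bm s}\bm a$. I would then use Lemma~\ref{lem:softmax_norms}(ii)--(iii) together with $\sum_c\bm a_c=1$ to absorb all $T$-sums, giving a constant times $\norm{\Cov'}_{\max}^2$.
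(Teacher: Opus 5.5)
Your proposal is correct and matches the paper, whose entire proof is the instruction to rerun the Gaussian interpolation/Price's theorem argument of Lemma~\ref{lem:e_measure_bound} with $q_2$ in place of $q_1$. Your added regularity check is left implicit in the paper: $q_2$ is a polynomial in softmax entries with bounded derivatives up to the order needed. One small correction: $C+\Delta C\succeq 0$ is already presupposed by the definition of $e_{\mathrm{measure}}^{F}$, so no perturb-and-limit argument is needed, and the interpolating path $C(t)$ stays positive semidefinite by convexity.
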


\begin{proof}
The result follows from the exact same Gaussian interpolation argument as in Lemma~\ref{lem:e_measure_bound}, replacing $q_1$ with $q_2$.
\end{proof}

\begin{lemma}[$L_1$-bounded Hessian of $q_2$] \label{lem:bounded_hessian_ntk}
$\sup_{\tilde{\bm s}\in\bR^{2T}}\norm{\nabla^2 q_2(\tilde{\bm s})}_{1,1} \le 176\norm{\Cov'}_{\max}^2$.
\end{lemma}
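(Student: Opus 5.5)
The plan is to expand $q_2$ in coordinates and repeat the block decomposition used for $q_1$ in Lemma~\ref{lem:bounded_hessian}, with every extra derivative absorbed by the softmax bounds of Lemma~\ref{lem:softmax_norms}. Write $J^{(i)} = J_{\bm s_i}\bm a(\bm s_i)$, with entries $J^{(i)}_{ce} = \partial_e \bm a(\bm s_i)_c$. Then
\begin{align}
    q_2(\tilde{\bm s}) = \sum_{c_1,c_2,e_1,e_2=1}^T \Cov'_{c_1c_2}\,\Cov'_{e_1e_2}\,J^{(1)}_{c_1e_1}\,J^{(2)}_{c_2e_2}.
\end{align}
This expression is bilinear in $(J^{(1)},J^{(2)})$, and $J^{(i)}$ depends only on $\bm s_i$. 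As in Eq.~\eqref{eq:hessian_expansion}, I split $\norm{\nabla^2 q_2}_{1,1}$ into three parts: the two diagonal blocks ($\bm s_1\bm s_1$ and $\bm s_2\bm s_2$) and twice the off-diagonal block ($\bm s_1\bm s_2$).

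\textbf{Diagonal block $(1,1)$.} Differentiating twice in $\bm s_1$ hits only $J^{(1)}$. The relevant derivative is $\partial_r\partial_b J^{(1)}_{c_1e_1} = \partial^3_{e_1br}\bm a(\bm s_1)_{c_1}$, a third derivative of softmax. Pull $\norm{\Cov'}_{\max}^2$ out of the sum and use the triangle inequality. For each fixed $c_1$, the sum over $e_1,b,r$ is $\norm{\nabla^3\bm a(\bm s_1)_{c_1}}_{1,1,1}\le 26\,\bm a(\bm s_1)_{c_1}$ by Lemma~\ref{lem:softmax_norms}(iii), and summing over $c_1$ gives at most $26$. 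The remaining factor $\sum_{c_2,e_2}\abs{J^{(2)}_{c_2e_2}}$ is at most $2$ by Lemma~\ref{lem:softmax_norms}(iv). This block is therefore at most $52\norm{\Cov'}_{\max}^2$, and by symmetry so is block $(2,2)$.

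\textbf{Off-diagonal block.} One derivative lands on each Jacobian. This produces $\partial_b\partial_{e_1}\bm a(\bm s_1)_{c_1}$ times $\partial_r\partial_{e_2}\bm a(\bm s_2)_{c_2}$. By Lemma~\ref{lem:softmax_norms}(ii), each factor summed over its indices is at most $\sum_c 6\,\bm a_c = 6$. The block is thus at most $36\norm{\Cov'}_{\max}^2$, and it enters with multiplicity $2$, giving $72\norm{\Cov'}_{\max}^2$.

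Adding the three parts gives $52+52+72=176$ times $\norm{\Cov'}_{\max}^2$, uniformly in $\tilde{\bm s}$, which is the stated bound.

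The main obstacle is the index bookkeeping in the diagonal blocks. One must check that the triangle inequality is applied so that $\norm{\Cov'}_{\max}^2$ is factored out before summing over $c_1,c_2,e_1,e_2$. This keeps the $T^4$ summands controlled by the probability normalizations $\sum_c\bm a_c=1$ and by the $L_1$ bound of Lemma~\ref{lem:softmax_norms}(iv), rather than by $T$. It also requires identifying the free derivative index $e_1$ of the Jacobian as one of the three indices in $\norm{\nabla^3\bm a_{c_1}}_{1,1,1}$, which is what makes the bound $T$-independent.
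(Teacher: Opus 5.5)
Your proposal is correct and matches the paper's proof essentially step for step. Both use the same entrywise expansion of $q_2$ and the same three-block split of the Hessian. Both apply Lemma~\ref{lem:softmax_norms}(iii) to the diagonal blocks ($26\cdot 2=52$ each) and Lemma~\ref{lem:softmax_norms}(ii) twice to the doubled off-diagonal block ($2\cdot 36$), for a total of $176\norm{\Cov'}_{\max}^2$. The only cosmetic difference is that you bound the leftover first-derivative factor with Lemma~\ref{lem:softmax_norms}(iv), whereas the paper sums Lemma~\ref{lem:softmax_norms}(i) over $h$; these are the same bound, since $\norm{J_{\bm s}\bm a(\bm s)}_{1,1}=\sum_h\norm{\nabla \bm a(\bm s)_h}_1$.
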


\begin{proof}
Expanding $q_2$ of Eq.~\ref{eq:trace_term_ntk} entrywise,
\begin{align}
    q_2(\tilde{\bm s}) = \sum_{g,h,r,m=1}^T \Cov'_{gh}\Cov'_{rm}\frac{\partial \bm a(\bm s_1)_g}{\partial \bm s_{1,r}}\frac{\partial \bm a(\bm s_2)_h}{\partial \bm s_{2,m}}.
\end{align}
As in Eq.~\eqref{eq:hessian_expansion},
\begin{align}
    \norm{\nabla^2 q_2(\tilde{\bm s})}_{1,1} = \sum_{i \in \{1,2\}} \sum_{c,e=1}^T \abs{\frac{\partial^2 q_2}{\partial {\bm s}_{i,c} \partial {\bm s}_{i,e}}} + 2 \sum_{c,e=1}^T \abs{\frac{\partial^2 q_2}{\partial {\bm s}_{1,c} \partial {\bm s}_{2,e}}}.
\end{align}
For the first summand, with $i=1$,
\begin{align}
    \sum_{c,e=1}^T \abs{\frac{\partial^2 q_2}{\partial {\bm s}_{1,c}\partial {\bm s}_{1,e}}}
    &\le \sum_{g,h,r,m=1}^T \abs{\Cov'_{gh}\Cov'_{rm}}\left(\sum_{c,e=1}^T\abs{\frac{\partial^3 \bm a(\bm s_1)_g}{\partial \bm s_{1,r} \partial \bm s_{1,c} \partial \bm s_{1,e}}} \right)\abs{\frac{\partial \bm a(\bm s_2)_h}{\partial \bm s_{2,m}}} \nonumber\\
    &\le \norm{\Cov'}_{\max}^2 \left(\sum_{g=1}^T \norm{\nabla^3 \bm a(\bm s_1)_g}_{1,1,1}\right)\left(\sum_{h=1}^T \norm{\nabla \bm a(\bm s_2)_h}_1\right) \nonumber\\
    &\le 52\norm{\Cov'}_{\max}^2.
\end{align}
using Lemma~\ref{lem:softmax_norms}(i) and (iii). The same bound holds for $i=2$. For the mixed summand,
\begin{align}
    \sum_{c,e=1}^T \abs{\frac{\partial^2 q_2}{\partial {\bm s}_{1,c}\partial {\bm s}_{2,e}}}
    &\le \sum_{g,h,r,m=1}^T \abs{\Cov'_{gh}\Cov'_{rm}}\left(\sum_{c=1}^T\abs{\frac{\partial^2 \bm a(\bm s_1)_g}{\partial \bm s_{1,r} \partial \bm s_{1,c}}}\right)\left(\sum_{e=1}^T\abs{\frac{\partial^2 \bm a(\bm s_2)_h}{\partial \bm s_{2,m} \partial \bm s_{2,e}}}\right) \nonumber\\
    &\le \norm{\Cov'}_{\max}^2\left(\sum_{g=1}^T\norm{\nabla^2 \bm a(\bm s_1)_g}_{1,1}\right)\left(\sum_{h=1}^T\norm{\nabla^2 \bm a(\bm s_2)_h}_{1,1}\right) \nonumber\\
    &\le 36\norm{\Cov'}_{\max}^2.
\end{align}
using Lemma~\ref{lem:softmax_norms}(ii). Substituting these inequalities into the Hessian expansion gives
\begin{align}
    \norm{\nabla^2 q_2(\tilde{\bm s})}_{1,1} \le 52\norm{\Cov'}_{\max}^2 + 52\norm{\Cov'}_{\max}^2 + 2\cdot 36\norm{\Cov'}_{\max}^2 = 176\norm{\Cov'}_{\max}^2.
\end{align}
Taking the supremum over $\tilde{\bm s}\in\bR^{2T}$ completes the proof.
\end{proof}

\subsubsection{Theorem~\ref{thm:TransformerPropagationComplexity}}

\TransformerPropagationComplexity*

\begin{proof}
Applying lemmas~\ref{lem:e_val_bound}, \ref{lem:e_measure_bound}, and \ref{lem:bounded_hessian} to Eq.~\eqref{eq:DeltaCov}, we get 
\begin{align}
    \abs{\Delta\Cov_{ab}} \le \norm{\Delta\Cov'}_{\max} + 10 \norm{\Delta C}_{\max} \norm{\Cov'}_{\max},
\end{align}
and, when we apply Lemma~\ref{lem:DeltaC}, we get 
\begin{align}
    \abs{\Delta\Cov_{ab}} \le \norm{\Delta\Cov'}_{\max} + 10 \left(2\norm{\Cov'}_{\max} \norm{\Delta\Cov'}_{\max} + \norm{\Delta\Cov'}_{\max}^2\right) \norm{\Cov'}_{\max}.
\end{align}
Thus,
\begin{align}
    \norm{\Delta\Cov}_{\max}
    \le
    \left(1+20\norm{\Cov'}_{\max}^2\right)\norm{\Delta\Cov'}_{\max}
    +
    10\norm{\Cov'}_{\max}\norm{\Delta\Cov'}_{\max}^2,
\end{align}
and propagating the covariance matrix through the attention layer is locally Lipschitz with a $T$-independent Lipschitz constant by Eq.~\eqref{eq:input_covariance_bounded}.

As for the NTK, by the recursion relation in Eq.~\eqref{eq:ntk_recursion}, the propagated term involving $A_1\Theta'A_2^\top$ is handled exactly as in the covariance case, with \(\Cov'\) replaced by \(\Theta'\) in the integrand, while the score-law perturbation still enters only through \(\Delta C\). The perturbation
\begin{align}
    \bE_{\tilde{\bm s} \sim \cN(0,C+\Delta C)} \left[\sum_{c,e=1}^T \bm{a}(\bm{s}_1)_c (\Theta'_{ce}+\Delta\Theta'_{ce}) \bm{a}(\bm{s}_2)_e\right]
\end{align}
can thus be handled $T$-independently, applying lemmas~\ref{lem:e_val_bound}, \ref{lem:e_measure_bound}, and \ref{lem:bounded_hessian}, as they only rely on the bounded max-norm of the input kernel. 

For the first trace term, $F$ (Eq.~\eqref{eq:trace_term_ntk}), applying lemmas~\ref{lem:e_val_bound_ntk}, \ref{lem:e_measure_bound_ntk}, and \ref{lem:bounded_hessian_ntk} together with Lemma~\ref{lem:DeltaC} provides us with a $T$-independent bound on the discrepancy $\abs{\Delta F}$ in Eq.~\eqref{eq:trace_discrepancy_ntk} with a Lipschitz constant solely dependent on $\norm{\Cov'}_{\max}$. The second trace term $G$ (Eq.~\eqref{eq:trace_term_ntk_2nd}) can be pursued in a similar fashion. Here, the Lipschitz constant will depend on both $\norm{\Cov'}_{\max}$ and $\norm{\Theta'}_{\max}$ and will be $T$-independent nonetheless. Now that we have bounded trace terms, we need to consider the discrepancies of their associate full expressions in Eq.~\eqref{eq:ntk_recursion}, namely,
\begin{align}
    \abs{\Delta\left((2\Cov'_{ab}+\Theta'_{ab})F\right)}
    &\le \abs{2\Delta\Cov'_{ab}+\Delta\Theta'_{ab}}\abs{F+\Delta F}
    + \abs{2\Cov'_{ab}+\Theta'_{ab}}\abs{\Delta F}\label{eq:full_F_perturbation}\\
    \abs{\Delta(\Cov'_{ab}G)}
    &\le \abs{\Delta\Cov'_{ab}}\abs{G+\Delta G}
    + \abs{\Cov'_{ab}}\abs{\Delta G}.\label{eq:full_G_perturbation}
\end{align}
Since $F$ and $G$ are uniformly bounded by the trace estimate coming from Eq.~\eqref{eq:trace_bound},
\begin{align}
    \abs{F} \le 4\norm{\Cov'}_{\max}^2, \quad \abs{G} \le 4\norm{\Cov'}_{\max}\norm{\Theta'}_{\max}.
\end{align}
Since $\Delta F$ and $\Delta G$ satisfy $T$-independent local Lipschitz estimates in $\norm{\Delta\Cov'}_{\max}$ and $\norm{\Delta\Theta'}_{\max}$, this gives a $T$-independent local Lipschitz constant in both perturbations in Eqs.~\eqref{eq:full_F_perturbation} and \eqref{eq:full_G_perturbation}.

The element-wise update rule for the covariance matrix through the MLP layer is given in Eq.~\eqref{eq:mlp_propagation}. Because there is no summation or mixing across the $T$ sequence indices, a perturbation $\norm{\Delta\Cov}_{\max}$ merely perturbs $k$ of Eq.~\eqref{eq:mlp_2x2_covariance} by at most $\norm{\Delta\Cov}_{\max}$. For standard activation functions bounded by linear growth, such as ReLU or GeLU, the dual activation integral is locally Lipschitz in the input covariance entries on the nondegenerate covariance domain considered here \citep{Schoenholz2016}. Consequently, the local error scales by a bounded and strictly architecture-dependent constant.
For the NTK update through the MLP, Eq.~\eqref{eq:mlp_ntk_propagation} takes the form
\begin{align}
    \Theta^{\mathrm{out}}_{ab}
    =
    \Cov^{\mathrm{out}}_{ab}
    +
    \Theta_{ab}\dot{\Cov}^{\mathrm{out}}_{ab},
\end{align}
where \(\dot{\Cov}^{\mathrm{out}}_{ab}\) denotes the normalized derivative dual kernel. Therefore,
\begin{align}
    \abs{\Delta\Theta^{\mathrm{out}}_{ab}}
    \le
    \abs{\Delta\Cov^{\mathrm{out}}_{ab}}
    +
    \abs{\Delta\Theta_{ab}}\abs{\dot{\Cov}^{\mathrm{out}}_{ab}+\Delta\dot{\Cov}^{\mathrm{out}}_{ab}}
    +
    \abs{\Theta_{ab}}\abs{\Delta\dot{\Cov}^{\mathrm{out}}_{ab}}.
\end{align}
The derivative dual kernel is bounded and locally Lipschitz in the $2\times 2$ covariance $k$, with $T$-independent constants. This again gives a $T$-independent local Lipschitz constant in both $\norm{\Delta\Cov'}_{\max}$ and $\norm{\Delta\Theta'}_{\max}$.

Since both the attention and MLP kernel recursions possess $T$-independent max-norm Lipschitz constants, an initial MC error will magnify by at most $(\ell_{\mathrm{Att}}\ell_{\mlp})^L$, for the Lipschitz constants $\ell_{\mathrm{Att}}$ and $\ell_{\mlp}$ of the attention and MLP layers (absorbing LayerNorm), across $L$ transformer blocks. For a  finite-depth network, this is a fixed constant factor. In particular, the number of MC samples, $N_{\mtc}$, may need to scale exponentially in $L$ in the worst case to maintain the same level of MC error, but this scaling is agnostic of $T$. 
\end{proof}

\section{Convergence to NNGP}
\label{app:perturbation_theory}

Here, we provide analytic support to Assumption~\ref{assm:discrepancy_scale}. This complements existing results \citep{huang2020dynamics,huang2021neuraltangentkerneldeep}, which assume a fixed input dimension. 
We proceed by adopting a Bayesian viewpoint and, for simplicity, focus on predicting the $T$-th token
\begin{align}
    f(X) \equiv \bm{z}^{\mathrm{out}}_T(X),\quad  X \in \bR^{T \times d}.
\end{align}
Given an input dataset $\cD = \{(X_\nu,y_\nu)\}_{\nu=1}^P$, we let $K$ denote the $T$-th output kernel matrix
\begin{align}   K_{\mu\nu}=K(X_\mu,X_\nu)\equiv\Sigma^{\scriptscriptstyle \mathrm{out}}_{TT}(X_\mu,X_\nu).
\end{align}
By \citet[Theorem 3]{ChoSaul2009}, in the infinite-width limit, $f(X) \sim \mathcal{GP}(0,K)$, which results in the NNGP kernel predictor, for a test point $X_* \in \bR^{T \times d}$, 
\begin{align}
    f_K(X_*) = \bm{k}^{*\top} \tilde{K}^{-1} \bm{y}
\end{align}
with a symmetric positive definite (PD)
\begin{align}
    \tilde{K}=K+\kappa I_P,
\end{align}
$\kappa>0$ being the ridge parameter (or observation noise), a vector of covariances
\begin{align}
    \bm{k}^* = (K(X_*,X_\nu))_{\nu \in [P]} \in \bR^P,
\end{align}
and $k^* = K(X_*,X_*) \in \bR$.

\begin{lemma}[Bounds for $\tilde{K}$]
\label{lem:tilde_K_bounds}
    Under the above setting,
    \begin{align}
        \norm{\tilde{K}^{-1}}_{\max} \leq \frac{1}{\kappa}, \qquad \norm{\tilde{K}^{-1}}_2 \leq \frac{1}{\kappa}, \qquad \norm{\tilde{K}^{-1}K\tilde{K}^{-1}}_{\max} \leq \frac{1}{4\kappa}.
    \end{align}
\end{lemma}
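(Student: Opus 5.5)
The plan is to reduce all three bounds to spectral statements about the single symmetric matrix $K$, using that it is positive semidefinite. First I would record that $K$ is PSD. It is the Gram matrix of the NNGP covariance $K(X_\mu,X_\nu)=\Cov^{\mathrm{out}}_{TT}(X_\mu,X_\nu)$, which by \citet[Theorem 3]{ChoSaul2009} is the covariance function of the limiting Gaussian process $f\sim\mathcal{GP}(0,K)$. Hence for any $\bm v\in\bR^P$ we have $\bm v^\top K\bm v=\mathrm{Var}\bigl(\sum_\nu v_\nu f(X_\nu)\bigr)\ge 0$.

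Then diagonalize $K=U\Lambda U^\top$ with $U$ orthogonal and $\Lambda=\diag(\lambda_1,\ldots,\lambda_P)$, $\lambda_i\ge 0$. Since $\tilde K=K+\kappa I_P$ shares the eigenbasis $U$, we get
\begin{align}
\tilde K^{-1}=U\diag\bigl((\lambda_i+\kappa)^{-1}\bigr)U^\top,\qquad \tilde K^{-1}K\tilde K^{-1}=U\diag\bigl(\lambda_i(\lambda_i+\kappa)^{-2}\bigr)U^\top .
\end{align}
Both matrices are symmetric PSD, so their spectral norms equal their largest eigenvalues.

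\textbf{Spectral bound.} From $\lambda_i\ge0$ we have $(\lambda_i+\kappa)^{-1}\le 1/\kappa$, which gives $\norm{\tilde K^{-1}}_2\le 1/\kappa$.

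\textbf{Max-norm bounds.} For any matrix $M$, $\abs{M_{ij}}=\abs{\bm e_i^\top M\bm e_j}\le\norm{M}_2$ by Cauchy--Schwarz, so $\norm{M}_{\max}\le\norm{M}_2$. Applying this to $\tilde K^{-1}$ yields $\norm{\tilde K^{-1}}_{\max}\le 1/\kappa$. For the third bound it suffices to show $\lambda(\lambda+\kappa)^{-2}\le 1/(4\kappa)$ for every $\lambda\ge0$. This is equivalent to $(\lambda+\kappa)^2\ge4\lambda\kappa$, i.e.\ $(\lambda-\kappa)^2\ge0$, which always holds. Equality at $\lambda=\kappa$ shows the constant is sharp at the spectral level.

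There is no real obstacle once $K$ is known to be PSD. The one point needing care is exactly that justification: the Gram matrix must be of the exact limiting kernel rather than an MC estimate, since the estimate could lose positive semidefiniteness. Strict positivity of $K$ is not needed, because $\kappa>0$ already makes $\tilde K$ positive definite and invertible.
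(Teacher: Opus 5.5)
Your proposal is correct and follows essentially the same route as the paper: bound each entry by the spectral norm via $\abs{\bm e_\mu^\top M \bm e_\nu}\le\norm{M}_2$, then read off the spectra $\{1/(\lambda+\kappa)\}$ and $\{\lambda/(\lambda+\kappa)^2\}$ from the eigenvalues of the PSD matrix $K$. You also make explicit two steps the paper leaves implicit: the positive semidefiniteness of the exact kernel Gram matrix, and the elementary inequality $(\lambda+\kappa)^2\ge 4\lambda\kappa$ behind the $1/(4\kappa)$ constant.
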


\begin{proof}
Consider the spectral norm of $\tilde{K}^{-1}$. By definition,
\begin{align}
    \|\tilde{K}^{-1}\|_2 = \sup\left\{\abs{\bm{x}_1^\top\tilde{K}^{-1}\bm{x}_2}|\bm{x}_i\in\bR^P: \norm{\bm{x}_1}=1=\norm{\bm{x}_2}\right\}.
\end{align}
Choosing $\bm{x}_1=\bm{e}_\mu$ and $\bm{x}_2=\bm e_\nu$, we get
\begin{align}
    \abs{\tilde{K}^{-1}_{\mu\nu}} \leq \norm{\tilde{K}^{-1}}_2.
\end{align}
Using the fact that ${K}$ is symmetric positive semi-definite (PSD) and $\tilde{K}$ is symmetric PD, we have 
\begin{align}
    \Spec\left(\tilde{K}^{-1}\right) = \left\{\frac{1}{\lambda+\kappa}\right|\left.\lambda\in\Spec(K)\vphantom{\frac{1}{\lambda+\kappa}}\right\},
\end{align}
so that
\begin{align}
    \abs{\tilde{K}^{-1}_{\mu\nu}} \leq \norm{\tilde{K}^{-1}}_2 = \max_{\lambda \in \Spec(K)} \Spec(\tilde{K}^{-1}) \leq \frac{1}{\kappa},
\end{align}
which establishes the two left inequalities. Similarly, we have
\begin{align}
    \Spec(\tilde{K}^{-1}K\tilde{K}^{-1}) = \left\{\frac{\lambda}{(\lambda+\kappa)^2}\right|\left.\lambda\in\Spec(K)\vphantom{\frac{\lambda}{(\lambda+\kappa)^2}}\right\}.
\end{align}
Thus,
\begin{align}
    \norm{\tilde{K}^{-1}K\tilde{K}^{-1}}_2 = \max_{\lambda \in \Spec(K)} \Spec(\tilde{K}^{-1}K\tilde{K}^{-1}) \leq \frac{1}{4\kappa},
\end{align}
which establishes the right inequality.
\end{proof}

\NNGPConvergenceBound*

\begin{proof}
\citet{NavehSompolinskyRingel} studied the discrepancy between a deep neural network (DNN), $f$, trained with full-batch gradient descent (GD), weight decay, and additive white Gaussian noise, and its NNGP kernel predictor ${f}_K$ for the same training set $\cD$. At a test point $X_*$, \citet{NavehSompolinskyRingel} have 
\begin{align}
    \bE_{f \sim P[f]}[{f}(X_*)] - f_{K}(X_*) = \frac{1}{N} f_U(X_*) + O\left(\frac{1}{N^2}\right),
\end{align}
where $P[f]$ is the posterior on the trained $f$. While this work focused on FCN and convolutional neural networks (CNNs), its results apply to transformers, provided we focus on single-token predictions and, similarly to all works that discuss kernel limits of classification tasks, replace the cross-entropy loss on the logits with an MSE-loss w.r.t one-hot encoding of the categorical labels \citep{Lee2018DNN}.  
Following this, one again has a Gaussian posterior at infinite width for any specific next-token logit and an Edgeworth expansion controlled by $1/N$ and the fourth cumulant for the leading non-Gaussian corrections. The results we cite below rely solely on this structure.\footnote{As in the original work, these results do not apply for mean-field scaling where the theory at infinite $N$ stops being Gaussian.} The leading order correction term, $f_U(X_*)$, is given in \citep[Eq.~(11)]{NavehSompolinskyRingel} by
\begin{align} 
\label{eq:FWC_main_result}
{f}_{U}(X_*) = 
\frac{1}{6} \sum_{\mu_1\mu_2\mu_3=1}^P \tilde{U}_{\mu_{1}\mu_{2}\mu_{3}}^*  \left(\tilde{y}_{\mu_{1}}\tilde{y}_{\mu_{2}}\tilde{y}_{\mu_{3}} - 3\tilde{K}_{\mu_{1}\mu_{2}}^{-1}\tilde{y}_{\mu_{3}} \right),
\end{align}
where $U$ is the $4$th-order functional cumulant of the output \citep[Eq.~(9)]{NavehSompolinskyRingel} and
\begin{align}
    \tilde{y}_{\mu_i} = \sum_{\nu=1}^P \tilde{K}_{\mu_i\nu}^{-1}y_{\nu}, \quad
\tilde{U}_{\mu_{1}\mu_{2}\mu_{3}}^* = U_{\mu_{1}\mu_{2}\mu_{3}}^{*} -  \sum_{\nu,\nu'=1}^P U_{\mu_{1}\mu_{2}\mu_{3}\nu}  \tilde{K}_{\nu\nu'}^{-1} \bm{k}^*_{\nu'}. \label{eq:U_tilde_star_def}
\end{align}
The asterisk on the right-hand side of Eq.~\eqref{eq:U_tilde_star_def} indicates evaluation, namely, $\bm{k}^*_{\nu'} = K(X_*,X_{\nu'})$ and
$U_{\mu_{1}\mu_{2}\mu_{3}}^{*}=U\left(X_{*},X_{\mu_{1}},X_{\mu_{2}},X_{\mu_{3}}\right)$ and $y_{\nu}$ is either a regression target or an indicator function for the logit taking values of $0,1$ (or with bias free one hot encoding, $-1/V,1-1/V$ where $V$ is the size of the vocabulary).  

The scale of ${f}_{U}(X_*)$ as a function of $P$ determines the scale of the discrepancy between the NNGP kernel predictor and the finite DNN. We will therefore bound the basic ingredients of  Eq.~\eqref{eq:FWC_main_result} independently of $T$. First, by definition, we have $y_{\mu}=O(1)$. We also have $\tilde{K}_{\mu\nu}^{-1}=O(1)$ by Lemma~\ref{lem:tilde_K_bounds}. Then,
\begin{align}
    \norm{\tilde{\bm y}}_2 \le \norm{\tilde{K}^{-1}}_2 \norm{\bm y}_2 \le \frac{\norm{\bm y}_2}{\kappa} = O(\sqrt{P}),
\end{align}
applying Lemma~\ref{lem:tilde_K_bounds} again. Second, the output $f(X)$ has an $O(1)$ Gaussian tail at initialization, and its variance is bounded irrespective of $T$, yielding $\bm{k}^*_{\mu} = O(1)$. Similarly, the $4$th-order cumulant between any four points, including a test point, also obeys $U_{\mu_1...\mu_4} =O(1)$ and $U^*_{\mu_1...\mu_3} = O(1)$. A crude estimate would, thus, yield $\tilde{U}_{\mu_{1}\mu_{2}\mu_{3}}^* = O(P^2)$. However, we will show  $\tilde{U}_{\mu_{1}\mu_{2}\mu_{3}}^* = O(\sqrt{P})$.

By definition, for a function $u$ in the reproducing kernel Hilbert
space (RKHS), $\cH_K$, of $K$, we have
\begin{align}
    u(X) = \langle K(X,\cdot),u\rangle_{{\cH}_K},
\end{align}
and, in particular, $\langle K(X,\cdot) , K(X',\cdot)\rangle_{{\cH}_K} = K(X,X')$. Consequently,
\begin{align}
\label{eq:rkhs_bound}
    \abs{u(X)} \leq \norm{u}_{{\cH}_K} \norm{K(X,\cdot)}_{{\cal H}_K}=\norm{u}_{{\cH}_K} \sqrt{K(X,X)}.
\end{align}
Next, taking
\begin{align}
    u_K(X) = \sum_{\nu,\nu'=1}^P U_{\mu_{1}\mu_{2}\mu_{3}\nu}  \tilde{K}_{\nu\nu'}^{-1} K(X,X_{\nu'}),
\end{align}
we note that $u_K(X_*) = U_{\mu_{1}\mu_{2}\mu_{3}}^{*} - \tilde{U}_{\mu_{1}\mu_{2}\mu_{3}}^*$. Its $\cH_K$-norm obeys
\begin{align}
    \norm{u_K}^2_{\cH_K} & = \left\langle \sum_{\mu,\mu'=1}^P U_{\mu_{1}\mu_{2}\mu_{3}\mu}  \tilde{K}_{\mu\mu'}^{-1} K(\cdot,X_{\mu'}), \sum_{\nu,\nu'=1}^P U_{\mu_{1}\mu_{2}\mu_{3}\nu}  \tilde{K}_{\nu\nu'}^{-1} K(\cdot,X_{\nu'}) \,\right\rangle_{\cH_K} \nonumber\\
    & = \sum_{\mu,\mu'=1}^P \sum_{\nu,\nu'=1}^P U_{\mu_{1}\mu_{2}\mu_{3}\mu}  \tilde{K}_{\mu\mu'}^{-1} \langle K(\cdot,X_{\mu'}), K(\cdot,X_{\nu'}) \rangle_{\cH_K} U_{\mu_{1}\mu_{2}\mu_{3}\nu}  \tilde{K}_{\nu\nu'}^{-1} \nonumber\\
    & = \sum_{\mu,\mu'=1}^P \sum_{\nu,\nu'=1}^P U_{\mu_{1}\mu_{2}\mu_{3}\mu}  \tilde{K}_{\mu\mu'}^{-1} K(X_{\mu'},X_{\nu'}) \tilde{K}_{\nu'\nu}^{-1} U_{\mu_{1}\mu_{2}\mu_{3}\nu}   \nonumber\\
    & = \sum_{\mu \nu=1}^P U_{\mu_{1}\mu_{2}\mu_{3}\mu} \left(\tilde{K}^{-1} K \tilde{K}^{-1}\right)_{\mu\nu} U_{\mu_{1}\mu_{2}\mu_{3}\nu} \\
    & \le \frac{\norm{\bm u}_2^2}{4\kappa}
\end{align}
where, in the final inequality, we applied Lemma~\ref{lem:tilde_K_bounds}, with $\bm{u} = (U(X_{\mu_1},X_{\mu_2},X_{\mu_3},X_{\nu}))_{\nu\in[P]} \in \bR^P$. 
Applying  Eq.~\eqref{eq:rkhs_bound}, we obtain
\begin{align}
    \abs{u_K(X_*)} < \frac{\norm{\bm u}_2}{\sqrt{4\kappa}} \sqrt{K(X_*,X_*)}=O(\sqrt{P}),
\end{align}
where we assumed properly normalized kernels and target functions. Consequently, we obtain $\tilde{U}_{\mu_{1}\mu_{2}\mu_{3}}^* = O(\sqrt{P})$, as stated, and, putting all bounds together,
\begin{align}
    \abs{\bE_{f \sim P[f]}[{f}(X_*)] - f_{K}(X_*)} \leq \frac{C P^{\gamma}}{N} + \frac{C'}{N^2},
\end{align}
for $\gamma = 5$.    
\end{proof}

%%%%%%%%%%%%%%%%%%%%%%%%%%%%%%%%
%%%%%%%%%%%%%%%%%%%%%%%%%%%%%%%%
%%%%%%%%%%%%%%%%%%%%%%%%%%%%%%%%

\section{The Fully-connected Network}
\label{app:FCN}

Consider a fully-connected network (FCN) acting on (a flattened) $X\in\bR^{T \times d}$. At the $l$-th layer, $l \geq 2$, the network computes 
\begin{align}
    z_i^{l}(X) &= b_i^{l} + \sum_{j=1}^{N_{l-1}} W_{ij}^{l} a_j^{l-1}(X), \quad a_j^{l-1}(X) = \phi\!\left(z_j^{l-1}(X)\right), \quad i \in [N_l], j \in [N_{l-1}],
\end{align}
with weights $\bm b^l \in \bR^{N_l}$ and $W^l\in\bR^{N_l \times N_{l-1}}$ drawn via
\begin{align}
    b_i^l \overset{\mathrm{i.i.d.}}{\sim} \cN(0,\sigma_{\bm b}^2), \qquad W_{ij}^l \overset{\mathrm{i.i.d.}}{\sim} \cN\!\left(0,\frac{\sigma_W^2}{N_{l-1}}\right).
\end{align}
By our convention $a^0(X)=\mathrm{vec}(X)\in\bR^{Td}$ so $N_0=Td$.
As $N_l\to\infty$, the preactivations converge in distribution to a Gaussian
process $z_i^{l}\sim\mathcal{GP}(0,K^l)$.
The complexity of evaluating the FCN NNGP kernel \citep{Lee2018DNN} for a sequence of length $T$ is dominated by the dot product scaling as $O(Td)$. By \citet{Lee2018DNN}, the recursion is initialized with
\begin{align}
    K^1(X,X') &=
    \bE\left[z_j^1(X)z_j^1(X')\right] =
    \sigma_{\bm b}^2 + \sigma_W^2
    \left(\frac{\mathrm{vec}(X) \cdot \mathrm{vec}(X')}{dT}\right).
\end{align}
with the recursion relation, for $l\ge 2$,
\begin{align}
    K^l(X,X') = \sigma_{\bm b}^2 + \sigma_W^2\, F_{\phi}
    \left(K^{l-1}(X,X'),K^{l-1}(X,X),K^{l-1}(X',X')
    \right),
\end{align}
given via a deterministic function $F_{\phi}$ whose
form depends only on the nonlinearity $\psi$. Crucially, the NNGP kernel recursions are element-wise on the Gram matrix, involving no mixing across sequence indices. Consequently, depth ($L$) only adds an additive $+L$ term to the inference-time computational complexity. Due to various fixed-points which appear in these recursions, the infinite-depth limit is also likely to be of finite computational complexity \citep{Schoenholz2016}.

%%%%%%%%%%%%%%%%%%%%%%%%%%%%%%%%
%%%%%%%%%%%%%%%%%%%%%%%%%%%%%%%%
%%%%%%%%%%%%%%%%%%%%%%%%%%%%%%%%

\section{Detailed Experiments}
\label{App:ExpDetails}

%%%%%%%%%%%%%%%%%%%%%%%%%%%%%%%%
%%%%%%%%%%%%%%%%%%%%%%%%%%%%%%%%
%%%%%%%%%%%%%%%%%%%%%%%%%%%%%%%%

\subsection{Combinatorial Tasks}
\label{app:combinatorial_tasks}

Here, we provide a detailed description of each combinatorial task used in our experiments (Sec.~\ref{sec:experiments}). We provide a symbolic formulation for each task following Sec.~\ref{sec:combinatorial_setting}. Note that different choices of score function $\psi_X$ may lead to the same underlying prediction task. Our choice of $\psi_X$ is intended to formalize a natural notion of correctness or optimality induced by the task instance $X$; throughout, we use task-structural choices rather than arbitrary post hoc encodings. As combinatorial optimization problems, SPP and MinCut are naturally combinatorial tasks with configuration sets being the set of feasible solutions and score function being the objective per task instance. In our experiments, we use those two combinatorial tasks in value form, predicting the value of the optimal solution.

\textbf{Induction.} We consider a
vocabulary of size $1024$ with sequences of length $T$. For each sequence, we first generate random tokens and select a `trigger' token from the vocabulary. To prevent ambiguity, we strictly enforce that the trigger appears only where intended by replacing any pre-existing random occurrences with the subsequent token in the vocabulary (modulo $1024$). We then inject the trigger token at a random index $i_K \in \{0, \dots, \lceil T/2 \rceil-1\}$ and again at the final sequence index $T-1$. As a symbolic combinatorial task $g:\cX\to\cY$ in solution form, we take $\cX$ to be all legal finite sequences in the vocabulary and $\cY$ to be the vocabulary set. For each sequence $X \in \cX$, the configuration set $\Conf(X)$ consists of all tokens of the vocabulary. The score function $\psi_X: \Conf(X) \to \bR$ can be chosen to be $\psi_X(\rho)=\bm{1}_{X_{i_K+1}\neq \rho}$. The tokens are then mapped to learned embeddings ($X \in \mathbb{R}^{T \times d}$) and augmented with rotary positional embeddings (RoPE) \citep{Su2021RoPE}. The objective is to predict the token following the final trigger. To solve this, the transformer must perform an induction head operation \citep{elhage2021mathematical}: it must locate the previous occurrence of the trigger token (at index $i_K$) and copy the token immediately following it ($\bm{x}_{i_K+1}$).

\textbf{Sorting Vocabulary.} We consider a vocabulary of integers $\mathcal{V}=\{0, \dots, V-1\}$ with vocabulary size $V \in \{100, 200, 300\}$. For each instance, we generate a sequence $u=(u_1,\dots,u_T)$ of length $T$, where elements are sampled uniformly with replacement from $\mathcal{V}$, and let $s=(s_1,\dots,s_T)$ denote a nondecreasing rearrangement of $u$. As a symbolic combinatorial task $g:\cX\to\cY$ in solution form, we let $\cX =\cY$ be the set of all finite sequences in the vocabulary. For each $u\in\cX$ the configuration set $\Conf(u)$ is the set of all permutations $\rho$ of $u$ and $\psi_u(\rho) = \sum_{a=1}^{T-1} (\rho_a-\rho_{a+1})_+$, where $(x)_+=\max\{x,0\}$. During training, we construct the full sequence under teacher forcing by concatenating the unsorted list $u$, a special separator token (SEP), and the sorted $s$. The inputs are mapped to learned embeddings and augmented with RoPE. The model is trained using a standard causal next-token prediction objective over the concatenated sequence $[u, \text{SEP}, s]$. However, we apply a mask to the loss function to ignore predictions on the unsorted prefix $u$, calculating the loss only on the generation of the separator and the sorted suffix $s$. Accuracy is evaluated at the token-level over the masked sorted suffix. At inference, this task requires the model to generate the sorted sequence autoregressively from the unsorted prefix: at each step of the generation phase, it is naturally interpreted as needing to attend back to the unsorted prefix $u$ to identify and copy the next smallest element that has not been generated yet.

\textbf{String Matching.} We consider random sequences of length $T$ generated from a vocabulary of size $26$. Let $M$ be a specific pattern of length $3$ in the vocabulary. The objective is to determine whether $M$ appears anywhere within the sequence. As a symbolic combinatorial task $g:\cX\to\cY$ in value form, we let $\cX$ be the set of all finite sequences in the vocabulary and $\cY=\{0,1\}$. For each $X\in\cX$, $\Conf(X)=\{X[m:m+2]\}_{m=0}^{T-3}$ and $\psi_{X}(\rho) = \bm{1}_{\rho\neq M}$. Then
\begin{align}
    g(X) = \min_{\rho \in \Conf(X)} \psi_X(\rho) = \begin{cases}
        0 & M \subseteq X \\
        1 & M \not \subseteq X
    \end{cases}.
\end{align}
We formulate the learning problem as binary classification, with the model trained to predict the label $1-g(X)$. The input for each task instance is constructed by concatenating the random sequence $X$, a special separator token (SEP), and the target pattern $M$. To prevent the model from learning trivial heuristics, we construct adversarial negative examples by ensuring the negative samples contain near-misses (e.g., matching $2$ characters of the pattern).

\textbf{Context-free Grammar (CFG) Recognition Problem.} A context-free grammar (CFG) \cite[Definition 2.2]{Sipser2006CFG} is a 4-tuple $G=(V,\Sigma,R,S)$, where $V$ is a finite set called the variables, $\Sigma$ is a finite set, disjoint from $V$, called the terminals, $R$ is a finite set of substitution rules, with each rule being a variable and a string of variables and terminals, and $S \in V$ the start variable. The language $L(G)$ of $G$ consists of all terminal strings derivable from $S$ via $R$. A parse tree of $w \in L(G)$ is one such legal derivation. We evaluate the model's ability to recognize strings in the language $L(G)$. 

We define a strict Chomsky Normal Form (CNF) grammar with $V=20$ variables and $\Sigma=16$ terminals. The branching rules consist of binary non-terminal productions ($A \rightarrow BC$) and terminal productions ($A \rightarrow a$). To maintain criticality and ensure the average derivation tree depth scales as $\mathcal{O}(\sqrt{T})$, the binary branching weights are normalized by the spectral radius of their transition matrix \cite{Chi1999}.% TODO: Replace with proper citation for critical branching processes / random trees

Crucially, our generation procedure allows us to sample strings of any arbitrary exact length $T$. While the grammar strictly enforces binary branching ($1 \rightarrow 2$) at every internal node, the derivation trees are not forced to be perfectly balanced. A parent node $A$ generating a total sequence of length $l$ can split asymmetrically into a left child $B$ that recursively generates a sub-tree of length $k$, and a right child $C$ that recursively generates a sub-tree of length $l-k$ (for any $1 \le k < l$). Both $B$ and $C$ will continue to strictly apply binary branching rules internally until they hit their respective target lengths at the terminal leaves. 

%To uniformly sample these derivation trees for an exact target length $T$ without rejection sampling, we dynamically compute the exact partition function $Z[A, L]$. Here, $A$ represents a specific non-terminal in the grammar, $L$ represents the desired string length, and $Z[A, L]$ computes the total statistical weight of all valid recursive derivations that eventually yield exactly $L$ terminals starting from $A$. During generation, if we are at non-terminal $A$ and need to produce length $L$, the probability of choosing a specific production rule $A \rightarrow BC$ and a specific length split $k$ is explicitly assigned to be proportional to $W(A \rightarrow BC) \times Z[B, k] \times Z[C, L-k]$, where $W$ is the rule's transition weight. This ensures we can efficiently and correctly sample complex, asymmetric trees of any arbitrary exact length $T$.

The dataset we generated was exactly balanced, consisting of $50\%$ strings in $L(G)$ (positive examples) and $50\%$ negative examples. To ensure the model learns robust parsing rules rather than surface-level statistics, negative examples were generated via an adversarial substitution procedure: we first sample a valid parse tree of length $T$, randomly select an intermediate length scale (to avoid power-law length bias), and pick an internal node $A$ at that scale. We then completely replace $A$'s sub-tree with a new valid sub-tree of the exact same length, but generated from an adversarial variable whose required parent production rule does not exist in the grammar.

%We define a strict critical grammar characterized by a spectral radius normalization of its binary branching rules, which maintains criticality and avoids trivial shallow derivations. To ensure the model learns robust parsing rules rather than surface-level statistics, we employ an adversarial substitution technique: negative examples are generated by taking valid parse trees and mutating an internal node such that the resulting string violates the grammar, yielding highly challenging near-misses.

\textbf{Random Geometric Graphs.}
We test algorithmic capture on two types of combinatorial graph problems. To that end, we generate random (undirected) graph instances following the Random Geometric Graph (RGG) setup \citep{Penrose2003}. First, we draw $T$ points $\bm{x}_a$, $a=1,\ldots,T$, i.i.d. from a uniform measure $\mu_{\bm{x}}$ on $[-1,1]^d$. We consider each ${\bm x_a}$ as a node in our sampled RGG. The nodes $\bm{x}_1$ and $\bm{x}_{T}$ take the special role of source and target, respectively. Next, we fix $r \geq 0$ and add an edge of weight (or capacity) $1$ between any two nodes $\bm{x}_a$ and $\bm{x}_b$ with $\|\bm{x}_a-\bm{x}_b\|_2 \leq r$. Unless stated otherwise, we parametrize $r$ via the expected degree $\alpha$ using \citet[Eq.~(7)]{Dall2002} and choose $\alpha$ near the regime where a giant linear-sized component emerges. We denote our joint input measure by $\mu_{X,T}:=\mu_{\bm{x}}^{\otimes T}$. From an average-time or heuristic complexity perspective \citep{Levin1986, Impagliazzo1995}, this distribution has low sampling complexity but generates relatively hard graph instances. 

\textbf{Shortest Path Problem (SPP).}
Consider a finite undirected graph $\cG = (\mathcal{V},\mathcal{E})$ with node set $\mathcal{V}$ of size $V=\abs{\mathcal{V}}$, edge set $\mathcal{E}$ of size $E=\abs{\mathcal{E}}$, and two distinguished nodes: a source $v_s$ and target $v_t$. A (node) path in $\mathcal{G}$ from source to target is a sequence $\pi = (v_0,\ldots,v_n)$ of nodes such that $\{v_{i-1},v_{i}\} \in \mathcal{E}$ for $i \in [n]$ beginning with $v_0=v_s$ and ending with $v_n=v_t$. Graph weighting constitutes a map $c: \mathcal{E} \rightarrow \bR_{\geq 0}$. In our RGG setting, weighting is constant $c: \mathcal{E} \rightarrow \{1\}$.  The SPP consists of finding an optimal path $\pi^*$ for
\begin{align}
    \pi^* \in \argmin_{\substack{\pi= (v_0,\ldots,v_n)\\ v_0=v_s, v_n=v_t}} \sum_{i=1}^n c\left(\{v_{i-1},v_i\}\right)
\end{align}
For our $\mu_{X,T}$, we consider the RGG at the critical connectivity threshold. The average-case time-complexity for source-target SPP is $\bE_{X\sim \mu_{X,T}}\left[t(X)\right] = \bE_{X\sim \mu_{X,T}}[E(X)+V(X)]=O(T\log(T))$, matching standard worst-case complexity for the single-pair SPP on an undirected unit-weight graph using Breadth-First Search (BFS) and critical connectivity expectation $\bE_{X\sim \mu_{X,T}}[E(X)]=\Theta(T\log T)$. As noted in Sec.~\ref{sec:setting}, this places the distributional SPP task in the EPTHS class of complexity $O(T^{1+\epsilon})$ under $\mu_{X,T}$ for every $\epsilon>0$.

\textbf{MinCut / MaxFlow.}
Starting from the RGG and graph setup above, we adapt it to add directions to edges, i.e., consider $\mathcal{E} \subseteq \mathcal{V} \times \mathcal{V}$. In a MinCut / MaxFlow network \citep[Proposition 3.2]{Bertsekas1998Network}, nodes, apart from the source and target, are termed relay nodes. In our directed graph adaptation, relay nodes become bi-directional (with equal capacity), all edges incident to the source have $v_s$ as their initial node, and all edges incident to the target have $v_t$ as their terminal node as in \citet[Definition 2]{Ramamoorthy2005Capacity}.\footnote{Equivalently, we may take the complete directed edge set $\mathcal{E} = \mathcal{V} \times \mathcal{V}$ (optionally excluding self-loops) and extend 
edge capacity $c$ by setting $c(u,v)=0$ for all directed pairs $(u,v)\in\mathcal{E}$ not present in the geometric graph; this leaves the MinCut/MaxFlow value unchanged.} A cut $\mathcal{S} \subset \mathcal{V}$ in the graph is a subset containing the source $v_s \in \mathcal{S}$ whose complement $\overline{\mathcal{S}}=\mathcal{V}\setminus\mathcal{S}$ contains the target $v_t \in \overline{\mathcal{S}}$. The MinCut problem consists of finding an optimal cut $\mathcal{S}^*$ for
\begin{align}
    \mathcal{S}^* \in \argmin_{\substack{\mathcal{S} \subset \mathcal{V} \\ v_s \in \mathcal{S}\\ v_t \in \overline{\mathcal{S}}}} \sum_{\substack{(v,\bar{v}) \in \mathcal{E}\\ v \in \mathcal{S}, \bar{v} \in \overline{\mathcal{S}}}} c\left(v,\bar{v}\right).
\end{align}
Average-case for MinCut/MaxFlow is ${O(\bE[VE])}$, matching standard worst-case complexity for MaxFlow \citep{Orlin2013MaxFlows}. For similar considerations as in SPP, this places MinCut/MaxFlow in the EPTHS class of complexity $O(T^{2+\epsilon})$ under $\mu_{X,T}$ for every $\epsilon>0$. 

%%%%%%%%%%%%%%%%%%%%%%%%%%%%%%%%
%%%%%%%%%%%%%%%%%%%%%%%%%%%%%%%%
%%%%%%%%%%%%%%%%%%%%%%%%%%%%%%%%

\subsection{Engineering}
\label{app:engineering}

We trained all models either on an Nvidia RTX 4080 or an Nvidia RTX 4090 GPU using a standard decoder-only transformer architecture (with the exception of CFG and SPP, which use a bidirectional encoder) equipped with GeLU activations. The estimated compute per experiment was  $\sim24$ hours.

For the four capture tasks Induction, Sorting Vocabulary, String Matching, and CFG, we utilized an \textit{incremental curriculum} for evaluating adaptation costs. Instead of training the model from scratch for each extrapolation horizon $T$, we pre-trained the model at a base size $T_0$ to a specific accuracy threshold $\delta$. We then iteratively adapted the model to increasing horizons, dynamically sampling sequences of length $T_{curr} \sim \mathcal{U}[T_{prev}, T_{max}]$, where $T_{max}$ was typically increased using a 20\% multiplicative schedule. The reported adaptation budget $P$ represents the \textit{cumulative sum} of samples required to maintain the $\delta$ accuracy threshold across all incremental steps. The training configurations were as follows:
\begin{itemize}
    \item \textbf{Induction}: Causal Transformer, layers $L=2$, $d=256$, heads $H=4$. We used Rotary Positional Embeddings (RoPE) with an increased base frequency ($\rho=500,000$) to facilitate stable extrapolation up to $T_{max}=10,000$. The pre-training base was $T_0=50$. Averaged over 5 seeds.
    \item \textbf{Sorting Vocabulary}: Causal Transformer, $L=2$, $d=128$, $H=2$, MLP dimension $1024$. We used RoPE with standard base $10,000$. The evaluated target threshold was $\delta=5\%$ token-level error. The pre-training base was $T_0=50$. Averaged over 3 seeds per vocabulary size. 
    \item \textbf{String Matching}: Causal Transformer, $L=3$, $d=64$, $H=1$, MLP dimension $256$. We used RoPE with base $100,000$. Models were evaluated at thresholds $\delta \in \{10\%, 5\%, 2\%\}$. The pre-training base was $T_0=50$. Averaged over 5 seeds.
    \item \textbf{CFG}: Bidirectional Transformer with mean-pooling, $L=12$, $d=128$, $H=4$. Models were evaluated at thresholds $\delta \in \{40\%, 20\%, 10\%\}$. The pre-training base was $T_0=10$. Averaged over 4 seeds.
\end{itemize}

For the combinatorial optimization tasks (SPP and MinCut), models were trained using the standard protocol on the whole sequence interval rather than the incremental protocol described above. The configurations were:
\begin{itemize}
    \item \textbf{Shallow SPP}: $L=4, d=128, B=1024$.
    \item \textbf{Deep SPP}: $L=4, d=128, B=1024$.
    \item \textbf{Shallow MinCut}: $L=4, d=128, B=1024$.
    \item \textbf{Deep MinCut}: $L=40, d=128, B=1024$.
\end{itemize}
The pre-training base was $T_0=10$ in either case. 
%%%%%%%%%%%%%%%%%%%%%%%%%%%%%%%%
% END
%%%%%%%%%%%%%%%%%%%%%%%%%%%%%%%%
\end{document}